\documentclass[11pt,a4paper]{article}

\usepackage[T1]{fontenc}
\usepackage[utf8]{inputenc}
\usepackage{lmodern}
\usepackage{microtype}
\usepackage[margin=1in]{geometry}
\usepackage{amsmath,amssymb,amsthm}
\usepackage{graphicx}
\usepackage{booktabs}
\usepackage[table]{xcolor}
\usepackage{threeparttable}
\usepackage{algorithm}
\usepackage{algpseudocode}
\usepackage{subcaption}
\usepackage[round,authoryear]{natbib}
\usepackage{url}
\usepackage{hyperref}
\hypersetup{
    colorlinks=true,
    linkcolor=red,
    citecolor=blue,
    urlcolor=blue
}
\usepackage{authblk}
\DeclareMathOperator{\clip}{clip}
\DeclareMathOperator*{\argmax}{arg\,max}
\DeclareMathOperator*{\argmin}{arg\,min}
\newtheorem{theorem}{Theorem}
\newtheorem{proposition}{Proposition}
\newtheorem{lemma}{Lemma}
\newtheorem{remark}{Remark}

\newcommand{\Sset}{\mathcal S}
\newcommand{\Aset}{\mathcal A}
\newcommand{\simplex}{\Delta(\Aset)}
\newcommand{\E}{\mathbb E}
\newcommand{\KL}{\operatorname{KL}}
\newcommand{\1}{\mathbf{1}}
\newcommand{\ip}[2]{\left\langle #1,#2\right\rangle}
\newcommand{\norm}[1]{\left\lVert #1\right\rVert}

\title{Towards Better Training Signal: Advantage Clipped Policy Optimization}

\author[1]{Ruichuan Huang}
\author[2]{Jinghan Liu}
\author[2]{Congliang Chen}

\affil[1]{MIT}
\affil[2]{Shenzhen Loop Area Institute}

\affil[ ]{\texttt{
ruichuan@mit.edu,
jinghanliu@slai.edu.cn,
chencongliang@slai.edu.cn
}}

\date{}

\begin{document}

\maketitle

\begin{abstract}

Reinforcement learning (RL) has become a cornerstone for improving the reasoning capabilities of large language models (LLMs), but the need for on-policy data substantially limits training efficiency. Reusing off-policy data through importance sampling (IS) can improve efficiency but introduce considerable instability. Hence, algorithms such as PPO and GRPO widely adopt IS-ratio clipping to stabilize training. 
However, training stability and gradient estimate are mainly determined by the product of IS ratio and advantage. To further stabilize training, we propose ACPO, which clips the product of the IS ratio and the advantage, leading to more stable gradient estimates. We also establish a connection between ACPO and gradient clipping in policy mirror descent (PMD), which is a standard technique to stabilize optimization process,  and prove the convergence of clipped-PMD under the standard RL setting. Experiments on widely used mathematical reasoning benchmarks show that ACPO consistently outperforms PPO and GRPO in both accuracy and training efficiency, delivering 4-6\% points gains on standard math benchmarks, with Qwen3-8B+PPO. Hence, ACPO is a practical and effective alternative to conventional IS-ratio clipping for RL post-training of LLMs.

\end{abstract}

\section{Introduction}
\label{sec:introduction}
Large language models (LLMs) now serve as a core part for modern natural language understanding and generation, driven by rapid growth in model scale and training data \citep{brown2020LLM,long2022RLHF}. However, strong performance on challenging tasks requires more than retrieval and reproducing knowledge; models must also decompose complex problems, integrate intermediate conclusions, generalize across contexts, and generate logically coherent solutions. Reinforcement learning with objectively assessable feedback has recently shown considerable potential for cultivating such abilities, particularly  with verifiable, outcome-based rewards \citep{Guo2025r1,lambert2025tulu3pushingfrontiers}. Under this framework, the training objective is to maximize the expected reward of model outputs under a given prompt distribution. The primary challenge lies in achieving efficient and stable training rather than formulating the objective. The need for on-policy data substantially limits training efficiency: on-policy trajectories require generation after each policy update, which is particularly costly for hard reasoning tasks. Reusing off-policy data through importance sampling (IS) can reduce this cost, but the resulting importance weights lead to unstable gradient estimate.

A central approach to stabilizing reinforcement learning for LLM post-training is the actor–critic framework, in which the critic estimates a baseline for the actor’s policy-gradient updates, reducing the variance of gradient estimation \citep{barto1983actor-critic}. And to address the training instability caused by the partially off-policy trajectories, PPO introduces a clipping mechanism on the IS ratio, thereby constraining excessive policy updates and improving training stability \citep{schulman2017ppo}.

Building on the clipping mechanism introduced in PPO, numerous studies have proposed alternative clipping strategies centered on the importance-sampling (IS) ratio \citep{minimax2025CISPO,yue2025vapo,yu2025dapo,ye2020dualclip,liu2026ABC}. Despite this growing body of work, our understanding of ratio clipping remains limited, even for the standard clipping rule used in PPO. However, a principled understanding of when and why this mechanism improves training stability is still lacking. Although the IS ratio measures the deviation of the current policy from the rollout policy, policy update and gradient estimate are mainly determined by the product of IS ratio and advantage, where the advantage determines both the direction and the magnitude of the update. 

Our work starts by directly looking at the gradient estimator. Then we try to control the updating magnitude by controlling the product of the importance-sampling (IS) ratio and the advantage, which determines the magnitude and direction of each sample's policy-gradient contribution.  We propose \textbf{Advantage Clipped Policy Optimization} (ACPO), which clips the product of IS ratio and advantage to $[-\alpha,\alpha]$. Unlike conventional PPO-style methods that clip only the IS ratio, ACPO directly controls the scalar coefficient of each policy-gradient contribution and can be readily integrated into both PPO and GRPO. Under explicit assumptions, we show that ACPO has no larger gradient second moment than PPO and provides a variance-reduction guarantee under an additional assumption.

Another starting point of our work is the policy mirror descent in optimization theory. Clipping is widely used over gradient in optimization, and we want ask is there any connection between gradient clipping and clipping in policy optimization. Our work connects ACPO clipping mechanism to gradient clipping in policy mirror descent and establishes convergence guarantees for the corresponding gradient-clipped PMD in standard RL setting. It is a novel connection between these clipping designs, where gradient clipping has been widely use in both theory and practical  optimizer to stabilize optimization process. Our result shows the ACPO clipping principle derived from gradient-clipped PMD also stabilizes the policy optimization process.

\subsection{Our Contribution}

\textbf{Algorithm. } We propose Advantage Clipped Policy Optimization (ACPO), which applies clipping to the product of the importance-sampling ratio and advantage. The resulting objective retains sample gradient contributions within the clipping range and suppresses those outside it. \\
\textbf{Theory.} We characterize the second moment of the ACPO gradient estimator and establish a lower second moment than PPO under mild assumptions. Under additional assumption, this comparison yields a variance-reduction guarantee. We further establish a connection to policy mirror descent with gradient clipping, deriving convergence bounds for the associated PMD algorithm in standard RL settings. Together, these results provide an optimization perspective on ACPO.\\
\textbf{Experiments.} We evaluate ACPO on four mathematical reasoning benchmarks using Qwen2.5-Math-7B and Qwen3 models(1.7B, 4B, and 8B). ACPO improves the best observed weighted accuracy in all eight matched comparisons, yielding gains of 1.6–4.0 percentage points over PPO and 0.6–1.5 percentage points over GRPO, while achieving each baseline’s best observed performance in substantially fewer training steps (speedup 6.25x over PPO and 1.7x over GRPO). Moreover, PPO-AC achieves competitive performance relative to GRPO while using only one rollout per prompt.

\section{Preliminaries}
\label{sec:preliminaries}
We first introduce some basic notation of Markov decision process(MDP), which we will use in following paper. We consider a $\gamma$-discounted MDP $\mathcal{M}=(\mathcal{S},\mathcal{A},P,r,\gamma,\rho)$, where $\mathcal{S}$ and $\mathcal{A}$ are the state and action spaces, $P(\cdot\mid s,a)$ is the transition kernel, $r(s,a)$ denotes the reward function (or $c(s,a)$ the cost function when minimizing cost), $\gamma\in[0,1)$ is the discount factor, and $\rho$ is the initial-state distribution. A stochastic policy $\pi$ specifies an action distribution $\pi(\cdot\mid s)$ at each state, and induces trajectories according to $s_0\sim\rho$, $a_t\sim\pi(\cdot\mid s_t)$, and $s_{t+1}\sim P(\cdot\mid s_t,a_t)$. The performance of $\pi$ is measured by the expected discounted return $J_{\rho}(\pi)=\mathbb{E}_{\rho,\pi}[\sum_{t=0}^{\infty}\gamma^t r(s_t,a_t)]$. We denote by $d_{\rho}^{\pi}(s)=(1-\gamma)\sum_{t=0}^{\infty}\gamma^t\Pr_{\rho,\pi}(s_t=s)$ the normalized discounted frequency of visiting state $s$, and by $\mu_{\rho}^{\pi}(s,a)=d_{\rho}^{\pi}(s)\pi(a\mid s)$ the corresponding discounted state-action visitation distribution. The state-value and action-value functions of $\pi$ are defined as $V^{\pi}(s)=\mathbb{E}_{\pi}[\sum_{t=0}^{\infty}\gamma^t r(s_t,a_t)\mid s_0=s]$ and $Q^{\pi}(s,a)=\mathbb{E}_{\pi}[\sum_{t=0}^{\infty}\gamma^t r(s_t,a_t)\mid s_0=s,a_0=a]$, respectively. Then,  the advantage function is given by the difference of $Q^\pi(s,a)$ and $V^\pi(s)$: $A^{\pi}(s,a)=Q^{\pi}(s,a)-V^{\pi}(s)$, which measures the relative benefit of choosing action $a$ instead of following the policy's average action at state $s$.

Now we can consider more general $Q$ function that
$Q^{\pi}(s,a)=\mathbb{E}_{\pi}[\sum_{t=0}^{\infty}\gamma^t [c(s_t,a_t)+h^\pi(s_t)]\mid s_0=s,a_0=a]$ in minimizing cost case. Here $h^\pi$ is a closed convex function w.r.t.\ the policy $\pi$, i.e.,
there exists  $\mu \ge 0$ satisfying
\begin{equation}
h^\pi(s)
-
\left[
h^{\pi'}(s)
+
\left\langle
(\partial h)^{\pi'}(s,\cdot),
\pi(\cdot \mid s)-\pi'(\cdot \mid s)
\right\rangle
\right]
\ge
\mu D_{\pi'}^\pi(s),
\end{equation}
where $\langle\cdot,\cdot\rangle$ is the inner product over $\mathcal A$, $(\partial h)^{\pi'}(s,\cdot)$ is a subgradient of $\pi\mapsto h^\pi(s)$ at $\pi'$, and $D_{\pi'}^\pi(s)$ is a Bregman divergence. The Bregman
divergence induced by a differentiable and strongly convex mirror map $\psi$ is defined as
$
D_{\psi}(x,y)
=
\psi(x)-\psi(y)
-\left\langle \nabla\psi(y),x-y\right\rangle 
$. Setting $h^\pi=0$ recovers the standard action-value function; setting $h^\pi(s)=\mu D_{\pi_0}^\pi(s)$ with $\mu>0$ and $D$ equals to $\KL$, then $Q^\pi$ reduces to the so-called entropy
regularized action-value function. 

\subsection{Reinforcement Learning for LLMs}
In this section, we briefly review the most popular reinforcement learning methods for large language models. We formulate the generation process of a large language model as a MDP. Let $x \sim \mathcal{D}$ denote an input prompt and
$y=(y_1,\ldots,y_T)$ denote the tokens generated by the model. At generation
step $t$, the state is defined as the prompt together with the previously
generated tokens $s_t=(x,y_{<t})$,
and the action $a_t=y_t$ corresponds to selecting the next token. The policy
$\pi_\theta(a_t\mid s_t)$ is parameterized by the policy model, and the
transition is deterministic: $s_{t+1}=(x,y_{\leq t})$.
An episode terminates when an end-of-sequence token is generated or when the maximum generation length is reached.
\paragraph{Proximal Policy Optimization (PPO).} PPO~\citep{schulman2017ppo} stabilizes policy optimization by introducing a clipped surrogate
objective. Let $\pi_{\theta_{\mathrm{old}}}$ denote
the policy of training trajectories, and define the
importance sampling ratio as
$
r_t(\theta)
=
\frac{\pi_\theta(a_t\mid s_t)}
     {\pi_{\theta_{\mathrm{old}}}(a_t\mid s_t)}.
$
The clipped PPO objective is
{ \begin{align*}
\mathcal{J}^{\mathrm{PPO}}(\theta)
=
\mathbb{E}_{(s_t,a_t)\sim\pi_{\theta_{\mathrm{old}}}}
\left[
\min\left(
r_t(\theta)\widehat{A}_t,\,
\operatorname{clip}
\bigl(r_t(\theta),1-\epsilon,1+\epsilon\bigr)
\widehat{A}_t
\right)
 \right],    
\end{align*}}

where $\epsilon$ is the clipping parameter and $\widehat{A}_t$ is an
estimate of the advantage function. In practice, the advantage is often
computed using generalized advantage estimation  \citep[GAE;][]{schulman2017ppo}: 
$
\widehat{A}_t
=
\sum_{l=0}^{T-t-1}
(\lambda)^l\delta_{t+l}$, and $
\delta_t
=
r_t+\gamma V_\phi(s_{t+1})-V_\phi(s_t)$,
where $V_\phi$ is a learned value function. In policy optimization,
PPO first introduce the clipping technique and achieve empirical success.
\paragraph{Group Relative Policy Optimization (GRPO).}
GRPO \citep{shao2024grpo} is a critic-free method that, in practical
implementations for LLMs like DeepSeek-R1\citep{Guo2025r1}, samples \(G\) responses
\(o^{(1)}, \ldots, o^{(G)}\) for each prompt \(x\) and computes advantages
by normalizing rewards within each prompt group. In the MDP notation
above, this corresponds to: $o^{(i)}
=
\left(
a_1^{(i)},
a_2^{(i)},
\ldots,
a_{\lvert o^{(i)} \rvert}^{(i)}
\right)$ and $s_t^{(i)}
=
\left(
x,
a_{<t}^{(i)}
\right)$, where $x\sim \mathcal{D}$. Then, the advantage for the \(i\)-th response \(o^{(i)}\)
 is computed as:
$
\widehat{A}^{(i)}
=
\frac{
r\left(x,o^{(i)}\right)
-
\operatorname{mean}
\left(
\left\{
r\left(x,o^{(1)}\right),
\ldots,
r\left(x,o^{(G)}\right)
\right\}
\right)
}{
\operatorname{std}
\left(
\left\{
r\left(x,o^{(1)}\right),
\ldots,
r\left(x,o^{(G)}\right)
\right\}
\right)
}$.

 This response-level advantage
\(\widehat{A}^{(i)}\) is then used to replace the step-wise advantage
function \(\widehat{A}_h(s_h,a_h)\) in the PPO objective
\(\mathcal{J}^{\mathrm{PPO}}\). The GRPO objective also includes a KL-regularization term:
{\[
\begin{aligned}
\mathcal{J}^{\mathrm{GRPO}}(\theta)
={}&
\mathbb{E}_{
x\sim\mathcal{D},\,
\{o^{(i)}\}_{i=1}^{G}
\sim
\pi_{\theta_{\mathrm{old}}}(\cdot\mid x)
}
\Bigg[
\frac{1}{G}
\sum_{i=1}^{G}
\frac{1}{\lvert o^{(i)}\rvert}
\sum_{t=1}^{\lvert o^{(i)}\rvert}
\Bigg\{
\\
&\quad
\min
\Bigg[
r_t^{(i)}(\theta)
\widehat{A}^{(i)},
\operatorname{clip}
\left(
r_t^{(i)},
1-\varepsilon,
1+\varepsilon
\right)
\widehat{A}^{(i)}\Bigg]-
\beta
D_{\mathrm{KL}}
\left(
\pi \,\Vert\, \pi_{\mathrm{ref}}
\right)
\Bigg\}
\Bigg],
\end{aligned}
\]}
where $r_t^{(i)}(\theta)=\frac{
\pi_{\theta}
\left(a_t^{(i)}\mid s_t^{(i)}\right)
}{
\pi_{\theta_{\mathrm{old}}}
\left(a_t^{(i)}\mid s_t^{(i)}\right)
}$ and GRPO uses the same IS ratio clipping design as PPO.

\subsection{Mirror Descent and Policy Mirror Descent} \label{sec: mirror}

\paragraph{Mirror Descent.}
Mirror descent (MD) is a first-order optimization method for constrained convex optimization that generalizes gradient
descent to non-Euclidean geometries \citep{beck2003mirrordescent}. Consider the constrained optimization
problem $\min_{x\in\mathcal{X}} f(x)$,
where $\mathcal{X}$ is a convex set and $f:\mathcal{X}\rightarrow\mathbb{R}$
is a differentiable convex function. Given the current iterate $x_k$ and a gradient estimate $g_k$, the mirror
descent update with step size $\eta_k>0$ is
\[
x_{k+1}
=
\arg\min_{x\in\mathcal{X}}
\left\{
\left\langle g_k,x\right\rangle
+
\frac{1}{\eta_k}
D_{\psi}(x,x_k)
\right\}.
\]

For optimization
over the probability simplex
$
\Delta(\mathcal{A})
=
\left\{
p\in\mathbb{R}^{|\mathcal{A}|}_{+}:
\sum_{a\in\mathcal{A}}p^a=1
\right\}$,
a common choice is the negative entropy
$\psi(p)=\sum_{a\in\mathcal{A}}p^a\log p^a$.
Then, associated Bregman divergence is the Kullback--Leibler divergence,
$
D_{\psi}(p,q)
=
D_{\mathrm{KL}}(p\|q)
=
\sum_{a\in\mathcal{A}}
p^a\log\frac{p^a}{q^a}$. Under this geometry, mirror descent yields a
exponentiated gradient descent update:
 \[
x_{k+1}^a
=
\frac{
x_{k}^a\exp(-\eta_k g_{k}^a)
}{
\sum_{a'\in\mathcal{A}}
x_{k}^{a'}\exp(-\eta_k g_{k}^{a'})
},
\]
where $x_{k+1}^a$ and $g_k^a$ denote the $a$-th coordinate of $x_{k+1}$ and $g_k$. This update naturally preserves the simplex constraint and is therefore
particularly suitable for optimizing probability distributions.

\paragraph{Policy Mirror Descent.}
Policy mirror descent (PMD) extends mirror descent to policy optimization in
MDP. The main objective in RL to find an optimal policy minimizing the total cost with general $Q$ function can be formulate as an optimization problem with any $\rho$ satisfying $\rho(s)>0, \forall s$ and $\sum_{s} \rho(s)=1$:
\begin{align*}
\min_\pi f(\pi)&:=\mathbb E_{s\sim\rho}[V^\pi(s)]\quad \mathrm{s.t. }~ \pi(.\mid s) \in \Delta(\mathcal{A}), \forall s \in \mathcal{S}.
\end{align*}

Then applying MD to this optimization problem gives us the update (see \cite{pmdlan}):
\begin{align} \label{alg: PMD}
    \pi_{k+1}(\cdot\mid s)
=
\argmin_{p\in\Delta(\mathcal{A})}
\left\{
\left\langle Q^{\pi_k}(s,\cdot),p+h^p(s)\right\rangle
+
\frac{1}{\eta_k}
D_{\psi}\left(p,\pi_k(\cdot\mid s)\right)
\right\},
\end{align}
where $\eta_k>0$ is the step size, $h^\pi$ is a closed convex function w.r.t $p$ and $D_{\psi}$ is the Bregman divergence
induced by a mirror map $\psi$. For convenience, we also denote $D_{\psi}(p,\pi_k(\cdot\mid s))$ as $D^{p}_{\pi_k}(s)$. The first term encourages the policy to assign
more probability to actions with high estimated values, while the divergence
term constrains the new policy to remain close to the current policy. The
action-value function can equivalently be replaced by the advantage function
$A^{\pi_k}$, since subtracting a state-dependent baseline does not change the
solution.

When the negative entropy is used as the mirror map and $h^p=0$, the update has the closed-form expression
$
\pi_{k+1}(a\mid s)
=
\frac{
\pi_k(a\mid s)
\exp\left(\eta_k A^{\pi_k}(s,a)\right)
}{
\sum_{a'\in\mathcal{A}}
\pi_k(a'\mid s)
\exp\left(\eta_k A^{\pi_k}(s,a')\right)
}$.
Thus, PMD exponentially increases the probabilities of actions with positive
advantages and decreases those with negative advantages.

\section{Advantage Clipped Policy Optimization}
TRPO \citep{schulman2015trpo}  maximizes a “surrogate” objective with the trust region method
\begin{align}
    \mathcal{J}^{\mathrm{CPI}}(\theta)=\mathbb{E}_{(s_t,a_t)\sim \pi_{\theta_{\text{old}}}}\left[\frac{\pi_\theta(a_t \mid s_t)}{\pi_{\theta_\text{old}}(a_t \mid s_t)} \widehat{A}_t\right]=\mathbb{E}_{(s_t,a_t)\sim \pi_{\theta_{\text{old}}}}\left[r_t(\theta) \widehat{A}_t\right],
\end{align}
where CPI refers to conservative policy iteration \citep{kakade2002KL}. And PPO introduces clipping to this CPI objective to constrain the update of policy during maximizing this objective. Similar to policy gradient theorem \citep[cf.][]{sutton2018reinforcement}, we have
\begin{align}
    \nabla_{\theta}J^{\text{CPI}}(\theta)=\mathbb{E}_{(s_t,a_t)\sim \pi_{\theta_{\text{old}}}}\left[ r_t(\theta)\nabla_\theta \log \pi_\theta(a_t\mid s_t)\widehat{A}_t \right].
\end{align}
A natural way to control the magnitude of policy updates is to control the gradient contribution $r_t(\theta)\nabla_\theta \log \pi_\theta(a_t\mid s_t)\widehat{A}_t$ of each token. However, explicitly computing per-token gradients $\nabla_\theta \log \pi_\theta(a_t\mid s_t)$ is computationally expensive and impossible. We therefore examine whether regulating the scalar coefficient $r_t(\theta)\widehat{A}_t$ is sufficient. 

Following this idea we design our ACPO objective:
\begin{equation} \label{eq: ACPO}
    \mathcal{J}^{\mathrm{ACPO}}(\theta)=\mathbb{E}_{(s_t,a_t)\sim \pi_{\theta_{\text{old}}}}\Big[\text{clip}\big(r_t(\theta)\widehat{A}_t,-\alpha,\alpha \big)\Big],
\end{equation}
where $\widehat{A}_t$ can be calculated by PPO-type advantage estimate or GRPO-type. We defer the design of the clipping range later. We next present Proposition~\ref{lower moment}, which establishes that, under mild assumptions, the gradient of the ACPO  ($G_{\mathrm{ACPO}}$) has a smaller second moment than that of PPO ($G_{\mathrm{PPO}}$), indicating smaller update magnitudes under gradient sense. The proof can be found in Section~\ref{sec: proof}.

\begin{proposition} \label{lower moment}
    Let us denote $\1_\mathrm{PPO}=\1_{\{\widehat{A}>0, r(\theta)\le 1+\epsilon\ \text{or } \widehat{A}<0, r(\theta)\geq 1-\epsilon\}}$, $\1_\mathrm{ACPO}=\1_{\{|r(\theta)\widehat{A}|\leq \alpha\}}$. Suppose that the two clipping rules retain samples
with equal probability,
{\footnotesize$\mathbb{E}[\1_{\mathrm{PPO}}]
=\mathbb{E}[\1_{\mathrm{ACPO}}]$ }, and that
{\footnotesize $
\mathbb{E}\!\left[
\left\|\nabla_\theta\log\pi_\theta(a\mid s)\right\|^2
\,\middle|\, r(\theta),\widehat{A}
\right]=C
$}
almost surely for some finite constant $C\geq 0$.
Assuming that the second moments below are finite, we have
{\footnotesize \[
\mathbb{E}\|G_{\mathrm{ACPO}}\|^2=\mathbb{E}\!\left[
\left\|
r(\theta)\widehat{A}\nabla_\theta\log\pi_\theta(a\mid s)
\1_{\mathrm{ACPO}}
\right\|^2
\right]
\leq
\mathbb{E}\!\left[
\left\|
r(\theta)\widehat{A}\nabla_\theta\log\pi_\theta(a\mid s)
\1_{\mathrm{PPO}}
\right\|^2
\right]=\mathbb{E}\|G_{\mathrm{PPO}}\|^2.
\] }
\end{proposition}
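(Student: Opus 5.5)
First I will strip out the score-function factor using the conditional-moment assumption. Write $X = r(\theta)\widehat{A}$. Both indicators $\1_{\mathrm{ACPO}}$ and $\1_{\mathrm{PPO}}$ are functions of $(r(\theta),\widehat{A})$ only. Conditioning on $(r(\theta),\widehat{A})$ and using the tower property gives
\[
\mathbb{E}\|G_{\mathrm{ACPO}}\|^2=\mathbb{E}\!\left[X^2\1_{\mathrm{ACPO}}\,\mathbb{E}\!\left[\|\nabla_\theta\log\pi_\theta(a\mid s)\|^2\,\middle|\,r(\theta),\widehat{A}\right]\right]=C\,\mathbb{E}\!\left[X^2\1_{\mathrm{ACPO}}\right].
\]
The same identity holds with $\1_{\mathrm{PPO}}$. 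Finiteness of the second moments justifies the manipulation. If $C=0$ both sides vanish, so it remains to show the scalar inequality $\mathbb{E}[X^2\1_{\mathrm{ACPO}}]\le \mathbb{E}[X^2\1_{\mathrm{PPO}}]$.

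The key step is a rearrangement (``bathtub'') argument using the equal-retention assumption $\mathbb{E}[\1_{\mathrm{PPO}}]=\mathbb{E}[\1_{\mathrm{ACPO}}]$. Write $f(X)=X^2$. On the event $\{\1_{\mathrm{ACPO}}=1\}$ we have $|X|\le\alpha$, so $X^2\le\alpha^2$. On the complement we have $|X|>\alpha$, so $X^2>\alpha^2$. Then compute
\[
\mathbb{E}[X^2(\1_{\mathrm{PPO}}-\1_{\mathrm{ACPO}})]=\mathbb{E}[X^2\1_{\mathrm{PPO}}(1-\1_{\mathrm{ACPO}})]-\mathbb{E}[X^2\1_{\mathrm{ACPO}}(1-\1_{\mathrm{PPO}})].
\]
In the first term $X^2>\alpha^2$, so it is at least $\alpha^2\,\mathbb{E}[\1_{\mathrm{PPO}}(1-\1_{\mathrm{ACPO}})]$. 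In the second term $X^2\le\alpha^2$, so it is at most $\alpha^2\,\mathbb{E}[\1_{\mathrm{ACPO}}(1-\1_{\mathrm{PPO}})]$. Their difference is therefore at least
\[
\alpha^2\left(\mathbb{E}[\1_{\mathrm{PPO}}]-\mathbb{E}[\1_{\mathrm{PPO}}\1_{\mathrm{ACPO}}]-\mathbb{E}[\1_{\mathrm{ACPO}}]+\mathbb{E}[\1_{\mathrm{PPO}}\1_{\mathrm{ACPO}}]\right)=0
\]
by the equal-retention hypothesis. Multiplying by $C\ge0$ gives the claim.

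Almost all the substance is in this exchange argument. Among all retention events with a fixed probability, the one that keeps the samples with the smallest $|X|$ minimises $\mathbb{E}[X^2\1]$, and ACPO's rule is exactly such a threshold rule on $|X|$. The only care needed is that the $\alpha^2$ bounds are applied on the right sets and that every expectation is finite, so the subtraction is legitimate. No property of the PPO indicator is used beyond it being a $\{0,1\}$-valued function of $(r(\theta),\widehat{A})$ with the same mean.

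The step most likely to need attention is the conditioning step. The assumption is stated conditionally on $(r(\theta),\widehat{A})$, so I will make explicit that both indicators are $\sigma(r(\theta),\widehat{A})$-measurable. This is what lets the factor $C$ be pulled out for both estimators at once.
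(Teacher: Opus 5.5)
Your proof is correct and is essentially the paper's argument. The paper uses the pointwise inequality $(|r(\theta)\widehat{A}|^2-\alpha^2)(\1_{\mathrm{PPO}}-\1_{\mathrm{ACPO}})\ge 0$ to bound $\mathbb{E}\|G_{\mathrm{PPO}}\|^2-\mathbb{E}\|G_{\mathrm{ACPO}}\|^2$ below by $\alpha^2\mathbb{E}[\|\nabla_\theta\log\pi_\theta(a\mid s)\|^2(\1_{\mathrm{PPO}}-\1_{\mathrm{ACPO}})]$, and only then conditions to get $\alpha^2 C(\mathbb{E}[\1_{\mathrm{PPO}}]-\mathbb{E}[\1_{\mathrm{ACPO}}])=0$; you factor out $C$ first and check the same pointwise inequality by splitting over the two sets where the indicators disagree, which only reorders the same steps.
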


\begin{remark}
    The proof does not rely on the specific form of the PPO
clipping rule. The same comparison holds for any binary
retention rule measurable with respect to
$(r(\theta),\widehat{A})$ that has the same retention
probability as ACPO. Furthermore, if we assume that the gradient estimator after clipping has similar(same) norm {\small $\|\mathbb{E}[r(\theta)\widehat{A}\nabla_{\theta}\log \pi_{\theta}(a\mid s)\1_\mathrm{ACPO}]\| \approx (=)  \|\mathbb{E}[r(\theta)\widehat{A}\nabla_{\theta}\log \pi_{\theta}(a\mid s)\1_\mathrm{PPO}]\|$}, for example when
both are unbiased estimators of the same target gradient, then Proposition~\ref{lower moment} implies
\[
\operatorname{Var}(G_{\mathrm{ACPO}})
\leq
\operatorname{Var}(G_{\mathrm{PPO}}), \mathrm{by} \operatorname{Var}(G)
=
\mathbb{E}\!\left[
\|G-\mathbb{E}[G]\|^2
\right]
=
\mathbb{E}\!\left[\|G\|^2\right]
-\|\mathbb{E}[G]\|^2. 
\] Hence Proposition~\ref{lower moment} with additional assumption provides a variance-reduction guarantee of ACPO.
\end{remark}

The quality and difficulty of training data play an important role in RL post-training. Prior studies have shown that prompts of intermediate difficulty provide more effective training signals than prompts that are either too easy or too difficult \citep{zhang2026speedrlfastertrainingreasoning,bae2026onlinedifficultyfilteringreasoning,gao2026promptcurriculumlearningefficient}. However recent work only focuses on prompt selection, for example, \cite{gao2026promptcurriculumlearningefficient,zhang2026speedrlfastertrainingreasoning} introduce adaptive prompt selection and curriculum learning to improve training efficiency. How to account for prompt difficulty directly in the policy optimization algorithm remains less explored. One mechanism linking prompt difficulty to policy optimization is its influence on advantage estimates. Motivated by this connection, we track the distribution of the token-level gradient coefficient $r_t(\theta)\widehat{A}_t$ across prompts of varying difficulty throughout training in Figure~\ref{fig:prompt}.

\begin{figure*}[ht]
    \centering

    \begin{minipage}[t]{0.32\textwidth}
        \centering
        \includegraphics[width=0.95\linewidth]{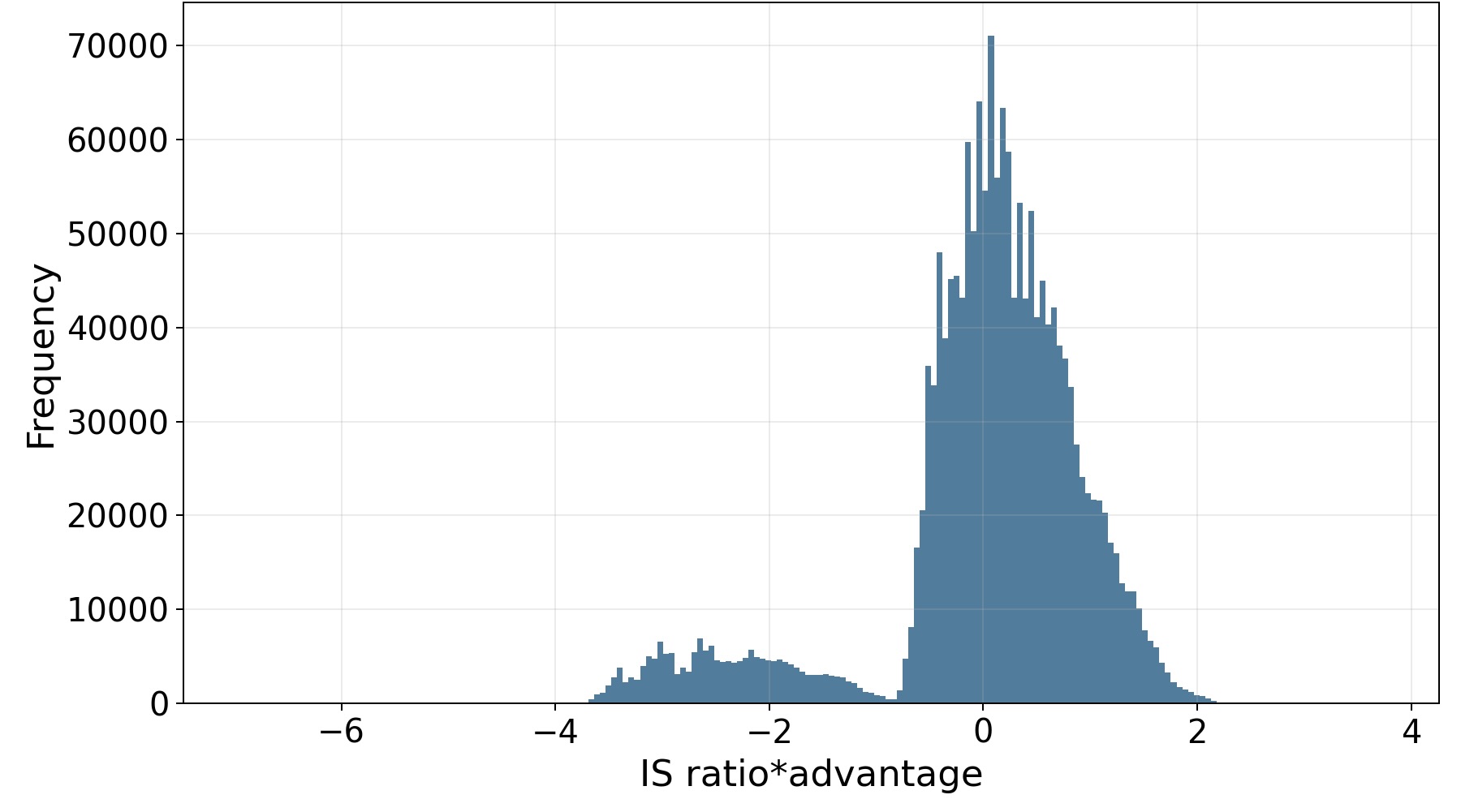}
        \centerline{\small (a) Easy prompt}
    \end{minipage}
    \hfill
    \begin{minipage}[t]{0.32\textwidth}
        \centering
        \includegraphics[width=0.95\linewidth]{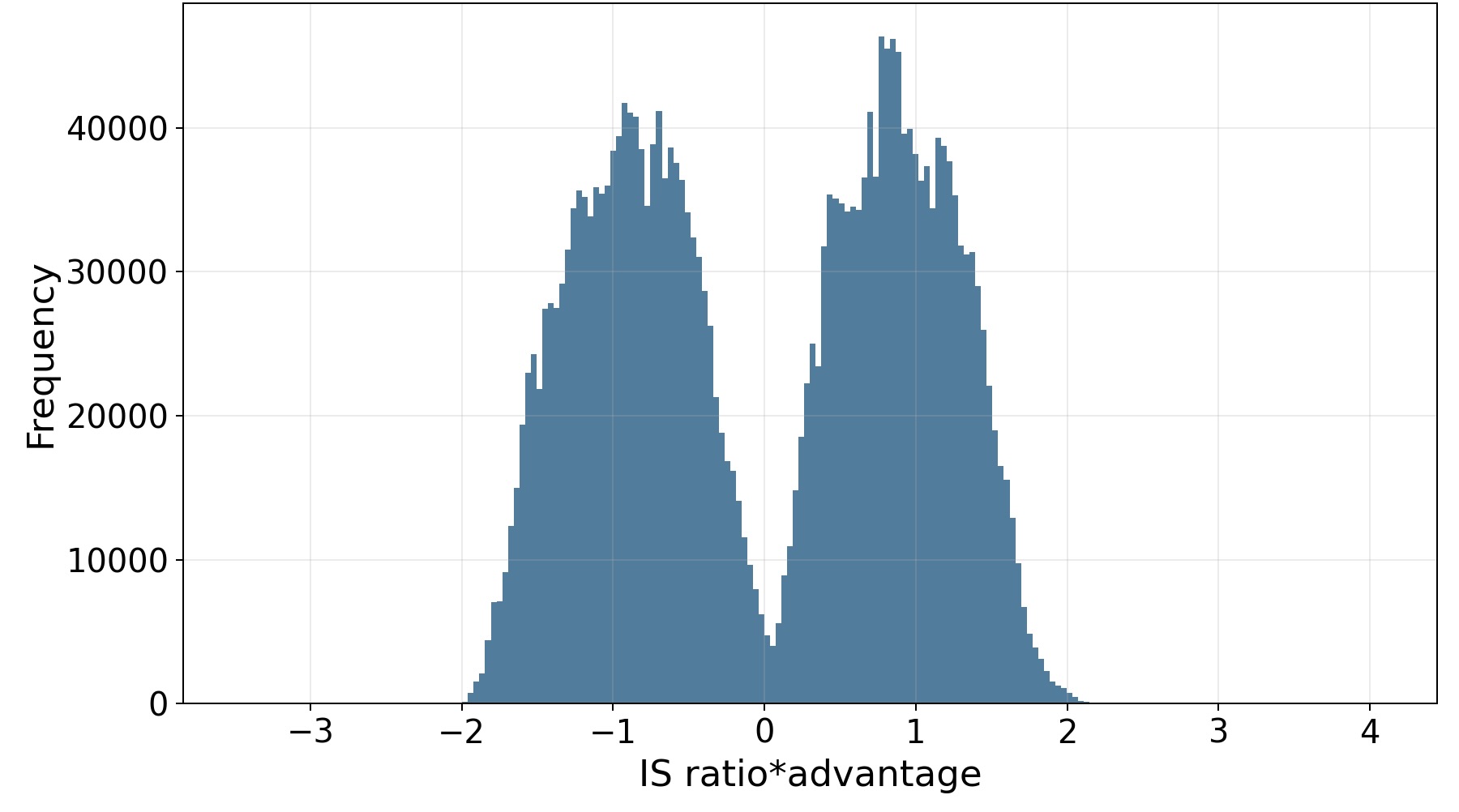}
        \centerline{\small (b) Medium prompt}
    \end{minipage}
    \hfill
    \begin{minipage}[t]{0.32\textwidth}
        \centering
        \includegraphics[width=0.95\linewidth]{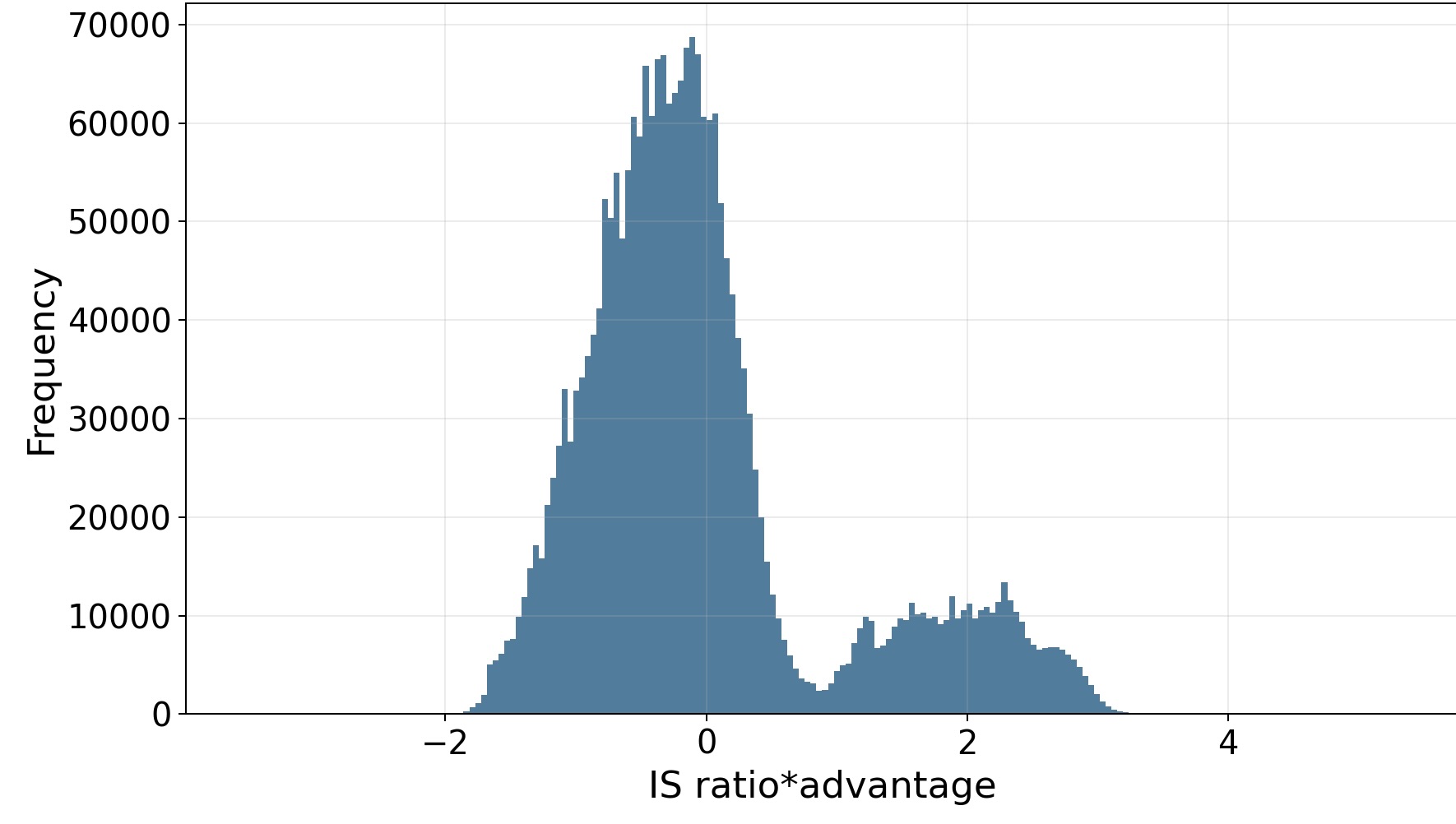}
        \centerline{\small (c) Hard prompt}
    \end{minipage}

    \caption{$r_t(\theta)\widehat{A}_t$ across prompts }
    \label{fig:prompt}
\end{figure*}
As shown in Figure~\ref{fig:prompt}, a clearer pattern emerges when comparing the tails of the token-level gradient coefficient \(r_t(\theta)\widehat{A}_t\) across difficulty levels. For hard prompts (accuracy 0–30\%), the distribution exhibits a pronounced positive tail, indicating that a subset of tokens receives relatively large positive gradient coefficients and therefore contributes disproportionately to reinforcing the current policy. In contrast, for easy prompts (accuracy 70–100\%), the distribution develops a substantially heavier negative tail, suggesting that easy prompts are more likely to produce large negative updates for a small fraction of tokens. Meanwhile, medium-difficulty prompts (accuracy 30–70\%) show a more concentrated distribution with substantially lighter tails, implying that their token-level updates are comparatively more moderate and stable. Following the principle that \textbf{prompts of intermediate difficulty provide more effective training signals}, we introduce a symmetric clipping range $[-\alpha,\alpha]$ for \(r_t(\theta)\widehat{A}_t\) to mitigate the influence of the long tail induced by overly easy or overly difficult prompts. This clipping encourages token-level updates to remain within a moderate regime, making their optimization behavior more similar to that observed for prompts of intermediate difficulty.

\section{Connection to Gradient Clipping in Policy Mirror Descent}
Clipping is widely used in RL post-training, with numerous variants proposed to stabilize policy optimization, while gradient clipping in optimization algorithms likewise plays a central role in stabilizing large-model training. These observations raise a natural question: is there a connection between these two forms of clipping? We establish this connection through policy mirror descent, showing that applying gradient clipping within this framework yields the clipping objective of ACPO. This derivation links gradient clipping in optimization to clipping in post-training. 

\begin{algorithm}
\caption{The policy mirror descent (PMD) method with gradient clipping}
\label{alg:pmd}
\begin{algorithmic}[1]
\State \textbf{Input:} initial points $\pi_0$, $\pi_0(a\mid s)=1/|\mathcal{A}|$, clipping threshold $\alpha$ and stepsizes $\eta_k \ge 0$.
\For{$k = 0,1,\ldots$}
    \State
    {\footnotesize \[
    \pi_{k+1}(\cdot \mid s)
    =
    \underset{p(\cdot \mid s)\in \Delta(\mathcal A)}{\operatorname{arg\,min}}
    \left\{
    \eta_k\frac{\alpha}{
        \max\{\alpha,\|A^{\pi_k}(s,\cdot)\|\}
    }
    \left[
    \left\langle A^{\pi_k}(s,\cdot),p(\cdot\mid s)\right\rangle
    + h^p(s)
    \right]
    +
    D_{\pi_k}^{p}(s)
    \right\}, \forall s\in\mathcal S
    \]}
\EndFor
\end{algorithmic}
\end{algorithm}

Introducing gradient clipping in (\ref{alg: PMD}) gives us Algorithm~\ref{alg:pmd}, because $A^{\pi_k}(s,\cdot)$ serves as gradient in the update. Then starting from PMD with gradient clipping and $h^p=0$, we first apply importance sampling to express the advantage term using samples from the old policy, introducing the importance ratio $r_\theta(s,a)=\frac{\pi_\theta(a\mid s)}{\pi_{\mathrm{old}}(a\mid s)}$. Then we restrict the optimization to a parameterized policy class, approximately solve the resulting subproblem using multiple stochastic gradient steps (see Section~\ref{sec:transformation} for details). Finally, we remove the explicit KL proximity penalty arising from the  mirror map, obtaining ACPO objective~\ref{eq: ACPO}. Omitting the KL penalty isolates the role of clipping and enables us to investigate whether ACPO can maintain stable training without explicit KL regularization. Moreover, removing the KL penalty is consistent with the experimental settings in recent papers \citep{yu2025dapo,xiong2025reinforceadaadaptivesamplingframework}.

This construction connects ACPO to the clipping principle underlying PMD while introducing a different clipping mechanism. During the translation of the clipping principle into a sample-based surrogate, we replace PMD's statewise advantage rescaling with direct clipping of the importance-weighted advantage $r_t(\theta)\widehat A_t$. Differentiating this surrogate preserves each sample's gradient contribution within the clipping interval and suppresses it outside the interval. 

Now, we present convergence results of Algorithm~\ref{alg:pmd} and proofs are presented in Section~\ref{sec: convergence rate}. Here, we take the Bregman Divergence induced by negative entropy that $D_{\pi}^{\pi'}=\KL (\pi' ||\pi)$. Following \cite{pmdlan}, let $\nu^*$ be the stationary state distribution induced by the optimal policy $\pi^*$, we consider the convergence criterion of the following optimization problem with specific distribution $\nu^*$:
\begin{align*}
\min_\pi f(\pi)&:=\mathbb E_{s\sim\nu^*}[V^\pi(s)]\quad \mathrm{s.t.} ~\pi(\cdot\mid s) \in \Delta(\mathcal{A}), \forall s \in \mathcal{S},
\end{align*} 
then we define $F_k=f(\pi_k)-f(\pi^*)=E_{s\sim \nu^*}[V_{\pi_k}(s)]-E_{s\sim \nu^*}[V_{\pi^*}(s)]$ and $\mathcal{D}_k=E_{s\sim \nu^*}[D^{\pi^*}_{\pi_k}(s)]$. We analyze convergence under constant and adaptive step sizes,
both with strongly convex regularization and without regularization. We show that Algorithm~\ref{alg:pmd} can achieve a linear convergence rate
for solving RL problems with strongly convex regularizers and  a sublinear rate without it.
For convenience, we define
\[
\lambda_k(s):=
\frac{\alpha}{
    \max\{\alpha,\|A^{\pi_k}(s,\cdot)\|\}
},
\qquad
\kappa_0:=\min_{s\in\mathcal S}\lambda_0(s).
\]

\begin{theorem}[Convergence with strongly convex regularizer]\label{thm: stongly convex}Suppose $h^p$ is $\mu$-strongly convex with $\mu>0$, $D_{\pi}^{\pi'}=\KL (\pi' ||\pi)$, cost function $c(s,a)$ and $h^p$ are bounded.
Then the following bounds hold for all $N\geq 1$:
\begin{enumerate}
    \item \textbf{Constant step sizes.}
    There exists
    a constant $1\ge\kappa>0$, for $\eta_k=\eta$ for all $k\geq 0$, we have
    \[
    F_N+(\mu+\frac{1}{\eta})\mathcal D_N
    \leq
    \rho^N
    \left[
        F_0+(\mu+\frac{1}{\eta})\log|\mathcal A|
    \right],
    \]
    where $
    \rho=
    \max\left\{
        \gamma,\frac{1}{\kappa(1+\eta\mu)}
    \right\}
    $. If $\eta\mu>1/\kappa-1$, then $\rho<1$ .

    \item \textbf{Adaptive step sizes.}
    For $\eta_0>0$, choose 
    $\eta_{k+1}
    =
    \eta_k
    \max\left\{
        1,\,
        \max_{s\in\mathcal S}
        \frac{\lambda_k(s)}{\lambda_{k+1}(s)}
    \right\}$.
    Then
    \[
    F_N
    \leq
    \rho_0^N
    \left[
        F_0+
        \left(
            \mu+\frac{1}{\eta_0\kappa_0}
        \right)
        \log|\mathcal A|
    \right],
    \]
    where
    $
    \rho_0=
    \max\left\{
        \gamma,\frac{1}{1+\mu\eta_0\kappa_0}
    \right\}<1.
    $
\end{enumerate}
\end{theorem}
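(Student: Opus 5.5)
I will follow Lan's analysis of policy mirror descent (\cite{pmdlan}), treating Algorithm~\ref{alg:pmd} as PMD with a state-dependent effective step size $\tilde\eta_k(s):=\eta_k\lambda_k(s)\in(0,\eta_k]$. The first ingredient is the three-point lemma for the prox step: since $\eta_k\lambda_k(s)h^p(s)+D^p_{\pi_k}(s)$ is $(1+\tilde\eta_k(s)\mu)$-strongly convex relative to $D$, for every $p\in\Delta(\mathcal A)$
\[
\tilde\eta_k(s)\big[\langle A^{\pi_k}(s,\cdot),\pi_{k+1}(\cdot|s)-p\rangle+h^{\pi_{k+1}}(s)-h^{p}(s)\big]+D^{\pi_{k+1}}_{\pi_k}(s)\le D^{p}_{\pi_k}(s)-(1+\tilde\eta_k(s)\mu)D^{p}_{\pi_{k+1}}(s).
\]
Taking $p=\pi_k(\cdot|s)$ gives monotone improvement $V^{\pi_{k+1}}(s)\le V^{\pi_k}(s)$ for all $s$, via the performance difference lemma. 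Taking $p=\pi^*(\cdot|s)$, dividing by $\tilde\eta_k(s)$, averaging over $s\sim\nu^*$, and using the performance difference lemma with stationarity of $\nu^*$ gives $\mathbb E_{\nu^*}[\langle A^{\pi_k},\pi^*\rangle+h^{\pi^*}-h^{\pi_k}]=-(1-\gamma)F_k$. Combining these with monotonicity, which lets us replace $\pi_{k+1}$ terms by $V^{\pi_{k+1}}-V^{\pi_k}$ as in Lan, I aim for a per-state-weighted recursion
\[
F_{k+1}-\gamma F_k\ \lesssim\ \mathbb E_{\nu^*}\Big[\tfrac{1}{\tilde\eta_k(s)}D^{\pi^*}_{\pi_k}(s)-\big(\tfrac{1}{\tilde\eta_k(s)}+\mu\big)D^{\pi^*}_{\pi_{k+1}}(s)\Big].
\]

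\textbf{Constant steps.} The difficulty is that $\lambda_k(s)$ varies with $s$ and $k$, so the $D$ terms cannot telescope with a single coefficient. I would take $\kappa:=\inf_{k,s}\lambda_k(s)$, which is positive because $c$ and $h$ are bounded: then $\|A^{\pi_k}(s,\cdot)\|\le 2(\|c\|_\infty+\|h\|_\infty)\sqrt{|\mathcal A|}/(1-\gamma)$, so $\kappa\ge \alpha/\max\{\alpha,\text{that bound}\}>0$. Bounding $1/\tilde\eta_k(s)\le 1/(\eta\kappa)$ on the positive term and $1/\tilde\eta_k(s)\ge 1/\eta$ on the negative one gives
\[
F_{k+1}+(\mu+\tfrac1\eta)\mathcal D_{k+1}\le \gamma F_k+\tfrac{1}{\eta\kappa}\mathcal D_k .
\]
Since $\tfrac{1}{\eta\kappa}=\tfrac{1}{\kappa(1+\eta\mu)}(\mu+\tfrac1\eta)$, the right side is at most $\rho\,[F_k+(\mu+\tfrac1\eta)\mathcal D_k]$. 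Unrolling and using $\mathcal D_0\le\log|\mathcal A|$ (uniform $\pi_0$) yields the first bound. If the constants do not match exactly, I will adjust where $\kappa$ enters, for example by multiplying the whole inequality by $\lambda_k(s)$ before averaging, since $\rho$ as stated suggests precisely this split.

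\textbf{Adaptive steps.} The choice $\eta_{k+1}=\eta_k\max\{1,\max_s\lambda_k(s)/\lambda_{k+1}(s)\}$ guarantees $\tilde\eta_{k+1}(s)\ge\tilde\eta_k(s)$ for every $s$, so $\tilde\eta_k(s)\ge\eta_0\kappa_0$ for all $k,s$. Then $\tfrac{1}{\tilde\eta_{k+1}(s)}\le\tfrac{1}{\tilde\eta_k(s)}$, and the weighted potential $\Phi_k:=F_k+\mathbb E_{\nu^*}[(\mu+1/\tilde\eta_k(s))\ldots]$ can be compared across iterations. Note that the weight $1/\tilde\eta_k(s)+\mu$ on $D_{\pi_{k+1}}$ dominates $(1+\mu\eta_0\kappa_0)\cdot\tfrac{1}{\tilde\eta_{k+1}(s)}$, using $\mu\tilde\eta_{k+1}\ge\mu\eta_0\kappa_0$ state by state. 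Following Lan's increasing-step argument, this gives $\Phi_{k+1}\le\rho_0\Phi_k$. Dropping the $D$ part at step $N$ and bounding $\Phi_0\le F_0+(\mu+1/(\eta_0\kappa_0))\log|\mathcal A|$ finishes.

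\textbf{Main obstacle.} I expect the main obstacle to be this state-dependent weighting. The three-point inequality must be averaged with per-state weights $1/\tilde\eta_k(s)$ while the $F$ terms carry no weights. I will handle it by keeping the weights inside $\mathbb E_{\nu^*}$ throughout and using the pointwise monotonicity $V^{\pi_{k+1}}\le V^{\pi_k}$ to absorb the cross term.
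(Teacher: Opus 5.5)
Your proposal is correct and follows essentially the same route as the paper: the three-point inequality with effective step $\eta_k\lambda_k(s)$, monotone improvement, the stationarity identity (Lemma~\ref{lem:optimal-identity}), and the per-state-weighted recursion of Lemma~\ref{lem:fundamental}; for constant steps you then use the sandwich $1/\eta\le 1/(\eta\lambda_k(s))\le 1/(\eta\kappa)$, and for adaptive steps the monotone effective steps $\eta_k\lambda_k(s)\ge\eta_0\kappa_0$ with the potential $\Phi_k$. The differences are cosmetic: you bound $\|A^{\pi_k}(s,\cdot)\|$ through $\|c\|_\infty+\|h\|_\infty$ instead of the paper's oscillation bound $B_A$ (Lemma~\ref{lem:kappa}), and it is cleaner to take $\kappa$ to be that $\eta$-independent lower bound rather than $\inf_{k,s}\lambda_k(s)$, while your fallback of multiplying by $\lambda_k(s)$ is unnecessary because your constants already match.
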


\begin{theorem}[Convergence without regularizer] \label{thm: nonconvex}
Suppose $h^p=0$, $D_{\pi}^{\pi'}=\KL (\pi' ||\pi)$,  cost function $c(s,a)$ and $h^p$ are bounded. Then the following bounds hold for all
$N\geq 1$:
\begin{enumerate}
    \item \textbf{Constant step sizes.}
    There exists
    a constant $C_0>0$, for $\eta_k=\eta$ for all $k\geq 0$, we have
    \[
    F_N
    \leq
    \frac{C_0}{(1-\gamma)N}
    \left[
        \gamma F_0+
        \frac{\log|\mathcal A|}{\eta\kappa_0}
    \right].
    \]

    \item \textbf{Adaptive step sizes.}
    For $\eta_0>0$, choose
    $
    \eta_{k+1}
    =
    \eta_k
    \max\left\{
        1,\,
        \max_{s\in\mathcal S}
        \frac{\lambda_k(s)}{\lambda_{k+1}(s)}
    \right\}.
    $
    Then
    \[
    F_N
    \leq
    \frac{1}{(1-\gamma)N}
    \left[
        \gamma F_0+
        \frac{\log|\mathcal A|}{\eta_0\kappa_0}
    \right].
    \]
\end{enumerate}
\end{theorem}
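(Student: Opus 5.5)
Theorem~\ref{thm: nonconvex} follows the standard policy mirror descent analysis of \cite{pmdlan}, with the clipping factor $\lambda_k(s)$ absorbed into an effective statewise step size $\tilde\eta_k(s):=\eta_k\lambda_k(s)$. With $h^p=0$, the update of Algorithm~\ref{alg:pmd} is exactly PMD with step size $\tilde\eta_k(s)$ at state $s$.

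\textbf{Step 1: three-point lemma.} By the optimality of $\pi_{k+1}(\cdot\mid s)$ and the three-point property of the KL Bregman divergence, for every $p\in\Delta(\mathcal A)$ we have
\[
\tilde\eta_k(s)\langle A^{\pi_k}(s,\cdot),\pi_{k+1}(\cdot\mid s)-p\rangle+D^{\pi_{k+1}}_{\pi_k}(s)\le D^{p}_{\pi_k}(s)-D^{p}_{\pi_{k+1}}(s).
\]
Taking $p=\pi_k(\cdot\mid s)$ and using $\langle A^{\pi_k}(s,\cdot),\pi_k(\cdot\mid s)\rangle=0$ gives $\langle A^{\pi_k}(s,\cdot),\pi_{k+1}(\cdot\mid s)\rangle\le 0$ for all $s$. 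By the performance difference lemma, this yields monotone improvement: $V^{\pi_{k+1}}(s)\le V^{\pi_k}(s)$ for every $s$. Moreover,
\[
V^{\pi_{k+1}}(s)-V^{\pi_k}(s)\le \langle A^{\pi_k}(s,\cdot),\pi_{k+1}(\cdot\mid s)\rangle ,
\]
since the remaining terms in the performance difference expansion are nonpositive.

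\textbf{Step 2: compare with $\pi^*$.} Take $p=\pi^*(\cdot\mid s)$, divide by $\tilde\eta_k(s)$, and average over $s\sim\nu^*$. The performance difference lemma under the stationary distribution $\nu^*$ gives
\[
\mathbb E_{\nu^*}\langle A^{\pi_k}(s,\cdot),\pi^*(\cdot\mid s)\rangle=-(1-\gamma)F_k .
\]
For the term with $\pi_{k+1}$, Step 1 bounds $\langle A^{\pi_k},\pi_{k+1}\rangle$ from below by $V^{\pi_{k+1}}-V^{\pi_k}$. Combining these yields
\[
F_{k+1}-\gamma F_k\le \mathbb E_{\nu^*}\!\left[\frac{D^{\pi^*}_{\pi_k}(s)-D^{\pi^*}_{\pi_{k+1}}(s)}{\eta_k\lambda_k(s)}\right].
\]
Equivalently, $(1-\gamma)F_{k+1}\le (F_k-F_{k+1})\gamma+\mathbb E_{\nu^*}[\cdots]$.

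\textbf{Step 3: telescope.} This is the main obstacle, because the statewise weights $1/(\eta_k\lambda_k(s))$ vary with $k$ and the right-hand side no longer telescopes directly.

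For adaptive step sizes, the rule $\eta_{k+1}\ge\eta_k\lambda_k(s)/\lambda_{k+1}(s)$ gives $\eta_{k+1}\lambda_{k+1}(s)\ge\eta_k\lambda_k(s)$, so the weights $1/(\eta_k\lambda_k(s))$ are nonincreasing in $k$. Hence
\[
\sum_k\left[\frac{D_k}{w_k}-\frac{D_{k+1}}{w_k}\right]\le\frac{D_0}{w_0}\le\frac{\log|\mathcal A|}{\eta_0\kappa_0},
\]
using $D^{\pi^*}_{\pi_0}(s)\le\log|\mathcal A|$ for the uniform $\pi_0$. Summing Step 2 over $k=0,\dots,N-1$ and using the monotonicity of $F_k$ from Step 1 gives
\[
(1-\gamma)NF_N\le\gamma F_0+\frac{\log|\mathcal A|}{\eta_0\kappa_0}.
\]

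For constant step sizes, I would first show that $\|A^{\pi_k}(s,\cdot)\|$ is uniformly bounded, using bounded cost together with $|A|\le 2\|c\|_\infty/(1-\gamma)$. This gives $\lambda_k(s)\ge\underline\lambda>0$ uniformly in $k$ and $s$. I would then try to bound the ratio $\lambda_k(s)/\lambda_{k+1}(s)$ via monotone improvement, or else bound the non-telescoping remainder by a multiple of $\log|\mathcal A|$. Either route should produce a constant $C_0$ depending on $\kappa_0/\underline\lambda$ that multiplies the bracket. Establishing the existence of $C_0$ through this uniform lower bound on the clipping factors is where I expect the most care to be needed.
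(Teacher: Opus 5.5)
Your adaptive-step argument is correct and is essentially the paper's proof of part (iii) of Theorem~\ref{thm:main}. Your Step 2 inequality is the fundamental recursion (Lemma~\ref{lem:fundamental}) with $\mu=0$ and the $E_k$ term dropped. The telescoping then uses $\eta_{k+1}\lambda_{k+1}(s)\ge\eta_k\lambda_k(s)$ together with the monotonicity of $F_k$, exactly as the paper does.

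The constant-step part has a genuine gap. A uniform lower bound $\lambda_k(s)\ge\underline\lambda$ only gives the per-step bound $\lambda_k(s)/\lambda_{k+1}(s)\le 1/\underline\lambda$. Since $\|A^{\pi_k}(s,\cdot)\|$ need not be monotone in $k$, the weights $1/(\eta\lambda_k(s))$ can oscillate, and compounding these ratios over $N$ steps yields a factor $\underline\lambda^{-N}$, not a constant. Equivalently, after summing your Step 2 the non-telescoping remainder is $\frac1\eta\sum_{k=1}^{N-1}\mathbb E_{\nu^*}\bigl[(1/\lambda_k(s)-1/\lambda_{k-1}(s))D_k(s)\bigr]$. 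The uniform bound controls each term only by $(1/\underline\lambda-1)D_k(s)/\eta$, so the sum can be of order $N$. Moreover, $D_k(s)$ is not a priori bounded by a multiple of $\log|\mathcal A|$ for $k\ge1$. Neither of your two routes therefore produces a constant $C_0$ depending only on $\kappa_0/\underline\lambda$.

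The missing idea is that the cumulative upward variation of $r_k(s):=1/\lambda_k(s)$ is summable; this is the paper's Lemma~\ref{lem:variation}.
\begin{itemize}
\item When $h\equiv0$, set $u_k:=V^{\pi_k}-V^{\pi_{k+1}}\ge0$. Then $A^{\pi_{k+1}}(s,a)-A^{\pi_k}(s,a)=u_k(s)-\gamma\sum_{s'}P(s'\mid s,a)u_k(s')$.
\item Combining this with the $1/\alpha$-Lipschitzness of $x\mapsto\max\{1,x/\alpha\}$ and $r_k\ge1$ gives $\theta_k-1\le (K_{\mathrm{norm}}/\alpha)\|u_k\|_\infty$, where $\theta_k:=\max\{1,\max_s r_{k+1}(s)/r_k(s)\}$.
\item Monotone improvement then telescopes: $\sum_k\|u_k\|_\infty\le\sum_s e_0(s)$. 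Hence $R_N:=\prod_{k<N}\theta_k\le\exp\bigl((K_{\mathrm{norm}}/\alpha)\sum_s e_0(s)\bigr)$.
\item With this, the potential $Y_k:=\gamma F_k+\mathbb E_{\nu^*}[D_k(s)/(\eta\lambda_k(s))]$ satisfies $Y_{k+1}\le\theta_kY_k-(\theta_k-\gamma)F_{k+1}$. Dividing by $R_{k+1}$, summing, and using $F_j\ge F_N$ and $R_j\le R_N$ gives $F_N\le R_NY_0/((1-\gamma)N)$.
\end{itemize}
So $C_0=e^{T_0}$, which depends on the initial suboptimality and on $\alpha$, not on $\underline\lambda$.
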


\section{Experiments}
\label{sec:experiments}

\subsection{Experimental Setup.}
In this section, we present experiments to evaluate the performance of ACPO on reasoning tasks. We incorporate PPO and GRPO with ACPO and denote it as PPO-AC and GRPO-AC. We compare
GRPO-AC and PPO-AC against two standard IS ratio clipping methods: GRPO and PPO.
\paragraph{Datasets and models.}We train our models using the training set from DAPO \citep{yu2025dapo}, which comprises approximately 17.4k math problems sourced from the AoPS website and
official competition homepages. We
employ the Math-Verify tool \citep{kydlicek2025mathverify} for automatic solution correctness verification. To demonstrate the generality of our method across model scales, we conduct experiments using \textbf{Qwen2.5-Math-7B} and \textbf{Qwen3(1.7B, 4B and 8B)}.

\paragraph{Evaluation.}We assess the models’ reasoning abilities on several standard mathematical reasoning benchmarks: \textbf{MATH500} \citep{hendrycks2021math500}, \textbf{Minerva Math} \citep{Lewkowycz2022Minerva}, \textbf{OlympiadBench} \citep{he2024olympiadbench} and \textbf{AIME-like} \citep{xiong2025reinforceadaadaptivesamplingframework}, which consists of 230 problems from recent competitions: AIME24, AIME25, HMMT24, HMMT25, BRUMO25, AMC23, and CMIMC25. All evaluations report the Pass@1 accuracy
averaged over 16 or 32 samples, generated with a temperature of 1.0, top-p=1 and a max token limit of 8196.

\paragraph{RL Training Details.} We conduct all experiments using the VERL reinforcement learning framework \citep{Sheng2025verl}. In PPO, we generate one response per prompt ($n=1$), for GRPO we generate four responses per prompt ($n=4$) for Qwen3 and eight responses per prompt ($n=8$) for Qwen2.5-Math-7B. Following prior work \citep{schulman2017ppo,yu2025dapo}, we set the standard clipping range to $0.2$ for PPO and $[0.2, 0.28]$ for GRPO. For PPO-AC, we set $\alpha=3$ in all experiments. For GRPO-AC, we set $\alpha=3$ for Qwen2.5-Math-7B and $\alpha=2$ for Qwen3. More details of our training setting can be found in Appendix~\ref{training details}.

\subsection{Main Results}

We conduct experiments on Qwen2.5-Math-7B and Qwen3 (1.7B, 4B and 8B). We consider the weighted accuracy(weighted by problem amount on each benchmark).The evaluation performance curves and best
evaluation results throughout
training are presented in Table~\ref{tab:results} and Figure~\ref{fig:main_result}, respectively. The effectiveness of our method is demonstrated in Table~\ref{tab:results}. Across four model configurations and four standard mathematical reasoning benchmarks, incorporating ACPO consistently improves both PPO and GRPO. In particular, PPO-AC increases the weighted accuracy over PPO by 1.6, 2.4, 2.7, and 4.0 percentage points on Qwen2.5-7B-Math, Qwen3-1.7B, Qwen3-4B, and Qwen3-8B, respectively, demonstrating increasingly pronounced gains on stronger models. Similar improvements are observed for GRPO-AC, which consistently outperforms its corresponding GRPO baseline in weighted accuracy. The gains are especially substantial on the more challenging OLYMPIAD and AIME-like benchmarks. For example, on Qwen3-8B, PPO-AC improves OLYMPIAD accuracy from 63.7\% to 70.5\% and AIME-like accuracy from 43.6\% to 48.8\%.  These results demonstrate that ACPO provides robust and broadly applicable improvements across different policy optimization algorithms, model families, and model scales.

Figure~\ref{fig:main_result} illustrates the step-wise weighted accuracy averaged across all benchmarks for ACPO and the baseline methods. Although all approaches exhibit improved reasoning performance as training progresses, ACPO consistently converges faster and achieves higher  performance than baselines. In particular, GRPO-AC maintains the best performance throughout the entire training process. Moreover, PPO-AC consistently outperforms PPO and achieves performance comparable to GRPO, while GRPO requires multiple rollouts per prompt.

\begin{table*}[h]
\centering
\caption{Performance of methods on math benchmarks, where we choose the best weighted accuracy result during training. Bold marks the better result within each matched PPO or GRPO pair.}
\label{tab:results}
\small
\setlength{\tabcolsep}{4pt}
\begin{tabular}{lrrrrr}
\toprule
Method & Weighted Acc. & MATH500 & MINERVA & OLYMPIAD & AIME-like \\
\midrule
\multicolumn{6}{l}{\textbf{Qwen2.5-7B-Math} (avg@32)} \\
PPO & 48.4& 79.0& 33.0& 41.2& 21.4\\
\rowcolor{blue!8}
PPO-AC & \textbf{50.0} & \textbf{80.5}& \textbf{35.3}&
\textbf{42.8}& \textbf{21.9}\\
GRPO & 51.2& 82.0& 37.2& 43.2& 24.1\\
\rowcolor{blue!8}
GRPO-AC & \textbf{52.6}& \textbf{82.8}& \textbf{37.8}&
\textbf{45.6}& \textbf{24.9}\\
\midrule
\multicolumn{6}{l}{\textbf{Qwen3-1.7B} (avg@16)} \\
PPO & 57.3& 86.2& 39.2& 52.8& 28.8\\
\rowcolor{blue!8}
PPO-AC & \textbf{59.7}& \textbf{88.1}& \textbf{39.7}&
\textbf{56.9}& \textbf{30.1}\\
GRPO & 60.6& 88.2& 39.5& 58.9& 30.8\\
\rowcolor{blue!8}
GRPO-AC & \textbf{62.1}& \textbf{89.4}& \textbf{40.3}&
\textbf{60.3}& \textbf{34.3}\\
\midrule
\multicolumn{6}{l}{\textbf{Qwen3-4B} (avg@16)} \\
PPO & 66.7& 93.3& 45.4& 63.8& 43.3\\
\rowcolor{blue!8}
PPO-AC & \textbf{69.4}& \textbf{94.0}& \textbf{46.4}&
\textbf{68.3}& \textbf{46.3}\\
GRPO & 70.0& 94.7& 46.6& 69.2& 46.4\\
\rowcolor{blue!8}
GRPO-AC & \textbf{70.6}& \textbf{94.8}& \textbf{46.8}&
\textbf{69.8}& \textbf{48.5}\\
\midrule
\multicolumn{6}{l}{\textbf{Qwen3-8B} (avg@16)} \\
PPO & 67.3& 93.1& 48.6& 63.7& 43.6\\
\rowcolor{blue!8}
PPO-AC & \textbf{71.3}& \textbf{94.8}& \textbf{48.9}&
\textbf{70.5}& \textbf{48.8}\\
GRPO & 70.6& \textbf{95.0}& 48.7& 69.1& 47.3\\
\rowcolor{blue!8}
GRPO-AC & \textbf{71.4}& \textbf{95.0}& \textbf{49.4}&
\textbf{70.4}& \textbf{49.0}\\
\bottomrule
\end{tabular}
\end{table*}

\begin{figure}[h]
    \centering
    \begin{minipage}{0.9 \linewidth}
        \centering

        \begin{minipage}[t]{0.49\linewidth}
            \centering
            \includegraphics[width=\linewidth]{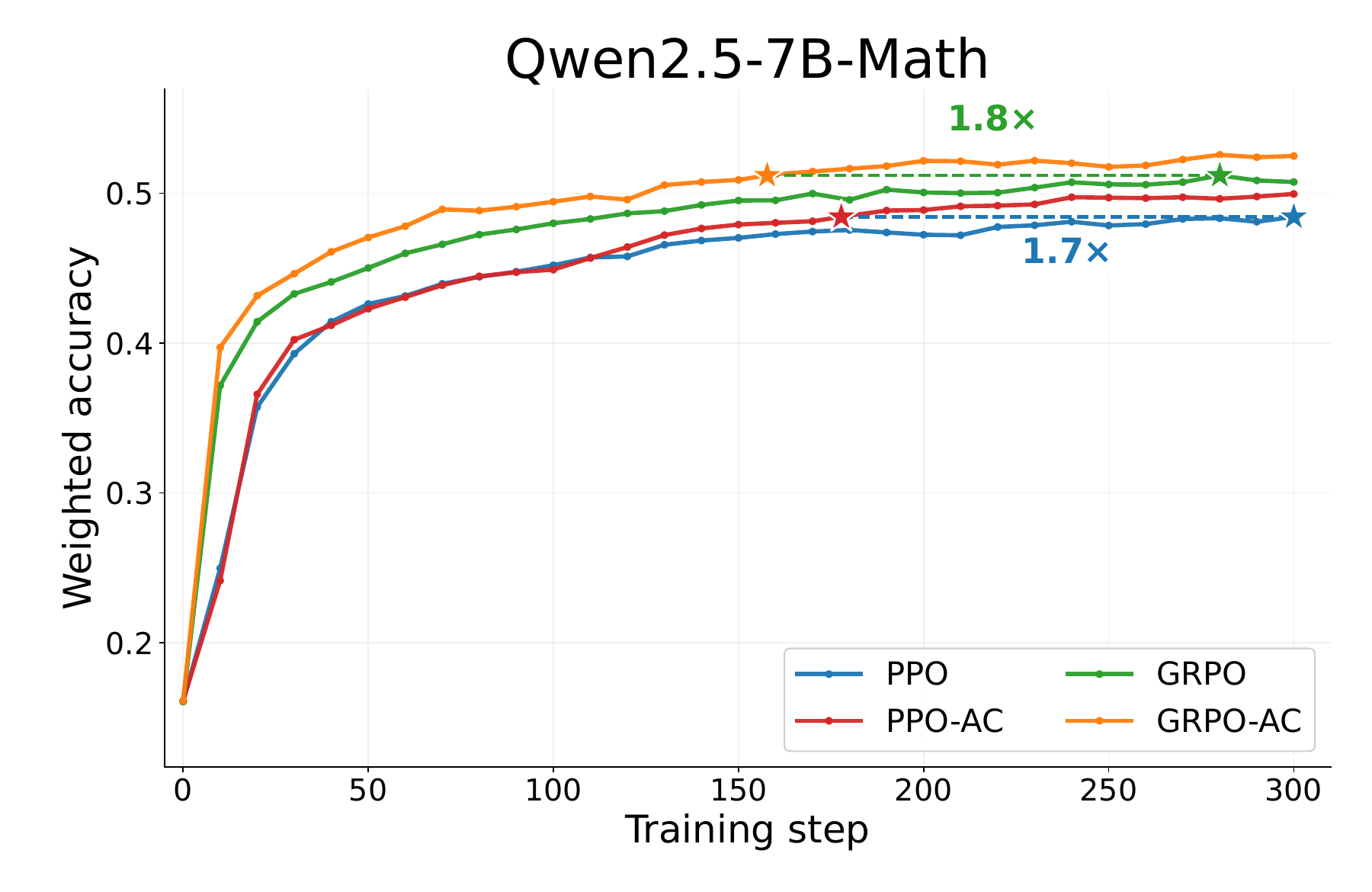}
        \end{minipage}\hfill
        \begin{minipage}[t]{0.49\linewidth}
            \centering
            \includegraphics[width=\linewidth]{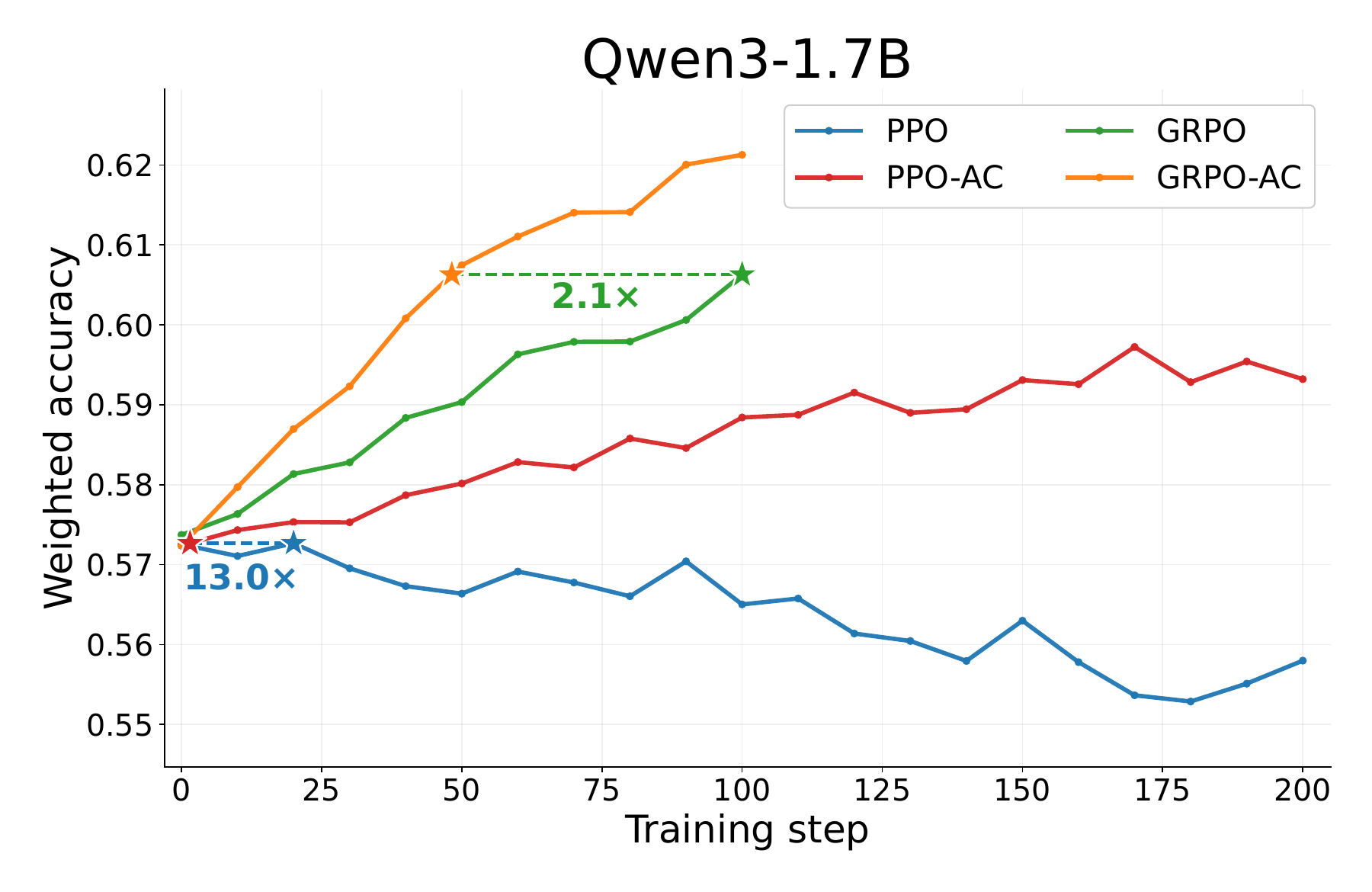}
        \end{minipage}

        \par\smallskip

        \begin{minipage}[t]{0.49\linewidth}
            \centering
            \includegraphics[width=\linewidth]{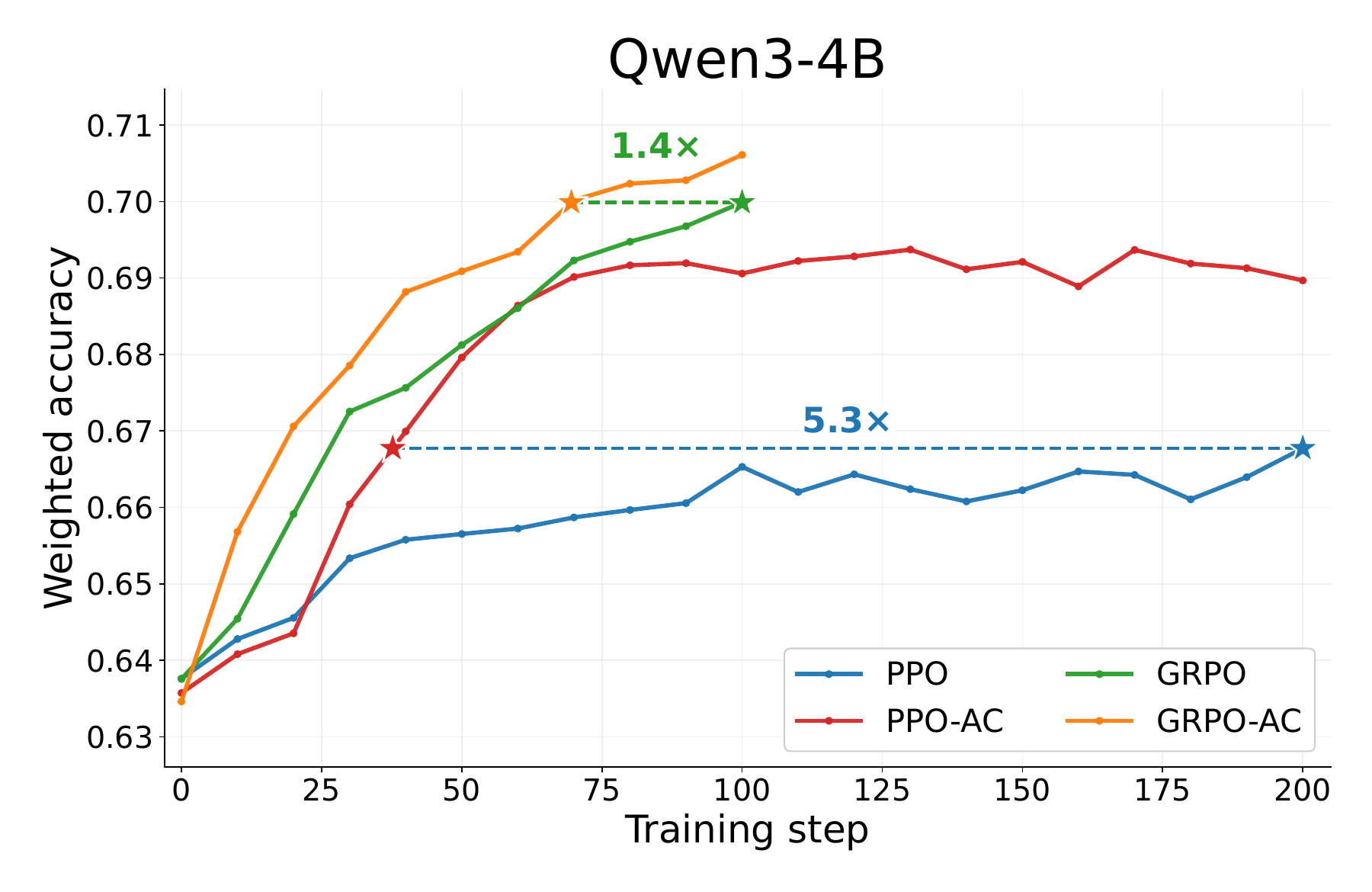}
        \end{minipage}\hfill
        \begin{minipage}[t]{0.49\linewidth}
            \centering
            \includegraphics[width=\linewidth]{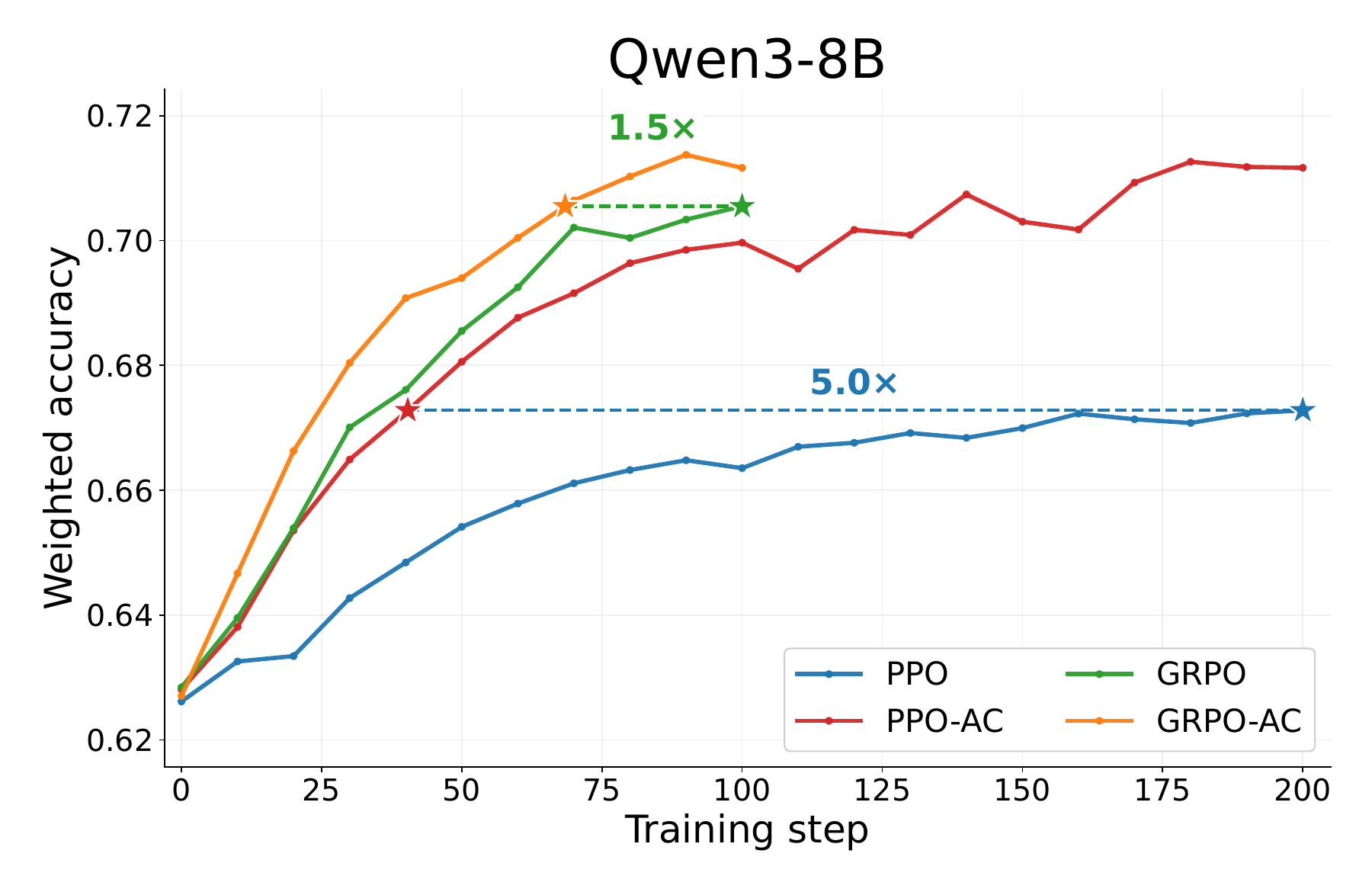}
        \end{minipage}
    \end{minipage}

    \caption{Accuracy during Training Process} 
    \label{fig:main_result}
\end{figure}

\subsection{Training Dynamics and  Analysis}
\paragraph{Training efficiency. }
Our clipping mechanism in ACPO stabilizes policy optimization from both policy-gradient and mirror-descent perspectives while making more effective use of the training signal. As shown in Figure~\ref{fig:main_result}, ACPO achieves average training-step speedups of 6.25$\times$ over PPO and 1.7$\times$ over GRPO, measured by the steps required to match each baseline’s best accuracy. For Qwen3-4B and Qwen3-8B, PPO-AC matches the best accuracy attained by PPO over 200 steps in approximately 40 steps (5.3$\times$ and 5.0$\times$). Especially on Qwen3-1.7B, PPO yields little initial improvement and its accuracy declines after 20 steps, whereas PPO-AC continues to improve model accuracy, suggesting greater robustness in this setting. These results confirm that ACPO improves accuracy while accelerating training speed, with larger efficiency gains over PPO than over GRPO. This difference arises from how the two methods estimate advantages: PPO uses token-level advantages derived from a value model, whereas GRPO assigns the same advantage to all tokens within a sequence. This finer-grained signal allows PPO to benefit more from ACPO’s clipping mechanism.

\paragraph{Entropy.}
Vanilla PPO and GRPO can exhibit rapid entropy decay, limiting exploration and prematurely concentrating the policy on a narrow set of reasoning trajectories~\citep{yu2025dapo,yue2025vapo,cui2025entropy}. The standard clipping mechanism in PPO-style objectives further exacerbates this issue by constraining probability increases for low-probability exploratory tokens. As shown in Figure~\ref{fig:entropy}, ACPO substantially alleviates entropy collapse during training. The policy entropy of GRPO-AC remains relatively stable and fluctuates around its initial value, whereas that of  GRPO increases rapidly with the use of the Clip-Higher technique\citep{yu2025dapo}. Although the entropy of PPO-AC still declines during training, its decrease is considerably slower than that of PPO. Our clipping mechanism in ACPO preserves the contributions of tokens with relatively small advantages, which helps sustain exploration and policy diversity.

\begin{figure}[htbp]
    \centering
    \begin{subfigure}[b]{0.24\textwidth}
        \centering
        \includegraphics[width=\linewidth]{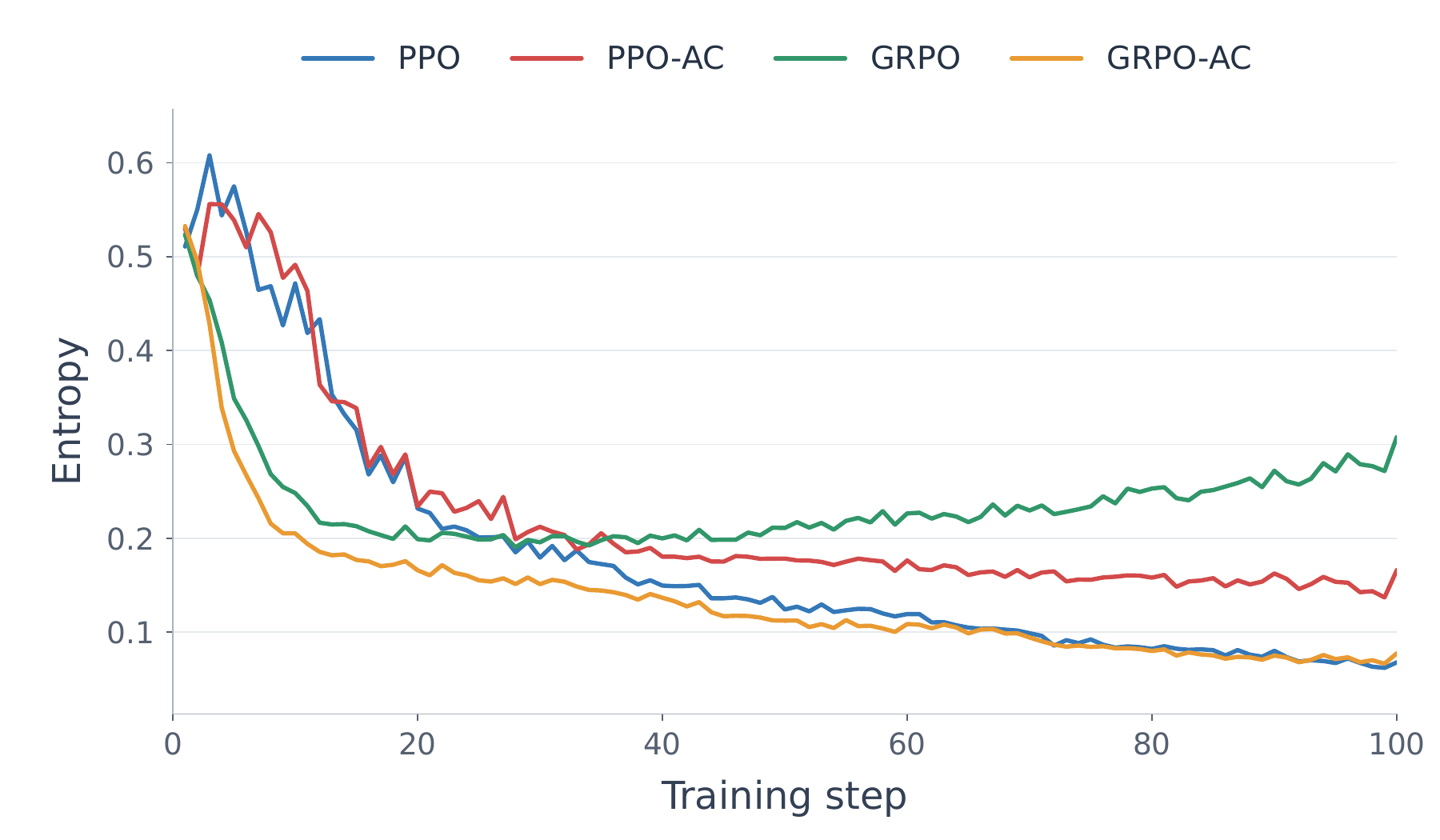}
        \caption{Qwen2.5-7B-Math}
        \label{fig:entropy-7b}
    \end{subfigure}\hfill
    \begin{subfigure}[b]{0.24\textwidth}
        \centering
        \includegraphics[width=\linewidth]{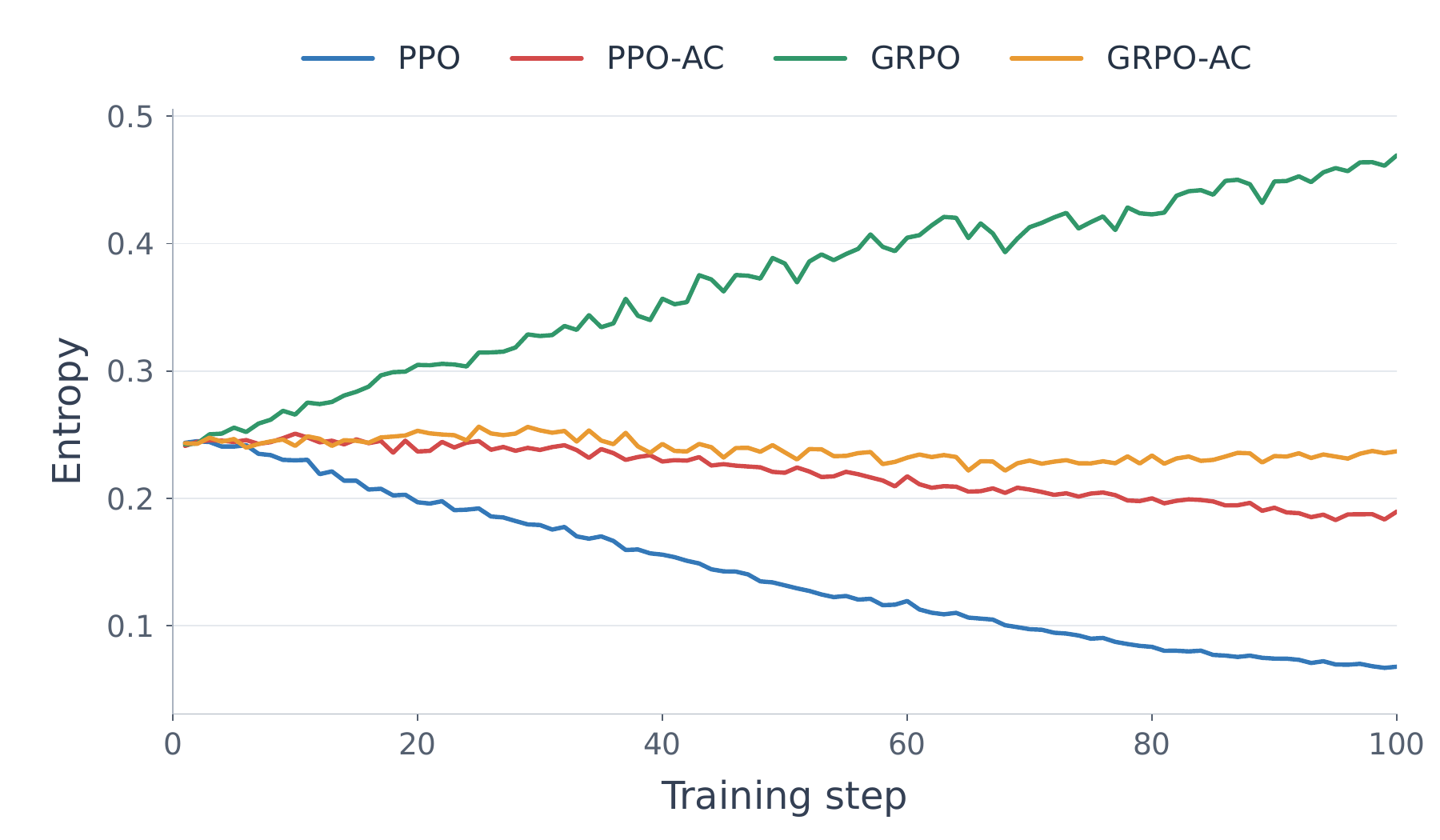}
        \caption{Qwen3-1.7B}
        \label{fig:entropy-1.7b}
    \end{subfigure}\hfill
    \begin{subfigure}[b]{0.24\textwidth}
        \centering
        \includegraphics[width=\linewidth]{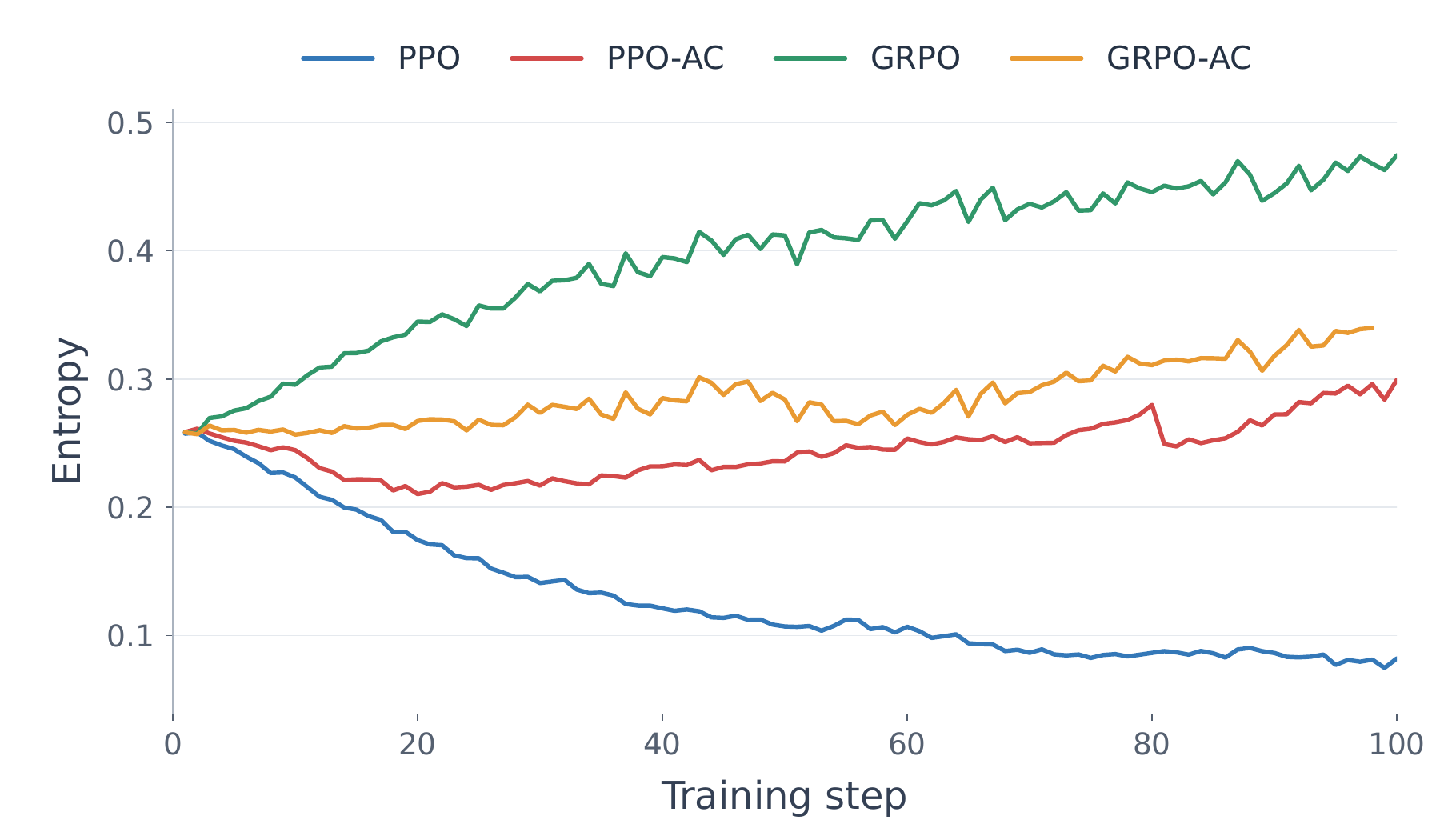}
        \caption{Qwen3-4B}
        \label{fig:entropy-4b}
    \end{subfigure}\hfill
    \begin{subfigure}[b]{0.24\textwidth}
        \centering
        \includegraphics[width=\linewidth]{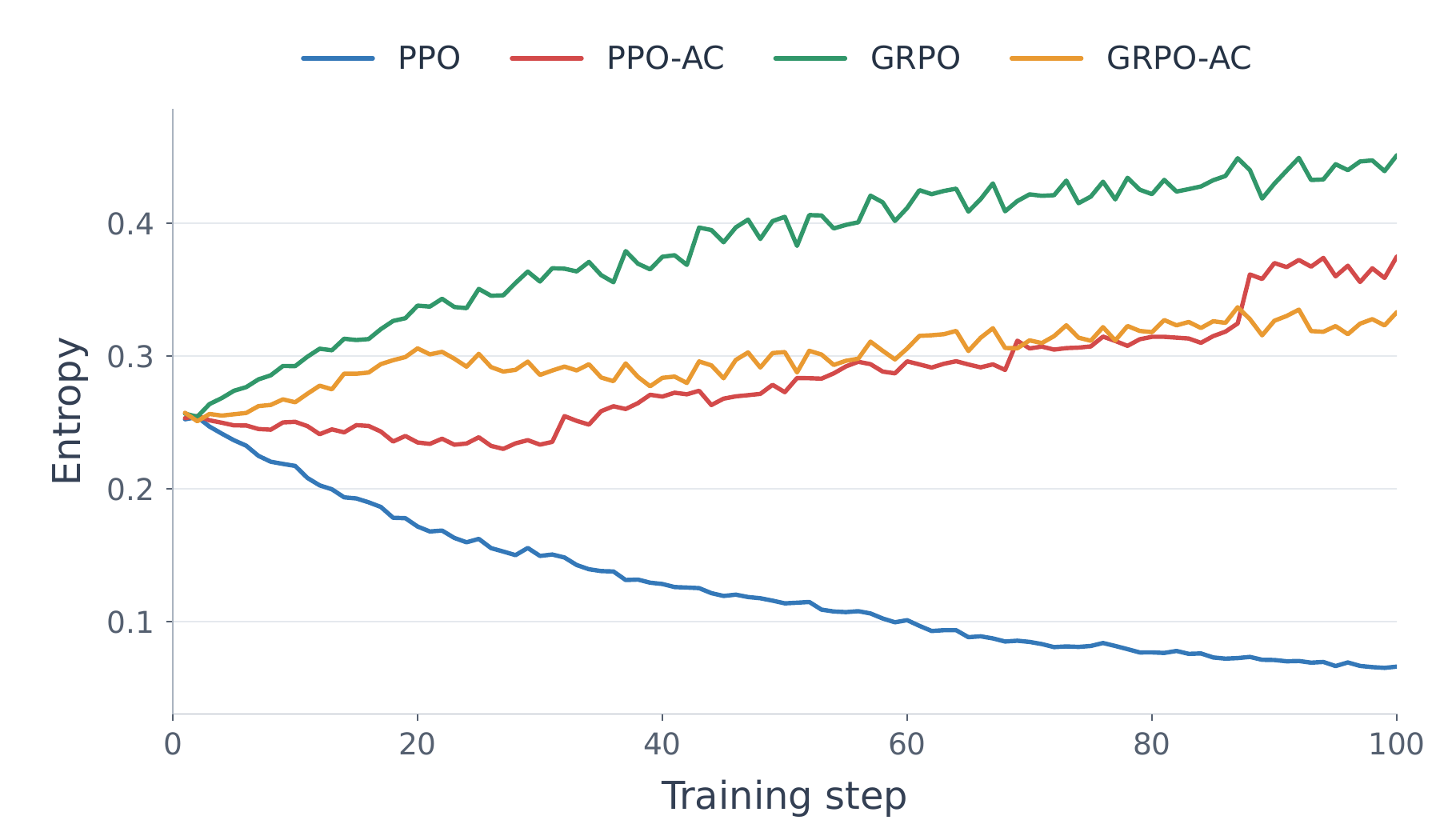}
        \caption{Qwen3-8B}
        \label{fig:entropy-8b}
    \end{subfigure}
    \caption{Policy entropy over the first 100 training steps
    across four models.}
    \label{fig:entropy}
\end{figure}

\section{Related Work}
\paragraph{Mirror Descent for Policy Optimization.}
Mirror descent provides a principled framework for constrained optimization and has been widely applied to policy optimization. \citet{tomar2022mirror} propose mirror descent policy optimization (MDPO), which updates the policy by approximately solving a trust-region subproblem. \citet{song2026bellmanpolicyoptimization} use the Bellman equation to reformulate policy mirror descent (PMD) as a trajectory-level objective and derive a practical BPO loss through approximation. While these studies derive practical objectives from mirror descent, our work takes a novel perspective: using PMD to understand and motivate clipping mechanisms in policy optimization.

\paragraph{Clipping Mechanisms.}
Clipping mechanisms have received considerable attention in policy optimization and post-training \citep{yu2025dapo,minimax2025CISPO,jia2026gapo,zheng2025gspo}. These methods primarily modify how importance ratios are clipped. DAPO introduces Clip-Higher, using clipping range $[0.2,0.28]$ to encourage exploration \citep{yu2025dapo}. CISPO clips the importance-sampling weights while retaining the corresponding gradient contributions \citep{minimax2025CISPO}. From token level to sequence level, \cite{zheng2025gspo} design clipping over trajectory, while GAPO adapts the IS clipping range to trajectory advantages in GRPO \citep{jia2026gapo}. 

\section{Conclusion}
We introduced Advantage Clipped Policy Optimization (ACPO), which directly clips the product of IS ratio and advantage to stabilize policy optimization. ACPO can be readily integrated into PPO and GRPO, admits an interpretation through gradient-clipped policy mirror descent, and consistently improves accuracy and training efficiency across multiple Qwen models and mathematical-reasoning benchmarks. These results demonstrate that advantage clipping is a simple and effective alternative to conventional IS-ratio clipping for reinforcement-learning post-training of LLMs.

\bibliographystyle{plainnat}
\bibliography{mybib}

\begin{center}
    {\LARGE\bfseries Appendix}
\end{center}

\appendix

\setcounter{theorem}{0}
\setcounter{proposition}{0}
\setcounter{lemma}{0}
\setcounter{remark}{0}

\renewcommand{\thetheorem}{A.\arabic{theorem}}
\renewcommand{\theproposition}{A.\arabic{proposition}}
\renewcommand{\thelemma}{A.\arabic{lemma}}
\renewcommand{\theHtheorem}{A.\arabic{theorem}}
\renewcommand{\theHproposition}{A.\arabic{proposition}}
\renewcommand{\theHlemma}{A.\arabic{lemma}}
\renewcommand{\theHremark}{A.\arabic{remark}}

In Section~\ref{sec: proof} and \ref{sec: convergence rate}, we provide detailed proofs of the proposition and theorem in our main pages. Section~\ref{sec:transformation} discusses how to derive our ACPO objective from PMD with gradient clipping step by step. And in Section~\ref{Additional Experimental  Details}, we present more experiment results and settings.

\section{Proof of Proposition~\ref{lower moment}} \label{sec: proof}
\begin{proposition}[cf. Proposition~\ref{lower moment}] \label{lower moment app}
    Let we denote $\1_\mathrm{PPO}=\1_{\{\widehat{A}>0, r(\theta)\le 1+\epsilon\ \text{or } \widehat{A}<0, r(\theta)\geq 1-\epsilon\}}$, $\1_\mathrm{ACPO}=\1_{\{|r(\theta)\widehat{A}|\leq \alpha\}}$. Suppose that the two clipping rules retain samples
with equal probability,
$\mathbb{E}[\1_{\mathrm{PPO}}]
=\mathbb{E}[\1_{\mathrm{ACPO}}]$ , and that
$
\mathbb{E}\!\left[
\left\|\nabla_\theta\log\pi_\theta(a\mid s)\right\|^2
\,\middle|\, r(\theta),\widehat{A}
\right]=C
$
almost surely for some finite constant $C\geq 0$.
Assuming that the second moments below are finite, we have
\[
\mathbb{E}\!\left[
\left\|
r(\theta)\widehat{A}\nabla_\theta\log\pi_\theta(a\mid s)
\1_{\mathrm{ACPO}}
\right\|^2
\right]
\leq
\mathbb{E}\!\left[
\left\|
r(\theta)\widehat{A}\nabla_\theta\log\pi_\theta(a\mid s)
\1_{\mathrm{PPO}}
\right\|^2
\right].
\] 
\end{proposition}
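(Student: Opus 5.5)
We first use the conditional-moment assumption to strip away the score function, and then compare two indicator-weighted expectations of the scalar $X:=(r(\theta)\widehat{A})^2$.

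\textbf{Step 1 (reduction).} Write $g:=\nabla_\theta\log\pi_\theta(a\mid s)$. Let $\1$ be either indicator. Both are functions of $(r(\theta),\widehat{A})$, and $\1^2=\1$. Conditioning on $(r(\theta),\widehat{A})$ and applying the tower property gives
\[
\mathbb{E}\bigl[\|r(\theta)\widehat{A}\,g\,\1\|^2\bigr]
=\mathbb{E}\bigl[X\,\1\;\mathbb{E}[\|g\|^2\mid r(\theta),\widehat{A}]\bigr]
=C\,\mathbb{E}[X\,\1].
\]
The finiteness assumption justifies this manipulation. So it suffices to show $\mathbb{E}[X\1_{\mathrm{ACPO}}]\le\mathbb{E}[X\1_{\mathrm{PPO}}]$. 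If $C=0$ the claim is trivial.

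\textbf{Step 2 (threshold comparison).} Set $B:=\1_{\mathrm{ACPO}}=\1_{\{X\le\alpha^2\}}$, $P:=\1_{\mathrm{PPO}}$, and $t:=\alpha^2$. Expand the difference:
\[
\mathbb{E}[X(P-B)]=\mathbb{E}[X\,P(1-B)]-\mathbb{E}[X\,B(1-P)].
\]
On $\{B=0\}$ we have $X>t$, so the first term is at least $t\,\mathbb{E}[P(1-B)]$. On $\{B=1\}$ we have $X\le t$, so the second term is at most $t\,\mathbb{E}[B(1-P)]$. Hence
\[
\mathbb{E}[X(P-B)]\ge t\bigl(\mathbb{E}[P(1-B)]-\mathbb{E}[B(1-P)]\bigr)=t\bigl(\mathbb{E}[P]-\mathbb{E}[B]\bigr)=0,
\]
using the equal-retention hypothesis. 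Multiplying by $C$ finishes the proof. This is the standard fact that, among retention rules with a fixed retention probability, keeping the smallest values of $X$ minimizes $\mathbb{E}[X\cdot\text{rule}]$. It also explains the remark: nothing about the specific form of $P$ is used beyond its measurability in $(r(\theta),\widehat{A})$.

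\textbf{Main obstacle.} The only delicate point is Step 1. The conditional second moment of the score must be constant; otherwise the weight $\mathbb{E}[\|g\|^2\mid r,\widehat{A}]$ could correlate with $X$ and break the threshold argument. We must also check that $P$ is indeed $\sigma(r(\theta),\widehat{A})$-measurable, so that it can be pulled out of the conditional expectation. A further subtlety is boundary ties at $X=t$. These are harmless, because the inequalities in Step 2 are non-strict on the retained set and strict only on its complement.
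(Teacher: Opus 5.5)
Your proof is correct and is essentially the paper's argument. The paper starts from the pointwise inequality $(|r(\theta)\widehat{A}|^2-\alpha^2)(\1_{\mathrm{PPO}}-\1_{\mathrm{ACPO}})\ge 0$, which is what your decomposition $P-B=P(1-B)-B(1-P)$ exploits; it then uses the conditional-moment assumption to replace $\|\nabla_\theta\log\pi_\theta(a\mid s)\|^2$ by $C$ and the equal-retention hypothesis to reach zero, and differs from yours only in applying the threshold bound before conditioning rather than after.
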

\begin{proof}
    Note that { $(|r(\theta)\widehat{A}|^2-\alpha^2)(\1_\mathrm{PPO}-\1_\mathrm{ACPO})\geq 0$} for any $(s,a)$ by the definition of $\1_{\mathrm{ACPO}}$ and denote the gradient estimator as {$G_{\mathrm{PPO}}$}, {$G_\mathrm{ACPO}$}.
     \begin{align*}
        &\mathbb{E}\|G_{\mathrm{PPO}}\|^2- \mathbb{E}\|G_{\mathrm{ACPO}}\|^2\\
        =&\mathbb{E}[|r(\theta\widehat{A})|^2\|\nabla_{\theta}\log \pi_{\theta}(a\mid s)\|^2(\1_\mathrm{PPO}-\1_\mathrm{ACPO})]\\
        \geq& \mathbb{E}[\alpha^2\|\nabla_{\theta}\log \pi_{\theta}(a\mid s)\|^2(\1_\mathrm{PPO}-\1_\mathrm{ACPO})]\\
        =&
\alpha^2\mathbb E\left[
(\1_{\mathrm{PPO}}-\1_{\mathrm{ACPO}})
\mathbb E[\|\nabla_{\theta}\log \pi_{\theta}(a\mid s)\|^2\mid r(\theta),\widehat A]
\right]\\
=
&\alpha^2 C
\left(
\mathbb E[\1_{\mathrm{PPO}}]
-\mathbb E[\1_{\mathrm{ACPO}}]
\right)\\
=&0.
    \end{align*}
    This proves the claimed inequality.
\end{proof}

\section{Transformation of PMD with gradient clipping} \label{sec:transformation}
In this section we discuss how to transfer PMD with gradient clipping Algorithm~\ref{alg:pmd} to our ACPO objective \ref{eq: ACPO}.

First, we look at the PMD update \ref{alg: PMD} to solve minimizing total cost problem. The update takes following form:
\begin{align*}
    \pi_{k+1}(\cdot\mid s)
=
\argmin_{p\in\Delta(\mathcal{A})}
\left\{
\left\langle Q^{\pi_k}(s,\cdot),p+h^p(s)\right\rangle
+
\frac{1}{\eta_k}
D_{\psi}\left(p,\pi_k(\cdot\mid s)\right)
\right\}, \forall s\in \mathcal{S}.
\end{align*}

Here we replace $Q^{\pi_k}(s,\cdot)$ with $A^{\pi_k}(s,\cdot)$. Note this replacement does not the subproblem or its optimal solution. Then in the update, the term $A^{\pi_k}(s,\cdot)$ serves as gradient. Combining the gradient clipping technique, we have
\begin{align*}
    \pi_{k+1}(\cdot\mid s)
=
\argmin_{p\in\Delta(\mathcal{A})}
\left\{
\frac{\alpha}{\max \{\alpha, \|A^{\pi_k}(s,\cdot)\|\}}\left\langle A^{\pi_k}(s,\cdot),p+h^p(s)\right\rangle
+
\frac{1}{\eta_k}
D_{\psi}\left(p,\pi_k(\cdot\mid s)\right)
\right\}, \forall s\in \mathcal{S},
\end{align*}
which is equivalent to Algorithm~\ref{alg:pmd}.

Next, we take $h^p=0$ and $D_{\psi}\left(p,\pi_k(\cdot\mid s)\right)=\KL(p,{\pi_k})$. To match the objective of PPO, which maximizes the total reward, we consider the maximizing problem formulation. Then, we come to 
\begin{align*}
    \pi_{k+1}(\cdot\mid s)
=
\argmax_{p\in\Delta(\mathcal{A})}
\left\{
\frac{\alpha}{\max \{\alpha, \|A^{\pi_k}(s,\cdot)\|\}}\left\langle A^{\pi_k}(s,\cdot),p\right\rangle
-
\frac{1}{\eta_k}
\KL\left(p,\pi_k(\cdot\mid s)\right)
\right\}, \forall s\in \mathcal{S},
\end{align*}
which is equivalent to 
\[
\pi_{k+1}(\cdot \mid s)
\leftarrow
\arg\max_{\pi\in\Pi}
\mathbb{E}_{s \sim \rho_{\pi_k},a\sim \pi}
\left[
\clip (A^{\pi_k}(s,\cdot),\alpha)
\right]
-
\frac{1}{t_k}
\KL(\pi,\pi_k),
\]
if $\rho_{\pi_k}$ has full support. 

Here we move the clipping to each $A^{\pi_k}(s,a)$, because under a parameterized policy, the gradient contribution comes from each $A^{\pi_k}(s,a)$,  then obtain
\[
\pi_{k+1}(\cdot \mid s)
\leftarrow
\arg\max_{\theta}
\mathbb{E}_{(s,a) \sim \pi_{\theta_k}}
\left[
\clip (A^{\pi_{\theta_k}}(s,a),-\alpha,\alpha)
-
\frac{1}{t_k}
\KL(\pi,\pi_k)\right].
\]

Finally, we solve this subproblem by several optimization steps and then update the $\theta_k$. Because we need to use the estimated advantage from trajectory generated by old policy and integrate importance sampling, the objective is 
\[
\max_\theta \mathbb{E}_{(s,a)\sim \pi_{\mathrm{old}}}\left[\clip (r(\theta)\widehat{A}(s,a),-\alpha,\alpha)
-
\frac{1}{t_k}
\KL(\pi,\pi_\mathrm{old}) \right].
\]
After removing the $\KL$ term, it comes to our ACPO objective \ref{eq: ACPO}.

\section{Proof of Theorem \ref{thm: stongly convex} and \ref{thm: nonconvex} } \label{sec: convergence rate}
This section gives a convergence analysis for policy
mirror descent (PMD) with gradient clipping. Under relative strong convexity this objective-preserving
update converges linearly.  In the unregularized case, the same update reduces
to advantage clipping and enjoys a last-iterate $O(1/T)$ guarantee, including
for every fixed constant stepsize. 
\subsection{Notation}\label{sec:notation}
Let $\psi$ be a differentiable and strongly convex mirror map on the relative
interior of $\simplex$.  We use $D_\psi(x,y)$ to denote the Bregman divergence of $\psi$ between $x$ and $y$:
\begin{equation}\label{eq:bregman-generic}
 D_\psi(x,y):=\psi(x)-\psi(y)
 -\ip{\nabla\psi(y)}{x-y}.
\end{equation}
In particular, for
policies $\pi$ and $\pi'$, we use the following notation:
\begin{equation}\label{eq:bregman}
 D_{\pi'}^\pi(s):=
 D_\psi\bigl(\pi(\cdot\mid s),\pi'(\cdot\mid s)\bigr).
\end{equation}
We also write
$(\partial h)^{\pi'}(s,\cdot)$ for a subgradient of
$\pi\mapsto h^\pi(s)$ at $\pi'$.

Then we consider a $\gamma$-discounted MDP $\mathcal{M}=(\mathcal{S},\mathcal{A},P,r,\gamma,\rho)$, where $\mathcal{S}$ and $\mathcal{A}$ are the state and action spaces, $P(\cdot\mid s,a)$ is the transition kernel, $c(s,a)$ denotes the cost function (or $r(s,a)$ the reward function when maximizing reward), $\gamma\in[0,1)$ is the discount factor, and $\rho$ is the initial-state distribution. A stochastic policy $\pi$ specifies an action distribution $\pi(\cdot\mid s)$ at each state, and induces trajectories according to $s_0\sim\rho$, $a_t\sim\pi(\cdot\mid s_t)$, and $s_{t+1}\sim P(\cdot\mid s_t,a_t)$. 
Additionally, we consider a state-separable policy regularizer
$ h^\pi(s)$
, and we say that $h$ is
$\mu$-strongly convex relative to policy, if for every $\pi,\pi'$,
\begin{equation}\label{eq:relative-sc}
 h^\pi(s)\ge h^{\pi'}(s)
 +\ip{(\partial h)^{\pi'}(s,\cdot)}
 {\pi(\cdot\mid s)-\pi'(\cdot\mid s)}
 +\mu D_{\pi'}^\pi(s),
 \qquad \mu\ge0.
\end{equation}

The state-value and action-value functions of $\pi$ are defined as 
\begin{align} \label{eq:value}
 V^{\pi}(s)=\mathbb{E}_{\pi}[\sum_{t=0}^{\infty}\gamma^t (c(s_t,a_t)+h^\pi(s_t))\mid s_0=s],
 \end{align}
 and 
 \begin{align}\label{eq:q}
 Q^{\pi}(s,a)=\mathbb{E}_{\pi}[\sum_{t=0}^{\infty}\gamma^t (c(s_t,a_t)+h^\pi(s_t))\mid s_0=s,a_0=a]=c(s,a)+h^\pi(s)+\gamma\sum_{s'}P(s'\mid s,a)V^\pi(s').
 \end{align} 
 Then, advantage function is given by the difference of $Q^\pi(s,a)$ and $V^\pi(s)$:
 \begin{align} \label{advantage}
  A^{\pi}(s,a)=Q^{\pi}(s,a)-V^{\pi}(s).
 \end{align}

Consider an arbitrary norm $\norm{\cdot}$ on $\mathbb R^{|\Aset|}$ and a clipping
threshold $\alpha>0$.  At iteration $k$, set
\begin{equation}\label{eq:clipping}
 \lambda_k(s):=
 \frac{\alpha}{\max\{\alpha,\norm{A^{\pi_k}(s,\cdot)}\}}
\end{equation}
The stepsize $\eta_k>0$ is a single scalar independent of states.  The
objective-preserving clipped PMD update is
\begin{equation}\label{eq:update}
 \pi_{k+1}(\cdot\mid s)
 \in\arg\min_{p\in\simplex}
 \left\{
 \eta_k \lambda_k(s)\bigl[\ip{A^{\pi_k}(s,\cdot)}{p}+h_s(p)\bigr]
 +D_{\pi_k(\cdot\mid s)}^p(s)
 \right\}.
\end{equation}
Let $\pi^*$ be an optimal stationary policy, so
$V^{\pi^*}(s)\le V^\pi(s)$ for all $s$ and all stationary $\pi$.  Let $\nu^*$
be the distribution of the optimal policy, and define
\begin{align}
 e_k(s)&:=V^{\pi_k}(s)-V^{\pi^*}(s),
 &F_k&:=\E_{s\sim\nu^*}[e_k(s)],\label{eq:error}\\
 D_k(s)&:=D_{\pi_k(\cdot\mid s)}^{\pi^*(\cdot\mid s)}(s),
 &E_k(s)&:=D_{\pi_k(\cdot\mid s)}^{\pi_{k+1}(\cdot\mid s)}(s).
 \label{eq:divergences}
\end{align}
In particular, $e_k(s)\ge0$ and $F_k\ge0$.  For the KL divergence we take the
uniform initialization $\pi_0(a\mid s)=1/|\Aset|$, which gives
\begin{equation}\label{eq:initial-kl}
 D_0(s)=\KL\bigl(\pi^*(\cdot\mid s)\|\pi_0(\cdot\mid s)\bigr)
 \le\log|\Aset|.
\end{equation}

\subsection{Useful lemmas}\label{sec:lemmas}

For policies $\pi$ and $\pi'$, define the local improvement quantity
\begin{equation}\label{eq:g-general}
 g_{\pi,\pi'}(s):=
 \ip{A^\pi(s,\cdot)}{\pi'(\cdot\mid s)-\pi(\cdot\mid s)}
 +h^{\pi'}(s)-h^\pi(s).
\end{equation}
For the iterates, abbreviate $g_k:=g_{\pi_k,\pi_{k+1}}$ and we also define the comparison to the optimal policy
\begin{equation}\label{eq:b}
 b_k(s):=
 \ip{A^{\pi_k}(s,\cdot)}{\pi_k(\cdot\mid s)-\pi^*(\cdot\mid s)}
 +h^{\pi_k}(s)-h^{\pi^*}(s).
\end{equation}

\begin{lemma}[Performance difference]\label{lem:performance}
For any feasible policies $\pi$ and $\pi'$,
\begin{equation}\label{eq:performance-vector}
 V^{\pi'}-V^\pi=(I-\gamma P^{\pi'})^{-1}g_{\pi,\pi'}.
\end{equation}
Equivalently, we have
\begin{equation}\label{eq:performance-occupancy}
 V^{\pi'}(s)-V^\pi(s)
 =\frac{1}{1-\gamma}\E_{s'\sim d_s^{\pi'}}
   [g_{\pi,\pi'}(s')],
\end{equation}
where \begin{equation}\label{eq:occupancy}
 d_s^{\pi'}:=(1-\gamma)\sum_{t=0}^{\infty}\gamma^t
 e_s^\top(P^{\pi'})^t.
\end{equation}
\end{lemma}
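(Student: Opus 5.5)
The plan is the standard performance-difference argument for regularized MDPs, in the form used by \cite{pmdlan}. Define the vector $g_{\pi,\pi'}$ as in (\ref{eq:g-general}) and let $P^{\pi'}(s,s')=\sum_a\pi'(a\mid s)P(s'\mid s,a)$. The goal is to show that $V^{\pi'}-V^{\pi}-\gamma P^{\pi'}(V^{\pi'}-V^{\pi})=g_{\pi,\pi'}$ componentwise. Since $\gamma<1$ and $P^{\pi'}$ is row-stochastic, $I-\gamma P^{\pi'}$ is invertible with Neumann series $\sum_{t\ge0}\gamma^t(P^{\pi'})^t$, so the first identity (\ref{eq:performance-vector}) will follow immediately.

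The key computation uses the Bellman equation for $V^{\pi'}$, namely $V^{\pi'}(s)=\sum_a\pi'(a\mid s)\bigl[c(s,a)+\gamma\sum_{s'}P(s'\mid s,a)V^{\pi'}(s')\bigr]+h^{\pi'}(s)$. This is the point where some care is needed with the regularizer. In (\ref{eq:q}) the term $h^\pi(s)$ sits inside $Q^\pi$, so $V^\pi(s)=\sum_a\pi(a\mid s)Q^\pi(s,a)$ and the advantage has zero mean under $\pi$. I will take $Q^\pi(s,a)=c(s,a)+h^\pi(s)+\gamma\sum_{s'}P(s'\mid s,a)V^\pi(s')$ exactly as written. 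Since $\sum_a\pi(a\mid s)=1$, the identity $\sum_a\pi(a\mid s)Q^\pi(s,a)=V^\pi(s)$ holds.

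Next, evaluate $\langle A^\pi(s,\cdot),\pi'(\cdot\mid s)\rangle$. Because $\pi'$ sums to one, this equals
$$\textstyle\sum_a\pi'(a\mid s)c(s,a)+h^\pi(s)+\gamma\sum_{s'}P^{\pi'}(s,s')V^\pi(s')-V^\pi(s),$$
while $\langle A^\pi(s,\cdot),\pi(\cdot\mid s)\rangle=0$. Adding $h^{\pi'}(s)-h^\pi(s)$ gives
$$g_{\pi,\pi'}(s)=\textstyle\sum_a\pi'(a\mid s)c(s,a)+h^{\pi'}(s)+\gamma(P^{\pi'}V^\pi)(s)-V^\pi(s).$$
Now substitute the Bellman equation for $V^{\pi'}$, i.e.\ $\sum_a\pi'c+h^{\pi'}=V^{\pi'}-\gamma P^{\pi'}V^{\pi'}$. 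This yields $g_{\pi,\pi'}=(I-\gamma P^{\pi'})(V^{\pi'}-V^\pi)$, and inverting gives (\ref{eq:performance-vector}). Boundedness of $c$ and $h$ ensures all the series converge and the vectors are in $\ell_\infty$, so these manipulations are legitimate; for infinite $\mathcal S$ I will phrase the inversion as a Neumann series on bounded functions.

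Finally, read off the $s$-th row: $V^{\pi'}(s)-V^\pi(s)=\sum_{t\ge0}\gamma^t e_s^\top(P^{\pi'})^t g_{\pi,\pi'}$. By the definition (\ref{eq:occupancy}), this is $\frac{1}{1-\gamma}\mathbb E_{s'\sim d_s^{\pi'}}[g_{\pi,\pi'}(s')]$, since $d_s^{\pi'}$ is a probability distribution (its mass is $(1-\gamma)\sum_t\gamma^t=1$). This proves (\ref{eq:performance-occupancy}).

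The main obstacle is purely bookkeeping: the convention of putting $h^\pi(s)$ inside $Q^\pi$ (and hence inside $A^\pi$) must be tracked so that the $h^\pi$ terms cancel correctly and leave exactly $h^{\pi'}-h^\pi$. Interchanging sums is routine under the boundedness assumption.
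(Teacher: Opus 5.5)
Your proposal is correct and follows the paper's proof essentially step for step. You use $\langle A^\pi(s,\cdot),\pi(\cdot\mid s)\rangle=0$ to rewrite $g_{\pi,\pi'}(s)$ as $\langle c(s,\cdot),\pi'(\cdot\mid s)\rangle+h^{\pi'}(s)+\gamma(P^{\pi'}V^\pi)(s)-V^\pi(s)$, substitute the Bellman equation for $V^{\pi'}$ to get $(I-\gamma P^{\pi'})(V^{\pi'}-V^\pi)=g_{\pi,\pi'}$, invert via the Neumann series, and read off the $s$-th row against $d_s^{\pi'}$; your tracking of $h^\pi$ inside $Q^\pi$ is exactly the bookkeeping the paper relies on, and the bounded-function remark for infinite $\mathcal S$ is a harmless extra.
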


\begin{proof}
  Without loss of generality, we assume that the policies are stationary.  
Fix a state $s$,  averaging the definition of $Q^\pi$ under
$\pi'(\cdot\mid s)$ and subtracting $V^\pi(s)$ gives
\begin{align*}
 \ip{A^\pi(s,\cdot)}{\pi'(\cdot\mid s)}
   +h^{\pi'}(s)-h^\pi(s)=\langle c(s,\cdot),\pi'(\cdot\mid s)\rangle+h^{\pi'}(s)
   +\gamma(P^{\pi'}V^\pi)(s)-V^\pi(s).
\end{align*}
Because $\ip{A^\pi(s,\cdot)}{\pi(\cdot\mid s)}=0$, the left-hand side is
exactly $g_{\pi,\pi'}(s)$.  The Bellman equation for $\pi'$ therefore implies
\begin{align*}
  V^\pi(s)
=
\sum_{a}\pi(a\mid s)
\left[
c(s,a)+h^\pi(s)
+\gamma\sum_{s'}P(s'\mid s,a)V^\pi(s')
\right],
\end{align*}
which is equivalent to
\begin{align*}
  V^{\pi'}(s)
=
\langle c(s,\cdot),\pi'(\cdot\mid s)\rangle+h^{\pi'}(s)
+\gamma(P^{\pi'}V^{\pi'})(s).
\end{align*}
Combining above results gives
\begin{align*}
 V^{\pi'}(s)-V^\pi(s)
 &=g_{\pi,\pi'}(s)
   +\gamma\bigl(P^{\pi'}(V^{\pi'}-V^\pi)\bigr)(s).
\end{align*}
Then we stack this identity over all states yields
\begin{equation*}
 (I-\gamma P^{\pi'})(V^{\pi'}-V^\pi)=g_{\pi,\pi'},
\end{equation*}
where since $P^{\pi'}$ is stochastic and $\gamma<1$, the Neumann series converges:
\begin{equation*}
 (I-\gamma P^{\pi'})^{-1}
 =\sum_{t=0}^{\infty}\gamma^t(P^{\pi'})^t.
\end{equation*}
This proves \eqref{eq:performance-vector}.  Left-multiplying by $e_s^\top$,
inserting the factor $(1-\gamma)/(1-\gamma)$, and using
\eqref{eq:occupancy} gives
\begin{align*}
 V^{\pi'}(s)-V^\pi(s)
 =\sum_{t=0}^{\infty}\gamma^t
   e_s^\top(P^{\pi'})^t g_{\pi,\pi'}=\frac{1}{1-\gamma}
   \E_{s'\sim d_s^{\pi'}}[g_{\pi,\pi'}(s')],
\end{align*}
giving us the desired result.
\end{proof}

\begin{lemma}[Optimal-policy identity]\label{lem:optimal-identity}
The vector $b_k$ satisfies
\begin{equation}\label{eq:b-identity}
 b_k=(I-\gamma P^{\pi^*})e_k,
\end{equation}
and consequently
\begin{equation}\label{eq:b-expectation}
 \E_{\nu^*}[b_k(s)]=(1-\gamma)F_k.
\end{equation}
\end{lemma}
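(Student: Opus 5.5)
The plan is to apply the performance difference lemma (Lemma~\ref{lem:performance}) with the roles chosen so that the optimal policy generates the trajectories. Take $\pi=\pi_k$ and $\pi'=\pi^*$ in \eqref{eq:performance-vector}. This gives
\[
V^{\pi^*}-V^{\pi_k}=(I-\gamma P^{\pi^*})^{-1}g_{\pi_k,\pi^*}.
\]
From \eqref{eq:g-general}, $g_{\pi_k,\pi^*}(s)=\ip{A^{\pi_k}(s,\cdot)}{\pi^*(\cdot\mid s)-\pi_k(\cdot\mid s)}+h^{\pi^*}(s)-h^{\pi_k}(s)$. Comparing with \eqref{eq:b}, this is exactly $-b_k(s)$.

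Since $e_k=V^{\pi_k}-V^{\pi^*}$, the display above reads $-e_k=-(I-\gamma P^{\pi^*})^{-1}b_k$. The matrix $I-\gamma P^{\pi^*}$ is invertible by the Neumann series argument already used in the proof of Lemma~\ref{lem:performance}. Multiplying both sides by it yields \eqref{eq:b-identity}.

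For \eqref{eq:b-expectation}, take the expectation of \eqref{eq:b-identity} under $\nu^*$, viewed as a row vector:
\[
\E_{\nu^*}[b_k]=\nu^{*\top}e_k-\gamma\,\nu^{*\top}P^{\pi^*}e_k .
\]
Because $\nu^*$ is the stationary state distribution of $\pi^*$, we have $\nu^{*\top}P^{\pi^*}=\nu^{*\top}$. The right-hand side therefore collapses to $(1-\gamma)\nu^{*\top}e_k=(1-\gamma)F_k$, using the definition \eqref{eq:error}.

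The only delicate point is interpretive rather than technical. The phrase "distribution of the optimal policy" must be read as a stationary distribution of the Markov chain $P^{\pi^*}$, that is, $\nu^{*\top}P^{\pi^*}=\nu^{*\top}$, as in \cite{pmdlan}. I will state this explicitly. I will also check the sign convention: with costs, $\pi^*$ minimizes $V$, so $e_k\ge0$, which is consistent with $b_k$ being the clipped-PMD "gap" quantity. Everything else is direct substitution, and no boundedness assumption beyond finiteness of $V$ is needed.
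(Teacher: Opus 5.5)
Your argument is correct, but it reaches \eqref{eq:b-identity} by a different route than the paper. The paper does not invoke Lemma~\ref{lem:performance} here. It expands $b_k(s)$ directly: it uses $\langle A^{\pi_k}(s,\cdot),\pi_k(\cdot\mid s)\rangle=0$ to write $b_k(s)=V^{\pi_k}(s)-\sum_a\pi^*(a\mid s)Q^{\pi_k}(s,a)+h^{\pi_k}(s)-h^{\pi^*}(s)$, substitutes the one-step form of $Q^{\pi_k}$ (the $h^{\pi_k}(s)$ terms cancel), and then subtracts the Bellman equation for $V^{\pi^*}$. You instead observe that $b_k=-g_{\pi_k,\pi^*}$ and read the identity off the performance difference lemma with $(\pi,\pi')=(\pi_k,\pi^*)$.

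Underneath, these are the same Bellman computation: the paper's proof is essentially the proof of Lemma~\ref{lem:performance} re-run with $\pi'=\pi^*$. Your version is shorter and makes clear that the lemma is just performance difference with $\pi^*$ as the ``new'' policy, and that optimality of $\pi^*$ plays no role in \eqref{eq:b-identity}. The paper's version, by contrast, is self-contained. You can also skip the inversion step, since the proof of Lemma~\ref{lem:performance} already establishes $(I-\gamma P^{\pi'})(V^{\pi'}-V^{\pi})=g_{\pi,\pi'}$ before applying the Neumann series.

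The averaging step for \eqref{eq:b-expectation} is identical to the paper's. Your explicit reading of $\nu^*$ as satisfying $(\nu^*)^\top P^{\pi^*}=(\nu^*)^\top$ is exactly the stationarity assumption the paper uses.
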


\begin{proof}
For each state $s$, expand the inner product in \eqref{eq:b}:
\begin{align*}
 b_k(s)
 &=V^{\pi_k}(s)
   -\sum_a\pi^*(a\mid s)Q^{\pi_k}(s,a)
   +h^{\pi_k}(s)-h^{\pi^*}(s)\\
 &=V^{\pi_k}(s)-\langle c(s,\cdot),\pi^*(\cdot\mid s)\rangle-h^{\pi^*}(s)
   -\gamma(P^{\pi^*}V^{\pi_k})(s).
\end{align*}
The Bellman equation
$V^{\pi^*}(s)=\langle c(s,\cdot),\pi^*(\cdot\mid s)\rangle+h^{\pi^*}(s)+\gamma(P^{\pi^*}V^{\pi^*})(s)$ then gives
\begin{align*}
 b_k(s)
 &=V^{\pi_k}(s)-V^{\pi^*}(s)
   -\gamma\bigl(P^{\pi^*}(V^{\pi_k}-V^{\pi^*})\bigr)(s)\\
 &=\bigl((I-\gamma P^{\pi^*})e_k\bigr)(s),
\end{align*}
which proves \eqref{eq:b-identity}.  Finally,
\begin{align*}
 \E_{\nu^*}[b_k]
 &=(\nu^*)^\top(I-\gamma P^{\pi^*})e_k\\
 &=(1-\gamma)(\nu^*)^\top e_k=(1-\gamma)F_k,
\end{align*}
where stationarity gives
$(\nu^*)^\top P^{\pi^*}=(\nu^*)^\top$. 
\end{proof}

\begin{lemma}\label{lem:three-point}
For every $s\in\Sset$ and every feasible $p\in\simplex$, update
\eqref{eq:update} satisfies
\begin{align}
 &\eta_k\lambda_k(s)\Bigl[
 \ip{A^{\pi_k}(s,\cdot)}{\pi_{k+1}(\cdot\mid s)-p}
 +h^{\pi_{k+1}}(s)-h_s(p)\Bigr]+E_k(s)\le
 D_{\pi_k(\cdot\mid s)}^p(s)
 -\bigl(1+\mu\eta_k\lambda_k(s)\bigr)
 D_{\pi_{k+1}(\cdot\mid s)}^p(s).
 \label{eq:three-point}
\end{align}
\end{lemma}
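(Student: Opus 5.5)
This is the standard three-point lemma for mirror descent with a composite, relatively strongly convex term. The only new feature is the state-dependent scalar $\eta_k\lambda_k(s)$ in front of the linear-plus-regularizer part. Since $\lambda_k(s)$ depends only on $\pi_k$ and $s$, it is a fixed positive constant inside the subproblem at state $s$. So fix $s$ and write $\tau:=\eta_k\lambda_k(s)>0$, $x:=\pi_k(\cdot\mid s)$, $x^+:=\pi_{k+1}(\cdot\mid s)$. Then \eqref{eq:update} reads
\[
x^+\in\argmin_{p\in\simplex}\bigl\{\phi(p)+D_\psi(p,x)\bigr\},\qquad \phi(p):=\tau\bigl[\ip{A^{\pi_k}(s,\cdot)}{p}+h_s(p)\bigr].
\]
The function $\phi$ is convex, and by \eqref{eq:relative-sc} it is $\tau\mu$-strongly convex relative to $\psi$.

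\textbf{Step 1 (optimality condition).} I will use the first-order condition at $x^+$. For the KL case, $x^+$ lies in the relative interior of the simplex because $\nabla\psi$ blows up at the boundary. Hence there is a subgradient $\xi\in\partial h_s(x^+)$ such that for all $p\in\simplex$
\[
\ip{\tau A^{\pi_k}(s,\cdot)+\tau\xi+\nabla\psi(x^+)-\nabla\psi(x)}{p-x^+}\ge 0 .
\]
In the general case I would instead invoke the standard optimality condition for a convex objective plus the simplex indicator, with a normal-cone term. Under a mild qualification, such as relative interior of domains, this gives the same inequality.

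\textbf{Step 2 (three-point identity).} For Bregman divergences,
\[
\ip{\nabla\psi(x^+)-\nabla\psi(x)}{p-x^+}=D_\psi(p,x)-D_\psi(p,x^+)-D_\psi(x^+,x).
\]
Substituting this into Step 1 gives
\[
\tau\ip{A^{\pi_k}(s,\cdot)}{x^+-p}+\tau\ip{\xi}{x^+-p}+D_\psi(x^+,x)\le D_\psi(p,x)-D_\psi(p,x^+).
\]

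\textbf{Step 3 (use strong convexity of $h$).} Apply \eqref{eq:relative-sc} with $\pi'=x^+$ and $\pi=p$:
\[
h_s(p)\ge h_s(x^+)+\ip{\xi}{p-x^+}+\mu D_\psi(p,x^+),
\]
which rearranges to $\tau\ip{\xi}{x^+-p}\ge \tau\bigl[h_s(x^+)-h_s(p)\bigr]+\tau\mu D_\psi(p,x^+)$. Inserting this into the Step 2 inequality and moving $\tau\mu D_\psi(p,x^+)$ to the right-hand side gives exactly \eqref{eq:three-point}. Here $E_k(s)=D_\psi(x^+,x)$, $h^{\pi_{k+1}}(s)=h_s(x^+)$, and the coefficient is $1+\mu\eta_k\lambda_k(s)$.

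\textbf{Main obstacle.} The only delicate point is Step 1. Two things must be checked there. First, the subgradient $\xi$ used in the optimality condition must be one for which the relative strong convexity inequality \eqref{eq:relative-sc} holds; it is stated for "a subgradient", so I would read it as holding for every subgradient. Second, the minimizer must be interior, or a normal-cone term must be handled. With the negative-entropy mirror map this is immediate: when $h$ is itself KL-type, the subproblem has a closed-form minimizer with full support. The clipping factor causes no difficulty, because it is only a positive rescaling of the step size at each state.
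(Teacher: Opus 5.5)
Your proposal is correct and follows essentially the same route as the paper. Both arguments treat $\eta_k\lambda_k(s)$ as a fixed positive constant at each state and combine the first-order optimality condition at $\pi_{k+1}(\cdot\mid s)$, the relative strong convexity bound on $\ip{(\partial h)^{\pi_{k+1}}(s,\cdot)}{p-\pi_{k+1}(\cdot\mid s)}$, and the three-point Bregman identity. The differences are minor: you apply the identity before the strong-convexity bound rather than after, and you are more careful than the paper about the minimizer being interior and about the subgradient in the optimality condition satisfying \eqref{eq:relative-sc}.
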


\begin{proof}
Fix $s$ and take $x=\pi_k(\cdot\mid s)$,
$y=\pi_{k+1}(\cdot\mid s)$ and
$A=A^{\pi_k}(s,\cdot)$.  The objective minimized at this state is
\begin{equation*}
 p\longmapsto
 \eta_k\lambda_k(s)\bigl[\ip{A}{p}+h_s(p)\bigr]+D_x^p(s).
\end{equation*}
Since the feasible set $\simplex$ is convex, the first-order optimality
condition at $y$, with
$(\partial h)^{\pi_{k+1}}(s,\cdot)\in\partial h_s(y)$, gives
\begin{equation*}\label{eq:foc}
 \ip{\eta_k\lambda_k(s)
 \bigl(A+(\partial h)^{\pi_{k+1}}(s,\cdot)\bigr)
 +\nabla\psi(y)-\nabla\psi(x)}{p-y}\ge0.
\end{equation*}
Rearranging \eqref{eq:foc} yields
\begin{align*}
 &\eta_k\lambda_k(s)\ip{A}{y-p} \le
 \eta_k\lambda_k(s)
 \ip{(\partial h)^{\pi_{k+1}}(s,\cdot)}{p-y}
 +\ip{\nabla\psi(y)-\nabla\psi(x)}{p-y}.
\end{align*}
Since the regularizer $h^p$ is $\mu$-strongly convex w.r.t. policy, \eqref{eq:relative-sc} gives
\begin{equation*}
 \ip{(\partial h)^{\pi_{k+1}}(s,\cdot)}{p-y}
 \le h_s(p)-h_s(y)-\mu D_y^p(s).
\end{equation*}
Substituting this bound into the preceding inequality and moving the
regularizer difference to the left gives
\begin{align*}
 &\eta_k\lambda_k(s)
 \bigl[\ip{A}{y-p}+h_s(y)-h_s(p)\bigr]\le
 \ip{\nabla\psi(y)-\nabla\psi(x)}{p-y}
 -\mu\eta_k\lambda_k(s)D_y^p(s).
\end{align*}
Finally, direct expansion of the three Bregman divergences gives the
three-point identity
\begin{equation*}
 \ip{\nabla\psi(y)-\nabla\psi(x)}{p-y}
 =D_x^p(s)-D_x^y(s)-D_y^p(s)
\end{equation*}
and $D_x^y(s)=E_k(s)$.  Combining these identities proves \eqref{eq:three-point}.
\end{proof}

\begin{lemma}[Monotonic improvement]\label{lem:monotonicity}
For every state,
\begin{equation}\label{eq:g-bound}
 g_k(s)\le-\frac{E_k(s)}{\eta_k\lambda_k(s)}
 -\left(\frac{1}{\eta_k\lambda_k(s)}+\mu\right)
 D_{\pi_{k+1}}^{\pi_k}(s)
 \le0,
\end{equation}
and
\begin{equation}\label{eq:value-monotone}
 V^{\pi_{k+1}}(s)-V^{\pi_k}(s)\le g_k(s)\le0,
\end{equation}
leading to the conclusion that $V^{\pi_k}(s)$ and $F_k$ are nonincreasing in $k$.
\end{lemma}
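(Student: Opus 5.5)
The plan is to apply Lemma~\ref{lem:three-point} with the choice $p=\pi_k(\cdot\mid s)$, and then pass from the local quantity $g_k$ to the value difference through Lemma~\ref{lem:performance}.

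\textbf{Step 1: the bound on $g_k$.} Substituting $p=\pi_k(\cdot\mid s)$ into \eqref{eq:three-point} gives three simplifications.
\begin{itemize}
\item $D^{p}_{\pi_k}(s)=0$, since the divergence of a point from itself vanishes.
\item The inner-product bracket on the left becomes
\[
\ip{A^{\pi_k}(s,\cdot)}{\pi_{k+1}(\cdot\mid s)-\pi_k(\cdot\mid s)}+h^{\pi_{k+1}}(s)-h^{\pi_k}(s),
\]
which is exactly $g_k(s)$ by \eqref{eq:g-general}.
\item The right-hand side reduces to $-\bigl(1+\mu\eta_k\lambda_k(s)\bigr)D^{\pi_k}_{\pi_{k+1}}(s)$.
\end{itemize}
Hence
\[
\eta_k\lambda_k(s)\,g_k(s)+E_k(s)\le-\bigl(1+\mu\eta_k\lambda_k(s)\bigr)D^{\pi_k}_{\pi_{k+1}}(s).
\]
Because $\eta_k>0$ and $\lambda_k(s)\in(0,1]$ (recall $\alpha>0$), we may divide by $\eta_k\lambda_k(s)$. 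This yields the first inequality in \eqref{eq:g-bound}. Nonpositivity then follows from $E_k(s)\ge0$, $D\ge0$ and $\mu\ge0$.

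\textbf{Step 2: the value difference.} Apply Lemma~\ref{lem:performance} with $\pi=\pi_k$ and $\pi'=\pi_{k+1}$:
\[
V^{\pi_{k+1}}-V^{\pi_k}=(I-\gamma P^{\pi_{k+1}})^{-1}g_k=\sum_{t\ge0}\gamma^t(P^{\pi_{k+1}})^t g_k .
\]
Each $(P^{\pi_{k+1}})^t$ is a nonnegative stochastic matrix, and $g_k\le0$ entrywise by Step~1. So every term with $t\ge1$ is entrywise nonpositive and can be dropped, leaving
\[
V^{\pi_{k+1}}(s)-V^{\pi_k}(s)\le g_k(s)\le0 .
\]
This is \eqref{eq:value-monotone}.

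\textbf{Step 3: monotonicity of $F_k$.} Since $V^{\pi_k}(s)$ is nonincreasing for every $s$, so is $e_k(s)=V^{\pi_k}(s)-V^{\pi^*}(s)$. Averaging over $\nu^*$ shows that $F_k$ is nonincreasing.

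\textbf{Main obstacle.} The only delicate point is the notation in Lemma~\ref{lem:three-point}, specifically the instantiation $h_s(p)=h^{\pi_k}(s)$ when $p=\pi_k(\cdot\mid s)$, together with $\lambda_k(s)>0$ so that the division is legitimate. Everything else is sign bookkeeping.
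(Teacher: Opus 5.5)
Your proposal is correct and follows the paper's proof: instantiate Lemma~\ref{lem:three-point} at $p=\pi_k(\cdot\mid s)$, divide by $\eta_k\lambda_k(s)>0$, and then combine Lemma~\ref{lem:performance} with $g_k\le 0$ entrywise. The only difference is cosmetic: you drop the $t\ge 1$ terms of the Neumann series directly, whereas the paper drops the $s'\neq s$ terms of the occupancy expectation and uses $d_s^{\pi_{k+1}}(s)\ge 1-\gamma$; both rest on the same positivity argument.
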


\begin{proof}
Take $p=\pi_k(\cdot\mid s)$ in Equation~\ref{eq:three-point} and divide by
$\eta_k\lambda_k(s)>0$.  The bracket on the left becomes
\begin{align*}
 \ip{A^{\pi_k}(s,\cdot)}
 {\pi_{k+1}(\cdot\mid s)-\pi_k(\cdot\mid s)}
 +h^{\pi_{k+1}}(s)-h^{\pi_k}(s)=g_k(s),
\end{align*}
whereas the two divergences on the right are
$D_{\pi_k}^{\pi_k}(s)=0$ and
$D_{\pi_{k+1}}^{\pi_k}(s) $.  This gives the first inequality in
\eqref{eq:g-bound}.  Both remaining terms are nonpositive because Bregman
divergences are nonnegative, so $g_k(s)\le0$.

Apply Lemma~\ref{lem:performance} with
$(\pi,\pi')=(\pi_k,\pi_{k+1})$:
\begin{equation*}
 V^{\pi_{k+1}}(s)-V^{\pi_k}(s)
 =\frac{1}{1-\gamma}\sum_{s'}d_s^{\pi_{k+1}}(s')g_k(s').
\end{equation*}
The $t=0$ term in \eqref{eq:occupancy} shows that
$d_s^{\pi_{k+1}}(s)\ge1-\gamma$.  Since every $g_k(s')\le0$, discarding all
terms with $s'\ne s$ can only increase the sum.  Hence
\begin{align*}
 V^{\pi_{k+1}}(s)-V^{\pi_k}(s)
 \le\frac{d_s^{\pi_{k+1}}(s)}{1-\gamma}g_k(s)\leq g_k(s),
\end{align*}
where the second inequality uses both
$d_s^{\pi_{k+1}}(s)/(1-\gamma)\ge1$ and $g_k(s)\le0$.
This proves \eqref{eq:value-monotone}.  Subtracting the fixed value
$V^{\pi^*}$ and averaging under $\nu^*$ also shows that $F_k$ is
nonincreasing.
\end{proof}

\begin{lemma}[Fundamental recursion]\label{lem:fundamental}
The iterates of \eqref{eq:update} satisfy the following recursion for every $k$:
\begin{align}
 F_{k+1}
 +\E_{\nu^*}\!\left[
 \left(\frac{1}{\eta_k\lambda_k(s)}+\mu\right)D_{k+1}(s)\right]
 +\E_{\nu^*}\!\left[\frac{E_k(s)}{\eta_k\lambda_k(s)}\right] \le \gamma F_k+\E_{\nu^*}\!\left[
 \frac{D_k(s)}{\eta_k\lambda_k(s)}\right].
 \label{eq:fundamental}
\end{align}
\end{lemma}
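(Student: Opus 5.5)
The plan is to apply the three-point inequality of Lemma~\ref{lem:three-point} with the comparison point $p=\pi^*(\cdot\mid s)$, divide by $\eta_k\lambda_k(s)$, and average the result under $\nu^*$. Lemma~\ref{lem:optimal-identity} then converts the linear term involving $\pi^*$ into $(1-\gamma)F_k$, and the monotonicity and performance-difference facts handle the shift from $\pi_k$ to $\pi_{k+1}$.

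\textbf{Step 1: split the bracket.} With $p=\pi^*$, the bracket on the left of \eqref{eq:three-point} is
\[
\ip{A^{\pi_k}(s,\cdot)}{\pi_{k+1}(\cdot\mid s)-\pi^*(\cdot\mid s)}+h^{\pi_{k+1}}(s)-h^{\pi^*}(s).
\]
I add and subtract $\pi_k$ inside the inner product and $h^{\pi_k}(s)$ in the regularizer terms. This shows the bracket equals exactly $g_k(s)+b_k(s)$, with $g_k$ from \eqref{eq:g-general} and $b_k$ from \eqref{eq:b}. Dividing by $\eta_k\lambda_k(s)>0$ then gives, for every $s$,
\[
g_k(s)+b_k(s)+\frac{E_k(s)}{\eta_k\lambda_k(s)}\le \frac{D_k(s)}{\eta_k\lambda_k(s)}-\Bigl(\frac{1}{\eta_k\lambda_k(s)}+\mu\Bigr)D_{k+1}(s).
\]

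\textbf{Step 2: bound $g_k(s)$ from below.} Lemma~\ref{lem:monotonicity} gives $V^{\pi_{k+1}}(s)-V^{\pi_k}(s)\le g_k(s)$. Hence $g_k(s)\ge e_{k+1}(s)-e_k(s)$, and I replace $g_k(s)$ by $e_{k+1}(s)-e_k(s)$ on the left side. The inequality still holds because the left side only decreases.

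\textbf{Step 3: take expectations.} I take $\E_{s\sim\nu^*}$ of both sides. Lemma~\ref{lem:optimal-identity} gives $\E_{\nu^*}[b_k]=(1-\gamma)F_k$, so the left side contains
\[
F_{k+1}-F_k+(1-\gamma)F_k=F_{k+1}-\gamma F_k.
\]
Moving $\gamma F_k$ and the $D_{k+1}$ term across yields \eqref{eq:fundamental} exactly.

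\textbf{Main obstacle.} The only delicate point is the direction of the inequality in Step 2. It needs the pointwise bound $V^{\pi_{k+1}}-V^{\pi_k}\le g_k$, not an averaged version. That pointwise bound in turn relies on $g_k\le 0$ at every state, which Lemma~\ref{lem:monotonicity} provides. The state-dependent factor $\lambda_k(s)$ causes no difficulty: the division happens statewise before averaging, and the step size $\eta_k$ is a single scalar that does not depend on the state.

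Finiteness of all quantities follows from bounded costs and regularizer. A possibly infinite $D_k$ is harmless, since the KL divergence to $\pi^*$ is finite whenever $\pi_k$ has full support, which the entropic update guarantees.
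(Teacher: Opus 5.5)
Your proposal is correct and follows the paper's proof essentially step for step. You set $p=\pi^*(\cdot\mid s)$ in Lemma~\ref{lem:three-point}, split the bracket as $g_k(s)+b_k(s)$, divide by $\eta_k\lambda_k(s)$, replace $g_k(s)$ statewise by the smaller quantity $V^{\pi_{k+1}}(s)-V^{\pi_k}(s)$ via Lemma~\ref{lem:monotonicity}, and then average under $\nu^*$ using $\E_{\nu^*}[b_k]=(1-\gamma)F_k$ from Lemma~\ref{lem:optimal-identity}.
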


\begin{proof}
Set $p=\pi^*(\cdot\mid s)$ in Equation~\ref{eq:three-point} and divide by
$\eta_k\lambda_k(s)$.  To identify the bracket, add and subtract
$\pi_k(\cdot\mid s)$ and $h^{\pi_k}(s)$:
\begin{align*}
 \ip{A^{\pi_k}}{\pi_{k+1}-\pi^*}
   +h^{\pi_{k+1}}-h^{\pi^*}&=\bigl(\ip{A^{\pi_k}}{\pi_{k+1}-\pi_k}
   +h^{\pi_{k+1}}-h^{\pi_k}\bigr)+\bigl(\ip{A^{\pi_k}}{\pi_k-\pi^*}
   +h^{\pi_k}-h^{\pi^*}\bigr)\\
 &
 =g_k(s)+b_k(s),
\end{align*}
where all policy arguments are evaluated at  any fixed state $s$.  Therefore
\begin{equation}\label{eq:before-average}
 b_k(s)+g_k(s)+\frac{E_k(s)}{\eta_k\lambda_k(s)}
 \le \frac{D_k(s)}{\eta_k\lambda_k(s)}
 -\left(\frac{1}{\eta_k\lambda_k(s)}+\mu\right)D_{k+1}(s).
\end{equation}
By Lemma~\ref{lem:monotonicity}, $V^{\pi_{k+1}}-V^{\pi_k}\le g_k$
statewise.  Thus replacing $g_k(s)$ on the left of
\eqref{eq:before-average} by
$V^{\pi_{k+1}}(s)-V^{\pi_k}(s)$ preserves the inequality.  Averaging with
respect to $\nu^*$ gives
\begin{align*}
 &\E_{\nu^*}[b_k]+F_{k+1}-F_k
 +\E_{\nu^*}\!\left[\frac{E_k(s)}{\eta_k\lambda_k(s)}\right]\le
 \E_{\nu^*}\!\left[\frac{D_k(s)}{\eta_k\lambda_k(s)}\right]
 -\E_{\nu^*}\!\left[
 \left(\frac{1}{\eta_k\lambda_k(s)}+\mu\right)D_{k+1}(s)\right].
\end{align*}
Now use $
 \E_{\nu^*}[b_k]=(1-\gamma)F_k$ in \eqref{eq:b-expectation},
so the first two scalar terms on the left combine as
$(1-\gamma)F_k+F_{k+1}-F_k=F_{k+1}-\gamma F_k$.
Reranging $-\gamma F_k$ and the $D_{k+1}$ term  yields
\eqref{eq:fundamental}.
\end{proof}

\begin{lemma}[Uniform lower bound on clipping]\label{lem:kappa}
Assume the cost function $c(s,a)$ and regularizer $h_s(p)$ are bounded. Then there exists a constant $\ell$ such that following definitions are well-defined and finite:
\begin{align*}
 K_{\mathrm{norm}}&:=\max_{\norm{x}_\infty\le1}\norm{x},\notag\\
 M_0&:=\max_sV^{\pi_0}(s),\notag\\
 C_{\mathrm{osc}}&:=\max_s\left(\max_a c(s,a)-\min_a c(s,a)\right),\notag\\
 B_A&:=K_{\mathrm{norm}}
 \left[C_{\mathrm{osc}}+\gamma
 \left(M_0-\frac{\ell}{1-\gamma}\right)\right].
\end{align*}
Then $\norm{A^{\pi_k}(s,\cdot)}\le B_A$ for every $k,s$, and hence
\begin{equation}\label{eq:kappa}
 \lambda_k(s)\ge\kappa:=\min\{1,\alpha/B_A\}>0,
\end{equation}
with the convention $\kappa=1$ if $B_A=0$.
\end{lemma}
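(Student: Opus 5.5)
The plan is to take $\ell$ to be a uniform lower bound on the per-step cost, namely $\ell:=\inf_{s,a,p}\bigl(c(s,a)+h_s(p)\bigr)$, which is finite because $c$ and $h$ are bounded. I would then sandwich every $V^{\pi_k}$ between two $k$-independent constants and read off a bound on the advantage from the Bellman form of $Q$.

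\emph{Finiteness of the constants.} Since $\Aset$ is finite, all norms on $\mathbb R^{|\Aset|}$ are equivalent, so $K_{\mathrm{norm}}<\infty$. Boundedness of $c$ gives $C_{\mathrm{osc}}<\infty$. Boundedness of $c+h$ together with $\gamma<1$ gives $M_0<\infty$. Moreover, every policy $\pi$ satisfies $V^\pi(s)\ge \sum_t\gamma^t\ell=\ell/(1-\gamma)$. In particular $M_0-\ell/(1-\gamma)\ge0$, so $B_A\ge0$ and $\kappa$ is well defined.

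\emph{Uniform value sandwich.} By Lemma~\ref{lem:monotonicity}, $V^{\pi_k}(s)$ is nonincreasing in $k$ for each $s$. Hence
\[
\frac{\ell}{1-\gamma}\le V^{\pi_k}(s)\le V^{\pi_0}(s)\le M_0
\]
for all $k$ and $s$. This is the only place where the algorithm enters, and it is the key step: without monotonicity one would need a separate a priori upper bound on $V^{\pi_k}$. That would also work from boundedness of $c+h$, but with a worse constant than the stated $M_0$.

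\emph{Advantage bound.} Using \eqref{eq:q} and $V^{\pi_k}(s)=\sum_b\pi_k(b\mid s)Q^{\pi_k}(s,b)$, the state-only term $h^{\pi_k}(s)$ cancels, and
\[
A^{\pi_k}(s,a)=\Bigl[c(s,a)-\textstyle\sum_b\pi_k(b\mid s)c(s,b)\Bigr]+\gamma\Bigl[\textstyle\sum_{s'}P(s'\mid s,a)V^{\pi_k}(s')-\sum_b\pi_k(b\mid s)\sum_{s'}P(s'\mid s,b)V^{\pi_k}(s')\Bigr].
\]
The first bracket is a value of $c(s,\cdot)$ minus a convex combination of such values, so its absolute value is at most $C_{\mathrm{osc}}$. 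The second bracket is a difference of two averages of $V^{\pi_k}$, which lies in $[\ell/(1-\gamma),M_0]$, so its absolute value is at most $M_0-\ell/(1-\gamma)$. Therefore
\[
\norm{A^{\pi_k}(s,\cdot)}_\infty\le C_{\mathrm{osc}}+\gamma\bigl(M_0-\ell/(1-\gamma)\bigr).
\]
By the definition of $K_{\mathrm{norm}}$ and homogeneity, $\norm{x}\le K_{\mathrm{norm}}\norm{x}_\infty$, which gives $\norm{A^{\pi_k}(s,\cdot)}\le B_A$.

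\emph{Conclusion.} If $B_A\le\alpha$, then $\max\{\alpha,\norm{A^{\pi_k}(s,\cdot)}\}=\alpha$ and $\lambda_k(s)=1$. Otherwise $\lambda_k(s)\ge\alpha/B_A$. In both cases $\lambda_k(s)\ge\min\{1,\alpha/B_A\}=\kappa$. When $B_A=0$ the first case applies, which matches the convention $\kappa=1$.
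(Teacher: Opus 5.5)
Your proof is correct and follows essentially the same route as the paper. You use the same uniform lower bound $\ell$ on $c+h$, the same sandwich $\ell/(1-\gamma)\le V^{\pi_k}(s)\le M_0$ obtained from monotonicity, the same bound of cost oscillation plus $\gamma$ times the value range, and the same conversion $\norm{x}\le K_{\mathrm{norm}}\norm{x}_\infty$. The one cosmetic difference is that you expand $A^{\pi_k}(s,a)$ directly, whereas the paper first bounds $\max_a Q^{\pi_k}(s,a)-\min_a Q^{\pi_k}(s,a)$ and then observes that each advantage coordinate is a $Q$-value minus a convex combination of $Q$-values.
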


\begin{proof}
For any policy $\pi$,
there exists a finite $\ell$ such that
\begin{equation}\label{eq:lower-stage}
 c(s,a)+h_s(p)\ge\ell
 \quad\text{for all $s,a$ and all $p\in\operatorname{dom}h_s$}.
\end{equation}
  Hence
\begin{equation*}
 V^\pi(s)\ge\sum_{t=0}^{\infty}\gamma^t\ell
 =\frac{\ell}{1-\gamma}.
\end{equation*}
Optimality of $\pi^*$ and Lemma~\ref{lem:monotonicity} therefore give, for every
$k$ and $s$,
\begin{equation*}
 \frac{\ell}{1-\gamma}\le V^{\pi^*}(s)\le V^{\pi_k}(s)\le M_0.
\end{equation*}
For any two actions $a,b$, the definition of $Q$ yields
\begin{align*}
 &\left|Q^{\pi_k}(s,a)-
 Q^{\pi_k}(s,b)\right|\le |c(s,a)-c(s,b)|
 +\gamma\left|
 \sum_{s'}\bigl(P(s'\mid s,a)-P(s'\mid s,b)\bigr)V^{\pi_k}(s')
 \right|.
\end{align*}
The two transition expectations both lie in the interval
$[\ell/(1-\gamma),M_0]$.  Their difference is therefore at most the length of
this interval, and consequently
\begin{equation*}
 \max_a Q^{\pi_k}(s,a)
 -\min_a Q^{\pi_k}(s,a)
 \le C_{\mathrm{osc}}+\gamma
 \left(M_0-\frac{\ell}{1-\gamma}\right).
\end{equation*}
Each advantage coordinate is obtained by
subtracting from $Q^{\pi_k}(s,a)$ a convex combination of the
coordinates of $Q^{\pi_k}(s,\cdot)$, because
$$A^\pi(s,a)=Q^\pi(s,a)
 -\sum_b\pi(b\mid s)Q^\pi(s,b).$$ Hence we have
\begin{align*}
 \norm{A^{\pi_k}(s,\cdot)}
 &\le K_{\mathrm{norm}}
 \norm{A^{\pi_k}(s,\cdot)}_\infty\le B_A.
\end{align*}
If $B_A>0$, the definition of \eqref{eq:clipping} gives
\begin{equation*}
 \lambda_k(s)
 \ge\min\!\left\{1,\frac{\alpha}{B_A}\right\}=\kappa.
\end{equation*}
If $B_A=0$, every advantage vector is zero, so we have
$\lambda_k(s)=1=\kappa$.
\end{proof}

The following lemma is needed only for a fixed stepsize in the unregularized
case.

\begin{lemma}[Finite relative variation of clipping weights]\label{lem:variation}
Suppose $h_s\equiv0$ and $\eta_k\equiv\eta>0$.  Let
\begin{equation}\label{eq:r-theta}
 r_k(s):=\frac1{\lambda_k(s)}
 =\max\!\left\{1,\frac{\norm{A^{\pi_k}(s,\cdot)}}{\alpha}\right\},
 \qquad
 \theta_k:=\max\!\left\{1,\max_s\frac{r_{k+1}(s)}{r_k(s)}\right\}.
\end{equation}
Then
\begin{equation}\label{eq:variation}
 \sum_{k=0}^{\infty}(\theta_k-1)
 \le T_0:=\frac{K_{\mathrm{norm}}}{\alpha}\sum_s e_0(s),
 \qquad
 R_N:=\prod_{k=0}^{N-1}\theta_k\le e^{T_0}.
\end{equation}
\end{lemma}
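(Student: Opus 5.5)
We will bound each factor $\theta_k-1$ by the one-step decrease of the value function, and then telescope the sum using the monotone improvement of Lemma~\ref{lem:monotonicity}. Set $\delta_k(s):=V^{\pi_k}(s)-V^{\pi_{k+1}}(s)$. By \eqref{eq:value-monotone}, $\delta_k(s)\ge0$ for every $s$ and $k$. Throughout, $\Sset$ is finite, as is already implicit in the definition of $T_0$.

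\textbf{Step 1: from $\theta_k$ to advantage increments.} Since $r_k(s)\ge1$, we have for each $s$
\[
\frac{r_{k+1}(s)}{r_k(s)}-1=\frac{r_{k+1}(s)-r_k(s)}{r_k(s)}\le \bigl(r_{k+1}(s)-r_k(s)\bigr)_+ .
\]
It follows that $\theta_k-1\le\max_s\bigl(r_{k+1}(s)-r_k(s)\bigr)_+$. The map $t\mapsto\max\{1,t/\alpha\}$ is nondecreasing and $1/\alpha$-Lipschitz. Combining this with the reverse triangle inequality and the definition of $K_{\mathrm{norm}}$ gives
\[
\bigl(r_{k+1}(s)-r_k(s)\bigr)_+\le\frac1\alpha\norm{A^{\pi_{k+1}}(s,\cdot)-A^{\pi_k}(s,\cdot)}\le\frac{K_{\mathrm{norm}}}{\alpha}\norm{A^{\pi_{k+1}}(s,\cdot)-A^{\pi_k}(s,\cdot)}_\infty .
\]

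\textbf{Step 2: advantage increments via $\delta_k$.} With $h\equiv0$, \eqref{eq:q} and \eqref{advantage} give
\[
A^{\pi}(s,a)=c(s,a)+\gamma\sum_{s'}P(s'\mid s,a)V^\pi(s')-V^\pi(s).
\]
The cost term cancels in the difference, so
\[
A^{\pi_{k+1}}(s,a)-A^{\pi_k}(s,a)=\delta_k(s)-\gamma\sum_{s'}P(s'\mid s,a)\delta_k(s').
\]
Both terms on the right lie in $[0,\max_{s'}\delta_k(s')]$ because $\delta_k\ge0$. Their difference therefore has absolute value at most $\max_{s'}\delta_k(s')\le\sum_{s'}\delta_k(s')$. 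This bound is uniform in $(s,a)$, so taking the maximum over $s$ yields
\[
\theta_k-1\le\frac{K_{\mathrm{norm}}}{\alpha}\sum_{s}\delta_k(s).
\]

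\textbf{Step 3: telescoping.} Summing over $k=0,\dots,N-1$ gives
\[
\sum_{k<N}\sum_s\delta_k(s)=\sum_s\bigl(V^{\pi_0}(s)-V^{\pi_N}(s)\bigr)\le\sum_s\bigl(V^{\pi_0}(s)-V^{\pi^*}(s)\bigr)=\sum_s e_0(s),
\]
where the inequality uses optimality of $\pi^*$. Letting $N\to\infty$ proves $\sum_{k}(\theta_k-1)\le T_0$. For the product, $\theta_k\ge1$ and $\log\theta_k\le\theta_k-1$ give $\log R_N\le\sum_{k<N}(\theta_k-1)\le T_0$, hence $R_N\le e^{T_0}$.

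The main obstacle is Step 2. The argument needs the increment of the advantage to be controlled by the decrease in value, and this works only because the values are monotone, so that $\delta_k\ge0$ and the two terms in the increment cannot add up. Bounding $\max_{s'}\delta_k(s')$ by the sum over states is crude, but it is exactly what produces the stated constant $T_0$.
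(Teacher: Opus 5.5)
Your proposal is correct and follows essentially the same route as the paper. Both arguments control $A^{\pi_{k+1}}(s,a)-A^{\pi_k}(s,a)=\delta_k(s)-\gamma\sum_{s'}P(s'\mid s,a)\delta_k(s')$ using monotonicity, apply the $1/\alpha$-Lipschitz bound on $r_k$ together with $\norm{\cdot}\le K_{\mathrm{norm}}\norm{\cdot}_\infty$, use the crude bound $\max_{s'}\delta_k(s')\le\sum_{s'}\delta_k(s')$, telescope against $V^{\pi^*}$, and finish with $\log x\le x-1$. Your positive-part bound $\bigl(r_{k+1}(s)-r_k(s)\bigr)_+$ is simply a compact form of the paper's case split on $r_{k+1}(s)>r_k(s)$.
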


\begin{proof}
Set $u_k(s):=V^{\pi_k}(s)-V^{\pi_{k+1}}(s)\ge0$ and
$U_k:=\norm{u_k}_\infty$.  When $h\equiv0$,
\eqref{eq:q} and \eqref{advantage} give
\begin{equation*}
 A^\pi(s,a)=c(s,a)+\gamma\sum_{s'}P(s'\mid s,a)V^\pi(s')-V^\pi(s).
\end{equation*}
Subtracting this identity at $\pi_k$ from the one at $\pi_{k+1}$ proves
\begin{equation*}\label{eq:A-difference}
 A^{\pi_{k+1}}(s,a)-A^{\pi_k}(s,a)
 =u_k(s)-\gamma\sum_{s'}P(s'\mid s,a)u_k(s').
\end{equation*}
Because $0\le u_k(s')\le U_k$, the second term on the right belongs to
$[-\gamma U_k,0]$ and the first to $[0,U_k]$.  Thus the entire right-hand
side lies in $[-\gamma U_k,U_k]$, and in particular
\begin{equation*}
 \max_s\norm{A^{\pi_{k+1}}(s,\cdot)-A^{\pi_k}(s,\cdot)}
 \le K_{\mathrm{norm}}U_k.
\end{equation*}
The triangle inequality gives
\begin{equation*}
 \left|\norm{A^{\pi_{k+1}}(s,\cdot)}
 -\norm{A^{\pi_k}(s,\cdot)}\right|
 \le\norm{A^{\pi_{k+1}}(s,\cdot)-A^{\pi_k}(s,\cdot)}.
\end{equation*}
Since $x\mapsto\max\{1,x/\alpha\}$ is $1/\alpha$-Lipschitz,
\begin{equation*}
 |r_{k+1}(s)-r_k(s)|
 \le\frac{K_{\mathrm{norm}}}{\alpha}U_k.
\end{equation*}
Moreover, when $r_{k+1}(s)>r_k(s)$,
\begin{equation*}
 \frac{r_{k+1}(s)}{r_k(s)}-1
 =\frac{r_{k+1}(s)-r_k(s)}{r_k(s)}
 \le |r_{k+1}(s)-r_k(s)|.
\end{equation*}
Taking the maximum over states yields
\begin{equation*}
 \theta_k-1
 \le\max_s|r_{k+1}(s)-r_k(s)|
 \le\frac{K_{\mathrm{norm}}}{\alpha}U_k.
\end{equation*}
For every $N$,
\begin{align*}
 \sum_{k=0}^{N-1}U_k
 &\le\sum_s\sum_{k=0}^{N-1}u_k(s)
 =\sum_s\bigl(V^{\pi_0}(s)-V^{\pi_N}(s)\bigr)
 \le\sum_s e_0(s).
\end{align*}
Here the first inequality uses $U_k\le\sum_su_k(s)$, the equality telescopes,
and the last inequality uses $V^{\pi_N}(s)\ge V^{\pi^*}(s)$.  Combining the
last two displays and letting $N\to\infty$ proves the first claim.  Finally,
$\log x\le x-1$ for $x>0$, so
\begin{equation*}
 \log R_N=\sum_{k=0}^{N-1}\log\theta_k
 \le\sum_{k=0}^{N-1}(\theta_k-1)\le T_0.
\end{equation*}
\end{proof}

\subsection{Convergence theorem and proof}\label{sec:theorem}

\begin{theorem}[Convergence of clipped PMD, cf. Theorem~\ref{thm: stongly convex} and \ref{thm: nonconvex}]
\label{thm:main}
Assume the setting of Section~\ref{sec:notation}, use the KL divergence and uniform
initialization, and let $\kappa$ be the clipping lower bound in
Lemma~\ref{lem:kappa}.

\begin{enumerate}
\item[\rm(i)] \textbf{Strongly convex regularizer, fixed stepsize.}
Suppose $\mu>0$ and $\eta_k\equiv\eta>0$.  Define
\begin{equation*}\label{eq:rho-fixed}
 \rho:=\max\!\left\{\gamma,\frac{1}{\kappa(1+\eta\mu)}\right\}.
\end{equation*}
If $\eta\mu>1/\kappa-1$, then $\rho<1$ and, for every $N\ge0$, we have
\begin{equation*}\label{eq:linear-fixed}
 F_N+(\mu+\frac{1}{\eta})\E_{\nu^*}[D_N(s)]
 \le\rho^N\left(F_0+(\mu+\frac{1}{\eta})\E_{\nu^*}[D_0(s)]\right)
 \le\rho^N\left(F_0+(\mu+\frac{1}{\eta})\log|\Aset|\right).
\end{equation*}
In particular, if
$
 \eta\mu\ge\frac{1}{\gamma\kappa}-1,
$
then $\rho=\gamma$.

\item[\rm(ii)] \textbf{Strongly convex regularizer,  adaptive
stepsize.}  Suppose $\mu>0$ and choose
\begin{equation}\label{eq:adaptive-eta}
 \eta_{k+1}:=\eta_k
 \max\!\left\{1,\max_{s\in\Sset}\frac{\lambda_k(s)}{\lambda_{k+1}(s)}\right\}.
\end{equation}
Let $\kappa_0:=\min_s\lambda_0(s)$ and
$
 \rho_0:=\max\!\left\{\gamma,
 \frac{1}{1+\mu\eta_0\kappa_0}\right\}<1$.
Then
\begin{equation*}\label{eq:linear-adaptive}
 F_N\le\rho_0^N
 \left[F_0+\left(\mu+\frac{1}{\eta_0\kappa_0}\right)
 \log|\Aset|\right].
\end{equation*}

\item[\rm(iii)] \textbf{No regularizer,  adaptive stepsize.}
Suppose $h_s\equiv0$, and choose stepsizes as in \eqref{eq:adaptive-eta}.
Then, for every $N\ge1$,
\begin{equation*}\label{eq:sublinear-adaptive}
 F_N\le
 \frac{\gamma F_0+\E_{\nu^*}\!\left[
 D_0(s)/(\eta_0\lambda_0(s))\right]}{(1-\gamma)N}
 \le
 \frac{\gamma F_0+\dfrac{\log|\Aset|}{\eta_0\kappa_0}}
 {(1-\gamma)N}.
\end{equation*}

\item[\rm(iv)] \textbf{No regularizer, fixed stepsize.}
Suppose $h_s\equiv0$ and $\eta_k\equiv\eta>0$.  With $R_N$ and $T_0$ from
Lemma~\ref{lem:variation},
\begin{align*}
 F_N
 &\le\frac{R_N
 \left[\gamma F_0+\E_{\nu^*}\!\left[
 D_0(s)/(\eta\lambda_0(s))\right]\right]}
 {(1-\gamma)N}\\
 &\le\frac{
 \exp\!\left(\dfrac{K_{\mathrm{norm}}}{\alpha}\sum_se_0(s)\right)
 \left[\gamma F_0+\dfrac{\log|\Aset|}{\eta\kappa_0}\right]}
 {(1-\gamma)N}.
\end{align*}
\end{enumerate}
\end{theorem}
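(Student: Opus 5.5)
All four parts follow from the fundamental recursion of Lemma~\ref{lem:fundamental}:
\[
F_{k+1}+\E_{\nu^*}\!\Bigl[\bigl(\tfrac{1}{\eta_k\lambda_k(s)}+\mu\bigr)D_{k+1}(s)\Bigr]\le\gamma F_k+\E_{\nu^*}\!\Bigl[\tfrac{D_k(s)}{\eta_k\lambda_k(s)}\Bigr],
\]
where we drop the nonnegative $E_k$ term. The parts differ only in how the state-dependent weights $1/(\eta_k\lambda_k(s))$ are compared across consecutive iterations. In every part the initial divergence is bounded by $D_0(s)\le\log|\Aset|$ from \eqref{eq:initial-kl}.

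\textbf{Part (i).} By Lemma~\ref{lem:kappa} we have $\kappa\le\lambda_k(s)\le 1$. So on the left, $\frac{1}{\eta\lambda_k}+\mu\ge\frac1\eta+\mu$, and on the right, $\frac{1}{\eta\lambda_k}\le\frac{1}{\eta\kappa}$. With $\Phi_k:=F_k+(\mu+\frac1\eta)\E_{\nu^*}[D_k]$ this gives
\[
\Phi_{k+1}\le\gamma F_k+\frac{1/(\eta\kappa)}{\mu+1/\eta}\,\bigl(\mu+\tfrac1\eta\bigr)\E_{\nu^*}[D_k]\le\rho\,\Phi_k,
\qquad \frac{1/(\eta\kappa)}{\mu+1/\eta}=\frac{1}{\kappa(1+\eta\mu)}.
\]
Iterating yields the linear rate. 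Moreover $\rho<1$ exactly when $\eta\mu>1/\kappa-1$, and $\rho=\gamma$ when $\eta\mu\ge 1/(\gamma\kappa)-1$.

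\textbf{Part (ii).} Set $w_k(s):=1/(\eta_k\lambda_k(s))$. The adaptive rule gives $\eta_{k+1}\lambda_{k+1}(s)\ge\eta_k\lambda_k(s)$ for every $s$, so $w_{k+1}\le w_k$ pointwise, with $w_k\le w_0\le 1/(\eta_0\kappa_0)$. Take the potential $\Psi_k:=F_k+\E_{\nu^*}[(w_k+\mu)D_k]$ and bound the right side by
\[
\E_{\nu^*}[w_kD_k]\le\max_s\frac{w_k}{w_k+\mu}\,\E_{\nu^*}[(w_k+\mu)D_k].
\]
Since $w/(w+\mu)$ is increasing in $w$ and $w_k\le 1/(\eta_0\kappa_0)$, this factor is at most $1/(1+\mu\eta_0\kappa_0)$. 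The left side contains $(w_k+\mu)D_{k+1}\ge(w_{k+1}+\mu)D_{k+1}$, so $\Psi_{k+1}\le\rho_0\Psi_k$. Finally $F_N\le\Psi_N$ and $\Psi_0\le F_0+(\mu+\frac{1}{\eta_0\kappa_0})\log|\Aset|$.

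\textbf{Parts (iii) and (iv).} With $\mu=0$ the recursion reads
\[
F_{k+1}+\E[w_kD_{k+1}]\le\gamma F_k+\E[w_kD_k].
\]
\begin{itemize}
\item[(iii)] Since $w_{k+1}\le w_k$, this becomes $F_{k+1}-\gamma F_k+\E[w_{k+1}D_{k+1}]\le\E[w_kD_k]$. Summing over $k=0,\dots,N-1$ telescopes to
\[
\sum_{k=1}^{N}F_k-\gamma\sum_{k=0}^{N-1}F_k\le\E[w_0D_0],
\]
hence $(1-\gamma)\sum_{k=1}^N F_k\le\gamma F_0+\E[w_0D_0]$. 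Monotonicity of $F_k$ (Lemma~\ref{lem:monotonicity}) gives $NF_N\le\sum_{k=1}^NF_k$, which yields the bound.
\item[(iv)] With fixed $\eta$ the weights can increase, but only by $w_{k+1}\le\theta_kw_k$, where $\theta_k$ is as in Lemma~\ref{lem:variation}. Divide the $k$-th recursion by $R_k=\prod_{j<k}\theta_j$ and set $\tilde w_k:=w_k/R_k$. Then $\tilde w_{k+1}\le\tilde w_k$ and the same telescoping applies, with $F_k$ replaced by $F_k/R_k$ and the $\gamma F_k$ term by $\gamma F_k/R_k$. Since $F_k/R_{k+1}\le F_k/R_k$ and $R_k$ is nondecreasing, one gets $(1-\gamma)N F_N/R_N\le\gamma F_0+\E[w_0D_0]$. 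Lemma~\ref{lem:variation} then supplies $R_N\le e^{T_0}$.
\end{itemize}

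\textbf{Main obstacle.} The delicate step is this last bookkeeping in (iv): the factor $R_k$ has to be threaded through both the $F$ terms and the $D$ terms so that the telescoping still closes. I would first try the rescaling by $1/R_{k+1}$ instead, which makes the $D$ weights monotone. If the $\gamma F$ cross terms then fail to match, I would fall back on the cruder bound $w_k\le R_N w_0$ applied uniformly.
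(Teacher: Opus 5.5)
Your proposal is correct and follows the paper's proof essentially step for step: the same fundamental recursion with $E_k$ dropped, the same $\kappa\le\lambda_k\le 1$ sandwich in (i), the same monotone effective-stepsize potential in (ii), and the same telescoping plus monotonicity of $F_k$ in (iii). In (iv) the paper writes the step as $Y_{k+1}\le\theta_kY_k-(\theta_k-\gamma)F_{k+1}$ with $Y_k=\gamma F_k+\E_{\nu^*}[D_k/(\eta\lambda_k)]$ and divides by $R_{k+1}$; this is algebraically identical to your division of the $k$-th recursion by $R_k$ (the fact actually needed is $F_{k+1}/R_k\ge F_{k+1}/R_{k+1}$), so your bookkeeping closes and the fallback in your ``main obstacle'' paragraph is unnecessary.
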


\begin{proof}
\textbf{Part (i).}
For a fixed stepsize and $\lambda_k(s)\in[\kappa,1]$,
\begin{equation*}
 \frac1\eta\le\frac{1}{\eta\lambda_k(s)}\le\frac{1}{\eta\kappa}.
\end{equation*}
Drop the nonnegative $E_k$ term in \eqref{eq:fundamental} and apply these
bounds.  On the left we use the lower bound $1/\eta$, while on the right we
use the upper bound $1/(\eta\kappa)$, obtaining
\begin{equation*}\label{eq:fixed-one-step}
 F_{k+1}+\left(\mu+\frac1\eta\right)\E_{\nu^*}[D_{k+1}]
 \le\gamma F_k+\frac{1}{\eta\kappa}\E_{\nu^*}[D_k].
\end{equation*}
By the choice of $\rho$, we  have
\begin{equation*}
 \gamma\le\rho,
 \qquad
 \frac{1}{\eta\kappa}\le\rho\left(\mu+\frac1\eta\right).
\end{equation*}
Hence the potential
$\Psi_k:=F_k+(\mu+\frac{1}{\eta})\E_{\nu^*}[D_k]$ has following property:
\begin{align*}
 \Psi_{k+1}
 \le\gamma F_k+\frac{1}{\eta\kappa}\E_{\nu^*}[D_k]\le\rho F_k+\rho(\mu+\frac{1}{\eta})\E_{\nu^*}[D_k]
 =\rho\Psi_k.
\end{align*}
Induction gives $\Psi_N\le\rho^N\Psi_0$.  Since the uniform initialization
satisfies $D_0(s)\le\log|\Aset|$ for every state, this is precisely
desired inequality.  Furthermore,
\begin{equation*}
 \frac{1}{\kappa(1+\eta\mu)}<1
 \quad\Longleftrightarrow\quad
 \eta\mu>\frac1\kappa-1,
\end{equation*}
so the stated condition makes both entries in the maximum defining $\rho$
strictly smaller than one. 

\medskip
\noindent\textbf{Part (ii).}
The stepsizes scheduled in \eqref{eq:adaptive-eta} implies
\begin{equation}\label{eq:effective-monotone}
 \eta_{k+1}\lambda_{k+1}(s)\ge\eta_k\lambda_k(s)
 \quad\text{for all $s$},
 \qquad
 \frac{1}{\eta_{k+1}\lambda_{k+1}(s)}
 \le\frac{1}{\eta_k\lambda_k(s)}.
\end{equation}
Indeed, the maximum in \eqref{eq:adaptive-eta} is at least
$\lambda_k(s)/\lambda_{k+1}(s)$ for each fixed $s$; multiplication by
$\eta_k\lambda_{k+1}(s)$ proves the first inequality, and taking reciprocals
proves the second.
Define the time-varying potential
\begin{equation*}
 \Phi_k:=F_k+\E_{\nu^*}\!\left[
 \left(\frac{1}{\eta_k\lambda_k(s)}+\mu\right)D_k(s)\right].
\end{equation*}
Using \eqref{eq:effective-monotone} and then \eqref{eq:fundamental},
\begin{align*}
 \Phi_{k+1}
 &\le F_{k+1}+\E_{\nu^*}\!\left[
 \left(\frac{1}{\eta_k\lambda_k(s)}+\mu\right)D_{k+1}(s)\right] \leq \gamma F_k+\E_{\nu^*}\!\left[
 \frac{D_k(s)}{\eta_k\lambda_k(s)}\right].
\end{align*}
Moreover, $\eta_k\lambda_k(s)\ge\eta_0\lambda_0(s)
\ge\eta_0\kappa_0$, and therefore
\begin{equation*}
 \frac{1/(\eta_k\lambda_k(s))}
 {1/(\eta_k\lambda_k(s))+\mu}
 =\frac{1}{1+\mu\eta_k\lambda_k(s)}
 \le\frac{1}{1+\mu\eta_0\kappa_0}\le\rho_0.
\end{equation*}
Equivalently, for every state,
\begin{equation*}
 \frac{1}{\eta_k\lambda_k(s)}
 \le\rho_0\left(\frac{1}{\eta_k\lambda_k(s)}+\mu\right).
\end{equation*}
Together with $\gamma\le\rho_0$, this bounds the last line in the display for
$\Phi_{k+1}$ by $\rho_0\Phi_k$.  Hence
$\Phi_N\le\rho_0^N\Phi_0$.  Dropping the nonnegative divergence term from
$\Phi_N$ and using
\begin{align*}
 \Phi_0
 \le F_0+\left(\mu+\frac{1}{\eta_0\kappa_0}\right)
 \E_{\nu^*}[D_0(s)] \le F_0+\left(\mu+\frac{1}{\eta_0\kappa_0}\right)
 \log|\Aset|,
\end{align*}
we have the desired result.

\medskip
\noindent\textbf{Part (iii).}
With $\mu=0$, define
\begin{equation*}
 W_k:=\E_{\nu^*}\!\left[\frac{D_k(s)}{\eta_k\lambda_k(s)}\right],
 \qquad Y_k:=\gamma F_k+W_k.
\end{equation*}
Since \eqref{eq:effective-monotone} makes the reciprocal coefficients
nonincreasing, \eqref{eq:fundamental} gives
\begin{equation*}
 F_{k+1}+W_{k+1}\le\gamma F_k+W_k=Y_k.
\end{equation*}
Indeed, the $D_{k+1}$ coefficient defining $W_{k+1}$ is no larger than the
corresponding coefficient on the left of \eqref{eq:fundamental}; the omitted
$E_k$ term is nonnegative.
Equivalently,
\begin{equation}\label{eq:Y-telescope}
 Y_{k+1}+(1-\gamma)F_{k+1}\le Y_k.
\end{equation}
Summing \eqref{eq:Y-telescope} from $k=0$ to $N-1$ telescopes the $Y_k$ terms:
\begin{equation*}
 Y_N+(1-\gamma)\sum_{j=1}^NF_j\le Y_0.
\end{equation*}
Since $Y_N\ge0$ and $F_j\ge F_N$ for $1\le j\le N$ by
Lemma~\ref{lem:monotonicity},
\begin{equation*}
 (1-\gamma)NF_N
 \le(1-\gamma)\sum_{j=1}^NF_j
 \le Y_0.
\end{equation*}
This proves the first inequality in \eqref{eq:sublinear-adaptive}.  The second
follows from
\begin{align*}
 \E_{\nu^*}\!\left[\frac{D_0(s)}{\eta_0\lambda_0(s)}\right]
 &\le\frac{1}{\eta_0\kappa_0}\E_{\nu^*}[D_0(s)]
 \le\frac{\log|\Aset|}{\eta_0\kappa_0}.
\end{align*}

\medskip
\noindent\textbf{Part (iv).}
For a fixed stepsize, Lemma~\ref{lem:variation} gives
$1/(\eta\lambda_{k+1}(s))
\le\theta_k/(\eta\lambda_k(s))$, because
$r_{k+1}(s)\le\theta_kr_k(s)$.  From \eqref{eq:fundamental} with $\mu=0$,
\begin{equation*}
 \E_{\nu^*}\!\left[\frac{D_{k+1}(s)}{\eta\lambda_k(s)}\right]
 \le Y_k-F_{k+1}.
\end{equation*}
Consequently,
\begin{align}
 Y_{k+1}=\gamma F_{k+1}+W_{k+1}\notag\le\gamma F_{k+1}
 +\theta_k\E_{\nu^*}\!\left[
 \frac{D_{k+1}(s)}{\eta\lambda_k(s)}\right] \le\theta_kY_k-(\theta_k-\gamma)F_{k+1}.
 \label{eq:Y-weighted}
\end{align}
The first inequality uses the preceding comparison of reciprocal clipping
factors; the second substitutes the bound from \eqref{eq:fundamental}.
Set $R_0=1$ and $R_{k+1}=\theta_kR_k$.  Dividing
above inequality by $R_{k+1}$ gives
\begin{equation*}
 \frac{Y_{k+1}}{R_{k+1}}
 \le\frac{Y_k}{R_k}
 -\frac{(\theta_k-\gamma)F_{k+1}}{R_{k+1}}.
\end{equation*}
Summing from $k=0$ to $N-1$ and using
$\theta_k-\gamma\ge1-\gamma$ gives
\begin{equation*}
 (1-\gamma)\sum_{j=1}^N\frac{F_j}{R_j}
 \le Y_0-\frac{Y_N}{R_N}\le Y_0,
\end{equation*}
where the last inequality follows from $Y_N\ge0$.  Since $F_j\ge F_N$ by
monotonicity and $R_j\le R_N$ because each $\theta_k\ge1$,
\begin{equation*}
 \sum_{j=1}^N\frac{F_j}{R_j}\ge\frac{NF_N}{R_N}.
\end{equation*}
Combining the last two displays yields
\begin{equation*}
 F_N\le\frac{R_NY_0}{(1-\gamma)N}.
\end{equation*}  
Finally, using $R_N\le e^{T_0}$ in Lemma~\ref{lem:variation}, \eqref{eq:initial-kl}  and the definition of
$T_0$, we obtain

$$
 \E_{\nu^*}\!\left[\frac{D_0(s)}{\eta\lambda_0(s)}\right]
 \le\frac{\log|\Aset|}{\eta\kappa_0},
$$

\end{proof}

\begin{remark}[From $\nu^*$-weighted error to all-state error]
If $\nu_{\min}^*:=\min_s\nu^*(s)>0$, then every bound on $F_N$ immediately
implies
\begin{equation*}
 \norm{V^{\pi_N}-V^{\pi^*}}_\infty
 \le\frac{F_N}{\nu_{\min}^*}.
\end{equation*}
Without full support, one may instead choose any full-support distribution
$\rho$ and set
\begin{equation*}
 \widetilde\nu^\top:=(1-\gamma)\rho^\top(I-\gamma P^{\pi^*})^{-1}.
\end{equation*}
Then $\widetilde\nu(s)>0$ and we have 
\begin{equation*}
 \E_{\widetilde\nu}[b_k]=(1-\gamma)\E_\rho[e_k]
 \ge\delta\E_{\widetilde\nu}[e_k].
\end{equation*}
Now we define $\delta:=(1-\gamma)\min_s\frac{\rho(s)}{\widetilde\nu(s)}>0$,
then repeat the proof with $\widetilde\nu$. This replaces the contraction coefficient
$\gamma$ in the fundamental recursion Lemma~\ref{lem:fundamental} by $1-\delta<1$ and yields similar convergence results on all states.
\end{remark}

\section{Additional Experimental  Details} \label{Additional Experimental  Details}
\subsection{Training Details} \label{training details}
All experiments are conducted on a single node with parameter and optimizer offloading enabled. We use eight NVIDIA H100 GPUs for Qwen2.5-Math-7B and Qwen3-8B, four for Qwen3-4B, and two for Qwen3-1.7B. 

We post-train models with our algorithm using VERL framework \citep{Sheng2025verl}. And all the hyperparameters are presented in Table~\ref{tab:rl_hyperparameters}.

\begin{table}[ht] 
\centering
\begin{threeparttable}
\caption{RL training hyperparameters.
Qwen3 includes the 1.7B, 4B, and 8B models.
Ratio clipping thresholds apply to the baselines,
whereas $\alpha$ applies to ACPO variants.
All sampling parameters refer to training rollouts.}
\label{tab:rl_hyperparameters}
\small
\setlength{\tabcolsep}{4pt}
\begin{tabular}{lcccc}
\toprule
& \multicolumn{2}{c}{PPO-type}
& \multicolumn{2}{c}{GRPO-type} \\
\cmidrule(lr){2-3} \cmidrule(lr){4-5}
Hyperparameter
& Qwen2.5-Math-7B\tnote{a} & Qwen3
& Qwen2.5-Math-7B\tnote{a} & Qwen3 \\
\midrule
\multicolumn{5}{l}{\textit{Data and rollout}} \\
Training dataset          & DAPO & DAPO & DAPO & DAPO \\
Prompt batch size         & 512 & 512 & 512 & 256 \\
Rollouts per prompt       & 1 & 1 & 8 & 4 \\
Maximum prompt length     & 1024 & 1024 & 1024 & 1024 \\
Maximum response length   & 8196 & 8196 & 8196 & 8196 \\
Sampling temperature      & 1.0 & 1.0 & 1.0 & 1.0 \\
Top-$p$                   & 1.0 & 1.0 & 1.0 & 1.0 \\
Top-$k$                   & -- & -- & -- & -- \\
\midrule
\multicolumn{5}{l}{\textit{Policy optimization}} \\
Optimizer  & AdamW & AdamW & AdamW & AdamW \\
Actor learning rate
& $10^{-6}$ & $10^{-6}$\tnote{b}
& $10^{-6}$ & $10^{-6}$ \\
Learning-rate schedule    & Constant & Constant & Constant & Constant \\
Actor mini-batch size     & 128 & 128 & 128 & 64 \\
Gradient clipping norm    & 1.0 & 1.0 & 1.0 & 1.0 \\
Entropy coefficient       & 0 & 0 & 0 & 0 \\
KL penalty in reward      & No & No & No & No \\
KL loss coefficient       & 0 & 0 & 0 & 0 \\
Baseline $\epsilon_{\mathrm{low}}$
                          & 0.2 & 0.2 & 0.2 & 0.2 \\
Baseline $\epsilon_{\mathrm{high}}$
                          & 0.2 & 0.2 & 0.28 & 0.28 \\
ACPO threshold $\alpha$   & 3 & 3 & 3 & 2 \\
Critic learning rate      & $10^{-5}$ & $10^{-5}$ & -- & -- \\
Total training steps      & 300 & 300 & 200 & 100 \\
Total optimizer steps
                          & 1200 & 1200 & 800 & 400 \\
\bottomrule
\end{tabular}

\begin{tablenotes}[flushleft]
\footnotesize
\item[a] The maximum context length of
Qwen2.5-Math-7B is 4096 tokens.
\item[b] The actor learning rate is $5\times10^{-7}$
for Qwen3-1.7B in PPO-type experiments.

\end{tablenotes}
\end{threeparttable}
\end{table}

\subsection{Detailed Benchmark Results}
In this section, we report results on four benchmarks: \textbf{MATH500} \citep{hendrycks2021math500}, \textbf{Minerva Math} \citep{Lewkowycz2022Minerva}, \textbf{OlympiadBench} \citep{he2024olympiadbench}, and \textbf{AIME-like} \citep{xiong2025reinforceadaadaptivesamplingframework}. AIME-like comprises 230 problems from recent competitions: AIME24, AIME25, HMMT24, HMMT25, BRUMO25, AMC23, and CMIMC25. We estimate pass@1 accuracy by averaging correctness over 32 sampled responses per problem for Qwen2.5-7B-Math (Avg@32) and 16 for Qwen3 (Avg@16). All responses are generated with a temperature of 1.0, top-$p=1$, and a maximum generation length of 8,196 tokens. 

As shown in Figures~\ref{fig:2.5-7}, \ref{fig:3-1.7},\ref{fig:3-4} and \ref{fig:3-8}, ACPO consistently outperforms the PPO and GRPO baselines, particularly on challenging benchmarks such as \textbf{OlympiadBench} \citep{he2024olympiadbench} and \textbf{AIME-like} \citep{xiong2025reinforceadaadaptivesamplingframework}. Beyond improving performance, ACPO substantially accelerates training, with particularly pronounced gains for PPO-AC.

\begin{figure}[ht]
    \centering
    \includegraphics[width=0.99\linewidth]{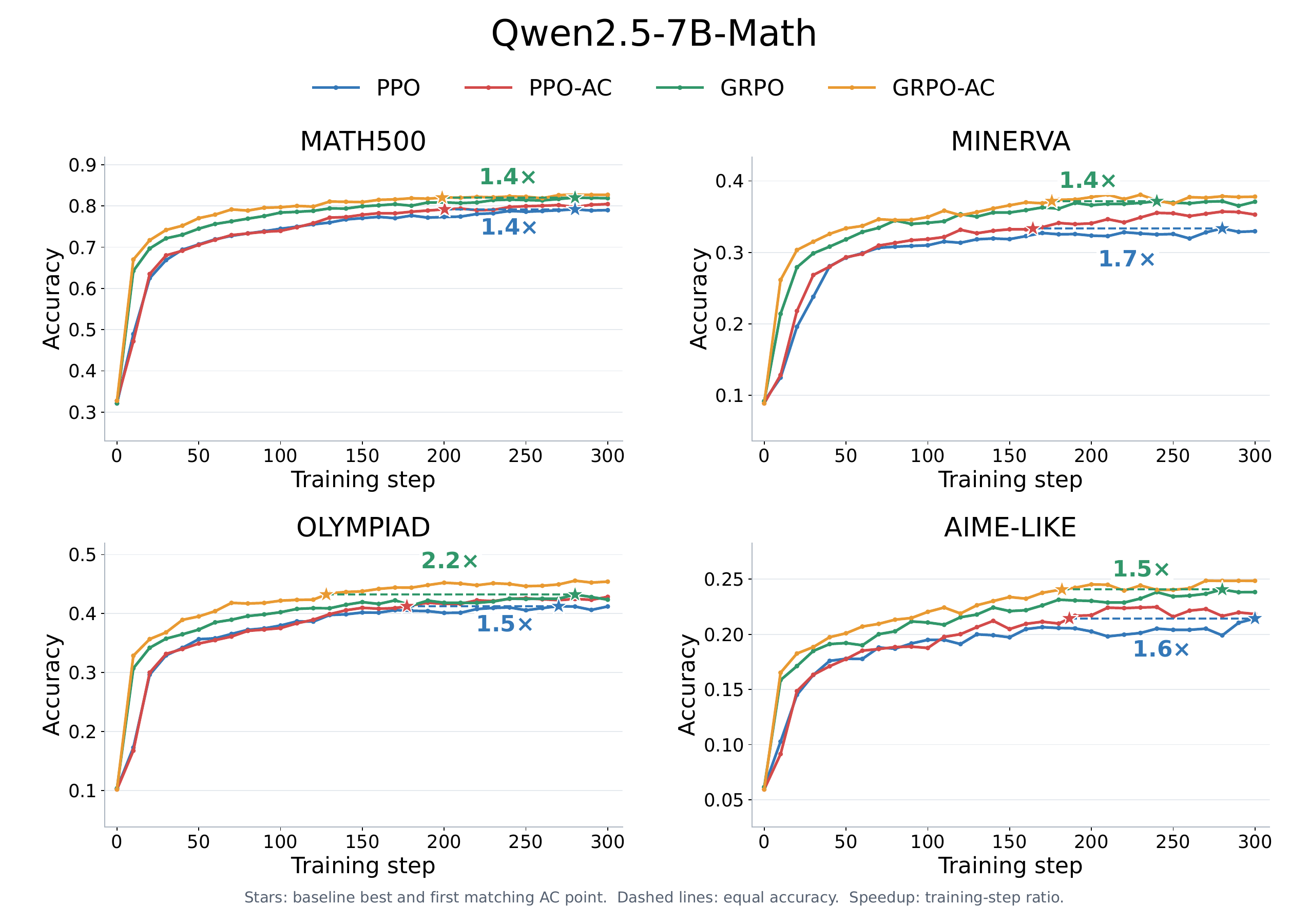}
    \caption{Qwen2.5-7B-Math on benchmarks}
    \label{fig:2.5-7}
\end{figure}

\begin{figure}[ht]
    \centering
    \includegraphics[width=0.99\linewidth]{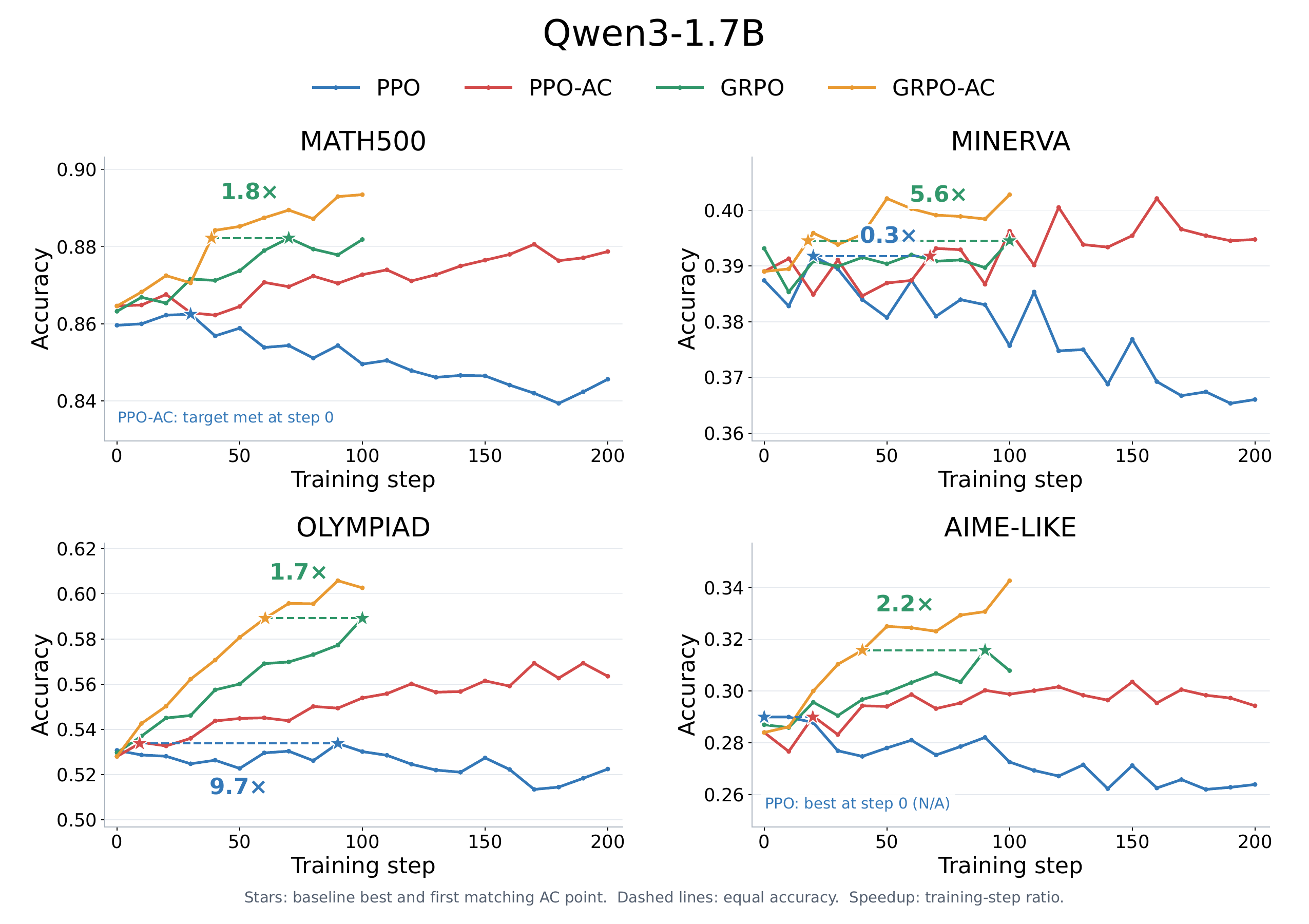}
    \caption{Qwen3-1.7B on benchmarks}
    \label{fig:3-1.7}
\end{figure}

\begin{figure}[ht]
    \centering
    \includegraphics[width=0.99\linewidth]{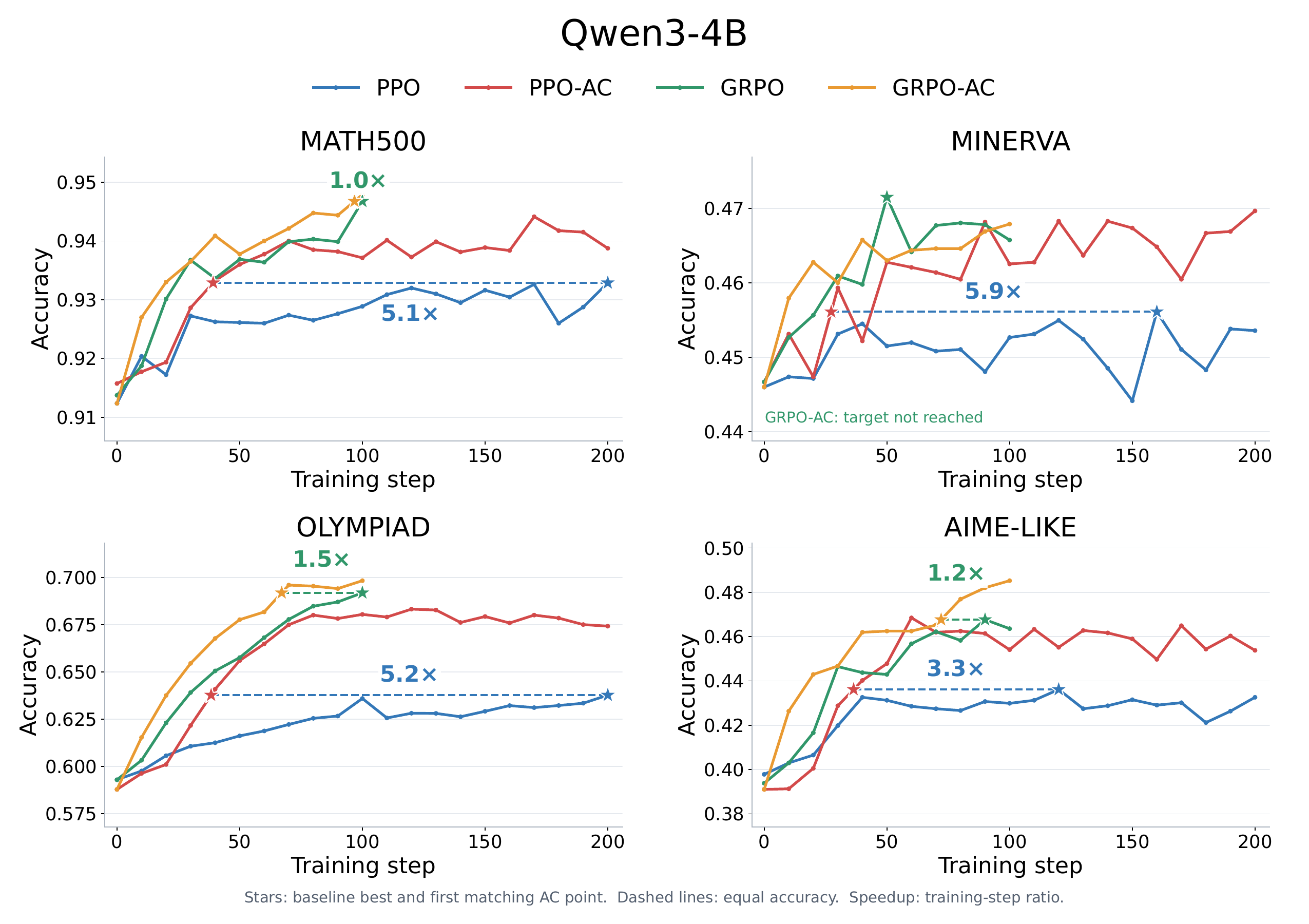}
    \caption{Qwen3-4B on  benchmarks}
    \label{fig:3-4}
\end{figure}

\begin{figure}[ht]
    \centering
    \includegraphics[width=0.99\linewidth]{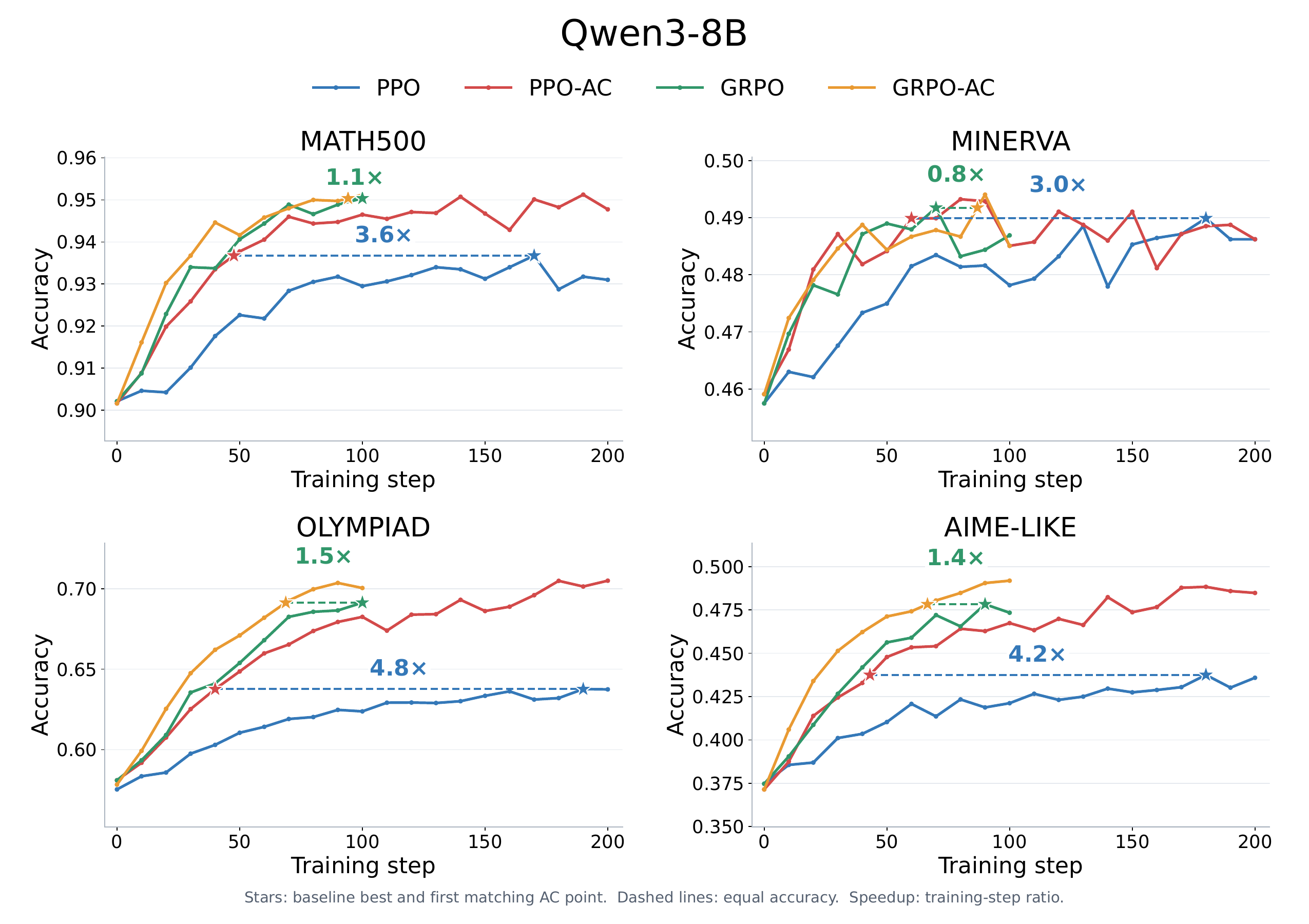}
    \caption{Qwen3-8B on benchmarks}
    \label{fig:3-8}
\end{figure}

\subsection{Additional Results for Qwen3-1.7B with PPO-Type Learning Rate $10^{-6}$}
Here we report the post-training result of Qwen3-1.7B with lr=1e-6 for PPO-type algorithms (Figure~\ref{fig:1e-6} and \ref{fig:1e-62}). On Qwen3-1.7B, with the learning rate set to $10^{-6}$ for both PPO and PPO-AC, PPO-AC achieves a peak weighted accuracy of 60.55\% at step 110, outperforming PPO’s best accuracy of 57.37\% by 3.18 percentage points within  200 steps training. Notably, PPO attains its best performance at initialization, whereas PPO-AC improves beyond this baseline, demonstrating more effective policy optimization in this setting. Hence we tune the learning rate such that PPO do not keep decrease during the training. We search for 5 learning rates, and finally choose lr=5e-7 for Qwen3-1.7B. Overall, the results on Qwen3-1.7B with a learning rate of $10^{-6}$ demonstrate the substantial performance gains achieved by ACPO over PPO.

\begin{figure}[ht]
    \centering
    \includegraphics[width=0.99\linewidth]{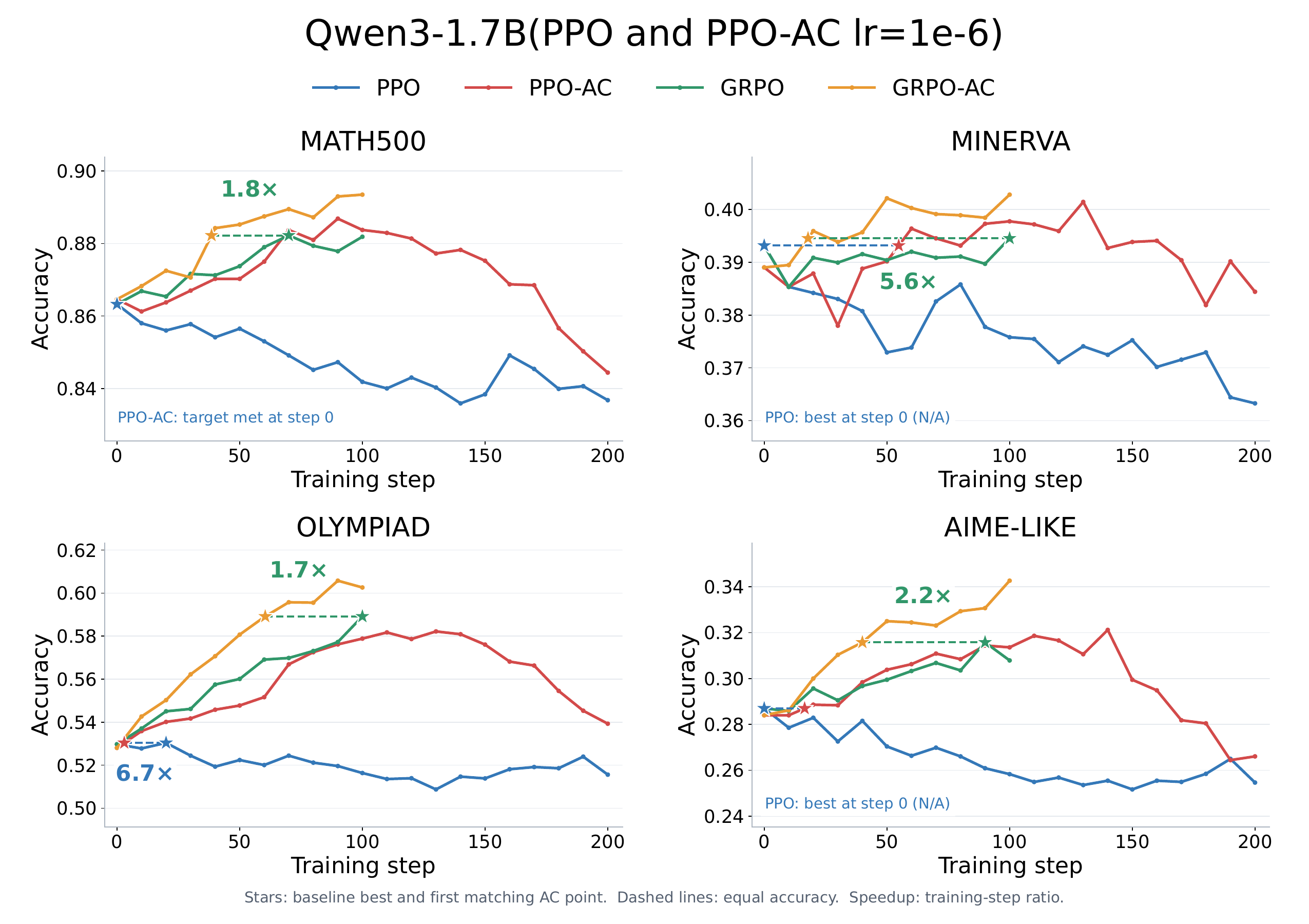}
    \caption{Qwen3-1.7B with PPO-Type lr=1e-6 on  benchmarks}
    \label{fig:1e-6}
\end{figure}

\begin{figure}[ht]
    \centering
    \includegraphics[width=0.99\linewidth]{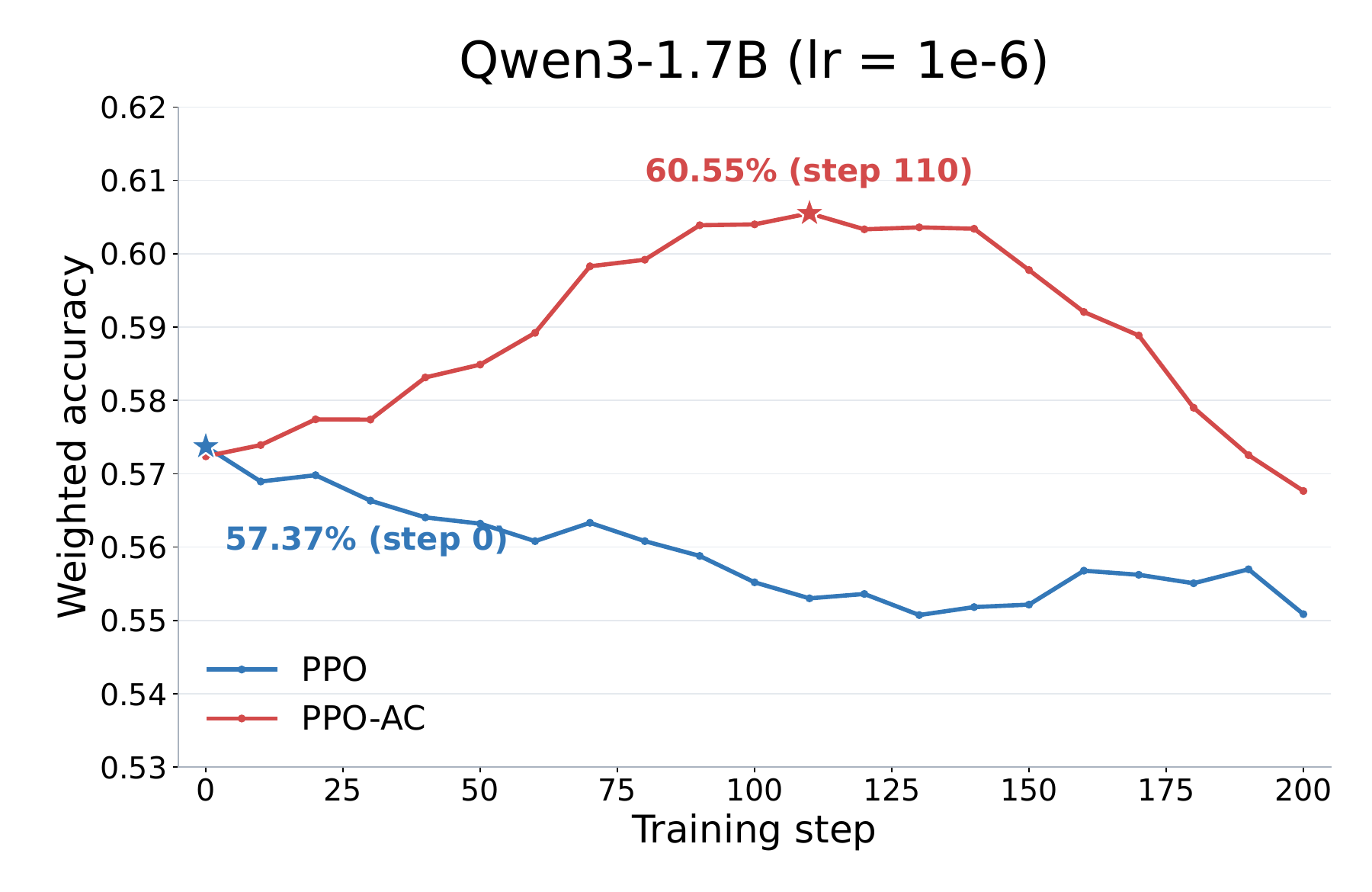}
    \caption{Weighted Accuracy of Qwen3-1.7B with PPO-Type lr=1e-6}
    \label{fig:1e-62}
\end{figure}

\end{document}